\theoremstyle{plain}
\newtheorem{theorem}{Theorem}[section]
\newtheorem{lemma}[theorem]{Lemma}
\theoremstyle{definition}
\newtheorem{fact}[theorem]{Fact}
\theoremstyle{remark}
\newtheorem{theoremi}{Theorem}[section]
\newtheorem{lemmai}[theorem]{Lemma}
\newtheorem{assumptioni}[theoremi]{Assumption}
\newtheorem{remarki}[theoremi]{Remark}
\newcommand{\ca}[1]{\mathcal{#1}}
\newcommand{\R}{\mathbb{R}}
\newcommand{\E}{\mathbb{E}}
\newcommand{\EE}[1]{\E\left[#1\right]}
\newcommand{\one}{\mathbf{1}}
\newcommand{\cF}{\mathcal{F}}
\newcommand{\cH}{\mathcal{H}}
\newcommand{\cI}{\mathcal{I}}
\newcommand{\cS}{\mathcal{S}}
\newcommand{\cR}{\mathcal{R}}
\newcommand{\cX}{\mathcal{X}}
\newcommand{\reg}{R}
\newcommand{\base}{\ensuremath{\tt Base}\xspace}
\newcommand{\exr}{\textsc{Exr}\xspace}
\newcommand{\ext}{\textsc{Ext}\xspace}
\newcommand{\Cinfo}{\textbf{C}_{\texttt{info}}}
\newcommand{\Cgood}{\textbf{C}_{\texttt{hit}}}
\newcommand{\Cbad}{\textbf{C}_{\texttt{miss}}}
\newcommand{\Mlogn}{\tilde{M}}
\newcommand{\mppr}{\lceil M\log N'\rceil}
\newcommand{\AdS}{\ensuremath{\tt AdSwitch}\xspace}
\newcommand{\BOG}{\texttt{OS-BASS}\xspace} %
\newcommand{\OptMoss}{\texttt{Opt-MOSS}\xspace}
\newcommand{\OGO}{${\tt OG^o}$\xspace}
\newcommand{\OG}{\texttt{OG}\xspace}
\newcommand{\UCB}{{\tt UCB}\xspace}
\newcommand{\MOSS}{{\tt MOSS}\xspace}
\newcommand{\pmml}{\texttt{E-BASS}\xspace}
\newcommand{\greedy}{\texttt{G-BASS}\xspace}
\newcommand{\bass}{\texttt{BASS}\xspace}
\newcommand{\Rcov}{R_{\text{coverage}}}
\newcommand{\A}{\mathcal{A}}
\newcommand{\gammaOne}{ \left(\frac{\Mlogn K\log K}{N'}\right)^{1/3}}
\newcommand{\gammaOnen}{\left(\frac{\Mlogn K\log K}{n}\right)^{1/3}}
\newcommand{\gammaOnenprime}{\left(\frac{\Mlogn K\log K}{n}\right)^{1/3}}
\newcommand{\Nthresh}{\frac{T^{3/5} (\log K)^{2/5}}{\Mlogn^{3/5}}}
\newcommand{\NoGapSmallt}{without \cref{assumption:identification} (no minimum gap and small task length)}
\newcommand{\setupOne}{(N,\tau,K,M)=(500,4500,30,10)\xspace}
\newcommand{\setupTwo}{(N,\tau,K,M)=(500,450,30,10)\xspace}
\DeclareMathOperator*{\argmin}{argmin}
\newcommand{\todoa}[2][]{\xspace\todo[size=\scriptsize,color=green!20!white,#1]{A: #2}\xspace}
\newcommand{\jtodo}[2][]{\xspace\todo[color=blue!5,size=\scriptsize,#1]{J: #2}\xspace}
\title{\bf
Non-stationary Bandits and Meta-Learning with a Small Set of Optimal Arms
}
\author{MohammadJavad Azizi \\ 
{\small University of Southern California}\\
{\small\tt azizim@usc.edu} 
\and
Thang Duong \\ 
{\small VinUniversity, VinAI Research}
\\{\small\tt v.thangdn3@vinai.io}
\and
Yasin Abbasi-Yadkori \\ 
{\small DeepMind}
\\{\small\tt yadkori@deepmind.com} 
\and
Andr\'{a}s Gy\"{o}rgy \\ 
{\small DeepMind}
\\{\small\tt agyorgy@deepmind.com} 
\and 
Claire Vernade \\ 
{\small DeepMind}
\\{\small\tt vernade@deepmind.com}
\and 
Mohammad Ghavamzadeh\\
{\small Google Research}
\\{\small\tt ghavamza@google.com}
}
\date{}
\begin{document}

\maketitle

\begin{abstract}
  We study a sequential decision problem where the learner faces a sequence of $K$-armed bandit tasks. %
  The task boundaries might be known (the bandit meta-learning setting), or unknown (the non-stationary bandit setting). For a given integer $M\le K$, the learner aims to compete with the best subset of arms of size $M$.  
  We design an algorithm based on a reduction to bandit submodular maximization, and show that, for $T$ rounds comprised of $N$ tasks, in the regime of large number of tasks and small number of optimal arms $M$, its regret in both settings is smaller than the simple baseline of $\tilde{O}(\sqrt{KNT})$ that can be obtained by using standard algorithms designed for non-stationary bandit problems. For the bandit meta-learning problem with fixed task length $\tau$, we show that the regret of the algorithm is bounded as $\tilde{O}(NM\sqrt{M \tau}+N^{2/3}M\tau)$. Under additional assumptions on the identifiability of the optimal arms in each task, we show a bandit meta-learning algorithm with an improved $\tilde{O}(N\sqrt{M \tau}+N^{1/2}\sqrt{M K \tau})$ regret.
\end{abstract}

\section{Introduction}

Consider a recommendation platform with a large catalog of items. 
Every day, the system gets to interact with different customers and must sequentially discover what is the preferred item of the day. But if we suppose that days correspond to recurring groups of customers, it naturally implies that only a small subset of the catalog is eventually preferred.
In this problem, like in many personalization or recommendation scenarios, the learner needs to solve a sequence of tasks.
When the tasks share some similarities, the learner may be able to utilize this by sequentially learning the underlying common structure in the tasks and hence perform better on the sequence than solving each task in isolation. In this paper we study a special class of these \emph{meta-learning} problems, where each task is an instance of a multi-armed bandit problem.

Formally, we consider a problem where a learner faces $N$ instances of $K$-armed bandit tasks sequentially. At the beginning of task $n\in [N]$,\footnote{For any integer $k$, we let $[k]=\{1,2,\dots,k\}$, and for any (multi-)set $S$, $|S|$ denotes the number of distinct elements in $S$.} the adversary  chooses the mean reward function $r_{n}\in \cR=[0,1]^K$. 
Then, the learner interacts with the $K$-armed bandit task specified by the reward function $r_n$ for $\tau_n$ time steps: for each $n\in [N]$, when action $a \in [K]$ is played, it receives reward $r_n(a)+\eta_{n,t}(a)$ where $\eta_{n,t}(a)$ are independent zero-mean and $[-1/2,1/2]$-valued noise variables for all $n,t$ and $a$, so that the expected reward is $r_n(a)$.

We consider two settings: the \emph{meta-learning setting} where the learner knows the length $\tau_n$ of each task $n$ when it starts, and the more general \emph{non-stationary setting} where the task boundaries are unknown~\citep{WL-2021,HKZCAGB-2020,Russac-Vernade-Cappe-2019,Auer-2019}.

We use $T$ to denote the total number of rounds, $T=\sum_{n=1}^N \tau_n$. Let $\cH_{n,t}$ be the history of the learner's actions and the corresponding observed rewards up to, but not including, time step $t$ in task $n$. For this time step, the learner chooses a distribution $\pi_{n,t}$ over the actions as a function of $\cH_{n,t}$ and other problem data (such as $T$, and $\tau_n$ in bandit meta-learning), and samples an action $A_{n,t}$ from $\pi_{n,t}$ to be used in this time step. The learner's policy $\pi$ is a collection of the mappings $\pi_{n,t}$ (formally, $\pi$ is a mapping from the set of possible histories to distributions over the action set $[K]$). Its regret on a sequence of reward functions $(r_n)_{n=1}^N$ with respect to a sequence of actions $(a_n)_{n=1}^N$ is defined as
\begin{align*}
R (\pi,T, (r_n)_{n=1}^N, (a_n)_{n=1}^N) = 
\E\left[\sum_{n=1}^N \sum_{t=1}^{\tau_n} r_{n}(a_n) - \sum_{n=1}^N \sum_{t=1}^{\tau_n} r_{n}(A_{n,t}) \right] \,,
\end{align*}
where the expectation is over the random actions of the learner, which may depend on the realization of the noise in the rewards as well as some internal randomness used by the learner. Note that, in the notation, we suppressed the dependence of the regret on the segment lengths $(\tau_n)_{n=1}^N$ and only indicated their sum $T=\sum_{n=1}^N \tau_n$.
The learner's worst-case expected regret relative to the best set of $M$ arms (in the sense of achieving the highest expected reward) is defined as
\begin{align*}
    R_T \doteq R (\pi,T, N,M)
    \doteq \sup_{(r_n)_{n=1}^N}  \max_{\substack{(a_n)_{n=1}^N
    : |\{a_n\}_{n=1}^N|\le M}} R(\pi, T, (r_n)_{n=1}^N, (a_n)_{n=1}^N) \;.
\end{align*}
Note that, as usual in bandit problems, we use the mean reward functions $r_n$ in the regret definition rather than the noisy rewards.
Importantly, the learner competes with a sequence of arms that has at most $M$ distinct elements. We call such problems \emph{sparse} non-stationary bandit problems \citep{Kwon-Perchet-Vernade-2017}. We say a sequence of reward functions $r_{1},\dots,r_{N}$ is \emph{realizable} if there exists a set of arms of size at most $M \le K$ which contains an optimal arm (achieving reward $\max_a r_n(a)$) for every task $n$. Otherwise, the problem is called agnostic. All our results, apart from those in Section~\ref{sec:greedy}, hold in the agnostic setting. In the meta-learning problems, our regret guarantees in fact hold even in the more general adversarial setting where the reward vector can change within each task and regret is defined as 
\begin{align*}
R_T = \sup_{(r_{n,t})_{n=1,t=1}^{N,\tau_n}}  \max_{\substack{(a_n)_{n=1}^N : |\{a_n\}_{n=1}^N|\le M}}\E\left[\sum_{n=1}^N \sum_{t=1}^{\tau_n} r_{n,t}(a_n) - \sum_{n=1}^N \sum_{t=1}^{\tau_n} r_{n,t}(A_{n,t}) \right] \;.
\end{align*}

Our goal in this paper is to provide algorithms achieving small worst-case expected regret for both the above problems. In the bandit meta-learning problem, the task lengths (and therefore, the change points) are known at the beginning of each task, while in the sparse non-stationary bandit problems, the start and duration of tasks are unknown, but the number of tasks $N$ is known at the beginning of the game. 

\subsection{Approach: reduction to bandit subset selection}
\label{sec:approach}

Our approach is based on a reduction to the bandit subset selection problem. We describe it first in the simpler bandit meta-learning problem. A natural solution to this problem is to use an appropriate base bandit algorithm
restricted to the optimal subset of $M$ arms at the beginning of each task. Unfortunately, this set is unknown. Nevertheless, it can be learned, and if the learner identifies the correct subset, a restricted version of the base bandit algorithm can be applied to the next task. For any $K$, let $B_{\tau,K}$ be a worst-case regret upper bound in $\tau$ steps for the \textit{base} $K$-armed bandit algorithm. 
Note that $B_{\tau_n,K}=\widetilde O(\sqrt{K \tau_n})$ for the standard bandit algorithms (such as \UCB and Thompson sampling in stochastic tasks, or EXP3 in adversarial tasks, see, e.g., \citealp{LS-2020}
)
;\footnote{The notation $\widetilde O$ hides polylogarithmic terms.}  accordingly, we assume throughout that $B_{\tau,K}$ is a non-decreasing concave function of both $\tau$ and $K$. When restricted to the correct $M$-subset (i.e, a subset of size $M$ containing the best arm) of the $K$ arms, the regret of the base algorithm is bounded by  $B_{\tau_n,M}\leq B_{\tau_n,K}$.
Otherwise, if the chosen subset does not contain the best arm, the regret on this task can be linear in $\tau_n$. Therefore, it is important to identify the correct subset quickly.

This way the bandit meta-learning problem can be reduced to a bandit subset-selection problem, where the decision space is the set of $M$-subsets of $[K]$ and the reward in round $n$ is the maximum cumulative reward in task $n$ over the chosen subset. For example, when all tasks have length $\tau$, and $\tau\ll N$, an $o(N)$-regret for the bandit subset-selection problem translates to an $O(N B_{\tau,M}) + o(N)$ bound on $R(\pi,T,N,M)$ for the bandit meta-learning problem. Notice that the leading term in the regret is $O(N B_{\tau,M})$, which is an upper bound on the regret of a bandit strategy that runs the base algorithm on the correct subset in each task. The subset-selection problem is a bandit submodular maximization problem, for which the online greedy approximation (\OGO) algorithm\footnote{The superscript $^o$ stands for "opaque" feedback model in the cited paper.}of \citet{streeter2007online} achieves regret $O(N^{2/3})$. 

The approach to the sparse non-stationary bandit problem is similar to the one described above, with the difference that pre-determined segments are used in place of task boundaries, and we require the base algorithm to have a guarantee for a notion of \textit{dynamic regret}: for any $1\le i\le j\le N$, and letting $L=\sum_{n=i}^j \tau_n$,
\vspace{-0.2cm}
\begin{equation}
\label{eq:basedynamicregret}
\sup_{(r_n)_{n=i}^j} \EE{\sum_{n=i}^j \sum_{t=1}^{\tau_n} (\max_{a\in [K]} r_{n}(a) - r_{n}(A_{n,t}))} \le B_{L,j-i,K},
\end{equation}
where we naturally extend the upper-bound notation to indicate the number of task changes in the segment of length $L$, and typically $B_{L,j-i,K}=\widetilde O(\sqrt{L(j-i+1)K})$. An example of such a base algorithm, which does not need to know the task boundaries and lengths, is \AdS~\citep{ADSWITCH-auer19a}.

\subsection{Contributions}\label{sec:contribution}
In \cref{sec:unknown}, we present an algorithm for our sparse non-stationary bandit problem, 
and show that its dynamic regret scales as $\widetilde O(M {N}^{-1/3} T (\log K)^{1/3} + M \sqrt{M N T})$.
This algorithm is based on the aforementioned reduction to bandit subset selection problem and uses a bandit submodular optimization method, and needs to know the number of change points in advance.
Without any restrictions on the set of optimal arms, the optimal rate for the non-stationary problem is $\widetilde{O}(\sqrt{KNT})$~\citep{ADSWITCH-auer19a,CLLW-2019,WL-2021}.\footnote{This rate is achieved by algorithms such as \AdS of \citet{ADSWITCH-auer19a} without any prior knowledge of $T$ or $N$.} For small $N$, this baseline rate cannot be improved. %
Therefore, we are mainly interested in the regime of a large number of tasks and a small number of optimal arms, for which our regret upper bound is smaller than the baseline regret. 

The same reduction to submodular optimization also applies to the bandit meta-learning problem, and leads to a regret bound that is better than the simple baseline of $O(\sum_{n=1}^N B_{\tau_n,K})$; this is shown in \cref{sec:known}. 
Importantly, in this setting, our regret guarantees hold even in the more general adversarial problems where the reward vector can change within each task.

In Section~\ref{sec:greedy}, we analyze the bandit meta-learning problem in the realizable setting with stochastic tasks, and provide two algorithms which provide sample-efficient solutions to the problem under some assumptions on the identifiability of the optimal arms in each task. 
When all tasks have the same length $\tau$, our computationally efficient algorithm \greedy achieves an $O(N B_{\tau,M (1+\log N)}) + \widetilde O(N^{1/2})$ regret, paying a small $\log N$ factor in the regret for computational efficiency. To remove this extra factor we design \pmml, which achieves the desired $O(N B_{\tau,M})+\widetilde O(N^{1/2})$ regret, at the cost of a computational complexity that is potentially exponential in $M$.
Although \greedy is similar to the greedy solution in offline submodular maximization, the proof technique is quite different and exploits the special structure of the problem and results in an improved regret. A naive reduction to bandit submodular maximization would yield only an $\widetilde O(N B_{\tau,M})+\widetilde O(N^{2/3})$ regret.

\section{Reduction to subset-selection and bandit submodular maximization}
\label{sec:subset}

As explained in Section~\ref{sec:approach}, we reduce our problems to a bandit subset-selection problem. However, instead of learning the subset of the best arms task by task (whose boundaries are unknown in non-stationary problems), it will be convenient to divide the time horizon $T$ into $N'$ segments of equal length 
$\tau'=T/N'$ (in meta-learning problems when tasks are of equal length, we can choose $N'=N$ making segments and tasks coincide), and learn the optimal arms from segment to segment. Consider segment $n$, $[(n-1)\tau'+1, n\tau')$. Without loss of generality, we assume that a new task starts at the beginning of each segment (we can redefine the task boundaries accordingly), so we have at most $N+N'-1$ tasks. That is, denoting by $N_n$ the number of tasks in segment $n$, we have 
\begin{align}
    \label{eq:segnum}
    \sum_{n=1}^{N'} N_n  \le N+N'-1\; .
\end{align} 
We also denote $\tau_{n,u}$ and $r_{n,u}$ respectively the length and mean reward of $u$'th task in segment $n$.

In segment $n$, the learner chooses a subset of arms $S_n\in \cS=\{S:S\in 2^{[K]}, |S|\le M\}$ and runs a base bandit algorithm on that subset for $\tau'$ steps and receives a pseudo-reward\footnote{We call it pseudo-reward since it considers the mean reward functions not the actual random actions. 
}
\begin{align*}
\sum_{u=1}^{N_n}\sum_{t=1}^{\tau_{n,u}} r_{n,u}(A_{n,u,t})
&= \sum_{u=1}^{N_n}\bigg( \tau_{n,u} \max_{a\in S_n} r_{n,u}(a)
- \sum_{t=1}^{\tau_{n,u}}\left(\max_{a\in S_n} r_{n,u}(a) - r_{n,u}(A_{n,u,t}) \right)\bigg)
\\
&\doteq f_n(S_n) - \tau' \varepsilon_n\;,
\end{align*}
where $A_{n,u,t}$ is the learner's action in time step $t$ of task $u$ of segment $n$, $f_n(S) \doteq \sum_{u=1}^{N_n} \tau_{n,u} f_{n,u}(S)$ and $f_{n,u}(S)\doteq \underset{a\in S}{\max}~ r_{n,u}(a)$ is the max-reward function for a set of arms $S$ and reward function $r_{n,u}$, and $\varepsilon_n$ is the average ``noise'' per time step observed by the agent. We assume the base algorithm satisfies \eqref{eq:basedynamicregret},
$\tau' \E[\varepsilon_n] \le B_{\tau',N_n,M}$.
We prove in \cref{app:submod} that $f_n\in \cF$, the family of submodular functions, since it is an affine combination of finitely many submodular functions. Now we can bound the regret in the sparse non-stationary bandit problem as follows (proof in \cref{app:metalearning2subset}):
\begin{lemma}
\label{lemma:metalearning2subset}
For any policy running a base algorithm satisfying \eqref{eq:basedynamicregret} in each segment $n$ on a selected subset of arms $S_n$, we have
$
R_T 
\le~\!\! 
\underset{f_1,\ldots,f_{N'}{\in \cF}}{\sup}\!~ \underset{S\in \cS}{\max} \; \E\!\left[\sum_{n=1}^{N'} \!\Big(f_n(S)\! - \!f_n (S_{n})\! +\! B_{\tau',N_n,M} \Big)\! \right]\!\!.
$
\end{lemma}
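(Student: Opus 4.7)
The plan is to carry out the reduction to bandit subset selection in three steps: rewrite the benchmark comparator as a maximum over subsets $S\in\cS$, express the learner's segment-wise mean reward via the base algorithm's dynamic-regret guarantee, and then combine these two pieces while relaxing the supremum over reward sequences to a supremum over submodular $f_n$'s in $\cF$.

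\textbf{Step 1: rewrite the comparator.} For any fixed reward sequence $(r_n)_{n=1}^N$ and any arm sequence $(a_n)_{n=1}^N$ with $|\{a_n\}|\le M$, the set $S=\{a_n\}_{n=1}^N$ lies in $\cS$, and replacing each $a_n$ by $\argmax_{a\in S}r_n(a)$ only increases $\sum_{n,t}r_n(a_n)$. Hence the inner maximum in $R_T$ equals $\max_{S\in\cS}\sum_n \tau_n\max_{a\in S}r_n(a)$. Using the WLOG reindexing explained before the lemma, in which any task crossing a segment boundary is split into sub-tasks sharing the same reward function (which leaves both the learner's received rewards and the best-in-$S$ benchmark unchanged, and produces at most $N+N'-1$ tasks as in \eqref{eq:segnum}), the comparator rewrites as $\max_{S\in\cS}\sum_{n=1}^{N'}\sum_{u=1}^{N_n}\tau_{n,u}\max_{a\in S}r_{n,u}(a)=\max_{S\in\cS}\sum_{n=1}^{N'}f_n(S)$.

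\textbf{Step 2: bound the learner's per-segment reward.} Within segment $n$ the learner runs the base algorithm on the restricted action set $S_n$ over $N_n$ consecutive tasks of total length $\tau'$. By the identity displayed just before the lemma, the learner's cumulative mean reward in that segment equals $f_n(S_n)-\tau'\varepsilon_n$. Since $|S_n|\le M$ and the segment contains $N_n$ task changes, the base algorithm's dynamic-regret guarantee \eqref{eq:basedynamicregret} gives $\tau'\E[\varepsilon_n]\le B_{\tau',N_n,M}$.

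\textbf{Step 3: combine and relax.} Subtracting the Step-2 expression from the Step-1 comparator and taking the supremum over reward sequences yields
\begin{align*}
R_T
&\le \sup_{(r_n)}\max_{S\in\cS}\E\!\left[\sum_{n=1}^{N'}f_n(S)-\sum_{n=1}^{N'}\bigl(f_n(S_n)-\tau'\varepsilon_n\bigr)\right]\\
&\le \sup_{(r_n)}\max_{S\in\cS}\E\!\left[\sum_{n=1}^{N'}\bigl(f_n(S)-f_n(S_n)+B_{\tau',N_n,M}\bigr)\right].
\end{align*}
Finally, each reward-induced $f_n$ is a nonnegative combination of max-reward functions $f_{n,u}$ and therefore submodular (as verified in \cref{app:submod}), so the family of $f_n$'s arising from reward sequences sits inside $\cF$ and the supremum over $(r_n)$ is dominated by the supremum over $f_1,\ldots,f_{N'}\in\cF$, yielding the stated bound.

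The main obstacle is the reindexing in Step 1: one must verify carefully that splitting a task at a segment boundary into two sub-tasks with identical reward function preserves both the comparator value $\max_{a\in S}r_n$ (which is additive over the split) and the learner's actually received rewards, so that the regret on the reindexed problem coincides with $R_T$. Once this invariance is in place, Steps 2 and 3 are direct applications of the base-algorithm guarantee and the containment of reward-induced functions in $\cF$.
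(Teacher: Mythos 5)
Your proposal is correct and follows essentially the same route as the paper: rewrite the $M$-sparse comparator as a max over subsets $S\in\cS$ of the per-task best-in-$S$ reward, split the learner's loss in each segment into the subset-selection gap $f_n(S)-f_n(S_n)$ plus the base algorithm's dynamic regret $\tau'\varepsilon_n$ bounded by $B_{\tau',N_n,M}$ via \eqref{eq:basedynamicregret}, and relax the supremum over reward sequences to a supremum over submodular $f_n\in\cF$. The reindexing/splitting of tasks at segment boundaries that you flag as the main obstacle is exactly the without-loss-of-generality step the paper performs before the lemma, and your justification of its invariance is sound.
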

Thus, the sparse non-stationary bandit problem can be reduced to minimizing a notion of regret where in segment $n$, the learner chooses a subset $S_n\in\cS$ and observes $f_n(S_n)-\tau'\varepsilon_n$. 
Since $f_n$ is submodular, this reduction allows us to leverage the literature on bandit submodular optimization to get a bound on $R_T$. %
\subsection{Bandit submodular maximization}
\label{sec:submodular-disc}

As mentioned above, the problem of subset selection %
is an instance of online submodular maximization. \citet{streeter2007online} studied the online submodular maximization problem in four different settings: the full-information setting where the function $f_n$ is fully observed at the end of round $n$; the priced feedback model where the learner can observe $f_n$ by paying a price and the price is added to the total regret; a partially transparent model where values of $f_n$ for some subsets are revealed, and the bandit setting where only $f_n(S_n)$ is observed. For the full-information and partially transparent models, an $O(\sqrt{N})$ regret is shown, while for the priced feedback and bandit model, an $O(N^{2/3})$ regret is shown. \citet{RKJ-2008} provide a similar algorithm to that of \citet{streeter2007online} for a particular ranking problem, with the more informative partially transparent feedback model, and also obtain an $O(N^{1/2})$ regret bound.
The priced feedback model is similar to problems where the best arms can be identified in every task, which we study in Section~\ref{sec:greedy}. 

\section{A general solution based on submodular maximization}
\label{sec:unknown}

In this section we present our first algorithm, \BOG, for ``online submodular bandit subset selection'', based on the above reduction and the general submodular maximization algorithm \OGO of \citet{streeter2007online}. The algorithm is applicable in the agnostic setting and its pseudo-code is given in \cref{alg:BanditOG}. \BOG applies $\Mlogn=\mppr$ expert algorithms (i.e, regret minimization algorithms in the full information setting) over the $K$ arms, denoted by $\mathcal{E}_1,\ldots,\mathcal{E}_{\Mlogn}$, where the role of $\mathcal{E}_i$ is to learn the $i$th ``best'' action.

In segment $n$, each expert picks an action. The algorithm \emph{exploits} (\ext) the set of $\Mlogn$ actions picked by the experts with probability $1-\gamma_n$, 
i.e, a base algorithm \base is executed on this set for $\tau'$ steps. With probability $\gamma_n$, the algorithm \emph{explores} (\exr), meaning that first a random index $i\in [\Mlogn]$ is chosen uniformly at random, and then the set of $i-1$ actions picked by the experts $\{\ca{E}_1,\cdots,\ca{E}_{i-1}\}$ is chosen, and together with a random action $a'_i$, \base is executed on this set of $i$ actions, $S_{n:i}$, for $\tau'$ steps.
When exploring, the expert whose action is replaced with a random action ($\mathcal{E}_i$) is updated with a reward function where the reward for the chosen action $a'_i$ is the average reward of \base in the segment (approximately $f_n(S_{n:i})/\tau'$ up to error $\varepsilon_n/\tau'$), while for all other actions the reward is zero, and all other experts are updated with an all-zero reward function.\footnote{This is simply to construct an unbiased estimator and use importance sampling. We provide no feedback to the experts during the exploitation phase. It is reasonable to expect that by using such feedback we can improve the empirical performance of the algorithm, but using the feedback is not going to improve the theoretical guarantees.}
In this way, $\mathcal{E}_1,\ldots,\mathcal{E}_i$ jointly learn the identity of the best $i$-set given the (approximate) reward function $\sum_{n=1}^N f_n$, hence approximating a greedy solution for finding the top $M$ arms (essentially, the ``real'' reward of expert $\mathcal{E}_i$ for action $a$ is $f_n(S_{n:i-1}\cup\{a\})-f_n(S_{n:i-1})$). \BOG should be viewed as a simulation of the off-line greedy procedure, which constructs its solution incrementally~\citep{streeter2007online}. When an index $i$ is chosen, the algorithm is learning the $i$th choice of the off-line greedy procedure and therefore only a subset of size $i$ is played.

Our requirement about the expert algorithm is that it achieves an $O(\sqrt{v \log(K)})$ regret over $v$ time steps relative to the best action selected in hindsight; this is achieved by all standard expert algorithms, such as exponential weights \citep{PLG}.
The choice of \base depends on which setting we solve.

\begin{algorithm}[tb]
\caption{\BOG: Online Submodular Bandit Subset Selection for Non-Stationary Bandits and Meta-Learning}
\label{alg:BanditOG}
\begin{algorithmic}[1]
\STATE {\bfseries Input}: Subset size $M$, segment length $\tau'$, {\tt Expert} algorithms $\mathcal{E}_1,\cdots,\mathcal{E}_{\Mlogn}$;
\FOR{$n \in [N']$}
    \STATE For each $i\in[\Mlogn]$, use $\mathcal{E}_i$ to choose an action $a_i$ 
    \STATE Set $S_{n}=\{a_1,\cdots,a_{\Mlogn}\}$
    \STATE With prob. $\gamma_n$, $E_{n} = \exr$, otherwise $E_{n} = \ext$
        \IF{$E_{n} = \ext$}
        \STATE Run \base on $S_{n}$ for $\tau'$ steps
        \ELSE
        \STATE Choose $i\in[\Mlogn]$ uniformly at random
        \STATE Choose a new action $a'_{i}$ uniformly at random 
        \STATE Replace $i$'th element of $S_{n}$ with $a'_{i}$: 
        \STATE \quad $S_{n:i}\gets\{a_1,\cdots,a_{i-1},a'_i\}$
        \STATE Run \base on $S_{n:i}$ for $\tau'$ steps
        \STATE Feed back the average reward over the segment as a reward to $\mathcal{E}_i$ for $a_i'$, and zero reward for all other actions and experts
        \ENDIF
\ENDFOR
\end{algorithmic}
\end{algorithm}

\subsection{The sparse non-stationary bandit setting}

To choose \base, notice that in the sparse non-stationary bandit problem, every segment may contain multiple tasks. Therefore, we need to choose an algorithm that is able to solve non-stationary bandit problems, without the knowledge change points. Hence, we use the \AdS algorithm of \citet{ADSWITCH-auer19a} as \base in \cref{alg:BanditOG}, giving $B_{\tau',N_n,\Mlogn}=\sqrt{\Mlogn N_n \tau'}$ for segment $n$ in Lemma~\ref{lemma:metalearning2subset}. 
The performance of the algorithm is analyzed in the next theorem (proved in \cref{sec:BOGpfs}).
Note that the algorithm does not need to know when the tasks change and the tasks can be of different lengths.
\jtodo{how to use unknown N results}
\begin{restatable}[]{theoremi}{BOGNonStatOpt}
\label{thm:BOG-NonStat-Opt}
The regret of \BOG using \AdS as \base, with $N'$ segments each of size $\tau'=T/N'$, and using exploration probability $\gamma_n=\gammaOne
$ is $R_T = \tilde O(
T {N'}^{-1/3} (\Mlogn^4 K \log K)^{1/3} + \Mlogn \sqrt{\Mlogn N' T} )$. 

\end{restatable}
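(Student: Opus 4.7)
The plan is to combine the subset-selection reduction from \cref{lemma:metalearning2subset} with the online-greedy (OG) analysis of \citet{streeter2007online} for bandit submodular maximization.

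First, instantiate \cref{lemma:metalearning2subset} with \AdS as \base, so that $B_{\tau',N_n,\Mlogn}=\tilde O(\sqrt{\Mlogn N_n\tau'})$ by the dynamic regret bound in \eqref{eq:basedynamicregret}. Bounding the base-regret sum via Cauchy--Schwarz and the segment-count bound $\sum_{n=1}^{N'} N_n\le N+N'-1$ from \eqref{eq:segnum} yields
$\sum_{n=1}^{N'}\sqrt{\Mlogn N_n\tau'} \le \sqrt{\Mlogn\tau'\,N'(N+N'-1)} = \tilde O(\sqrt{\Mlogn N'T})$
in the regime $N\lesssim N'$. This contribution will be absorbed into the second term of the theorem once the $\Mlogn$ factors from the OG analysis are applied.

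Second, analyze the residual submodular bandit regret $\sup_{f_1,\ldots,f_{N'}\in\cF}\max_{S\in\cS}\E[\sum_{n=1}^{N'}(f_n(S)-f_n(S_n))]$ following \citet{streeter2007online}. There are three contributions: (a) an approximation gap from using $\Mlogn=\lceil M\log N'\rceil$ greedy ``positions'' to approximate the best $M$-subset, yielding a $1-e^{-\Mlogn/M}\ge 1-1/N'$ factor and an aggregate cost of only $O(T/N')$; (b) an exploration cost of $\tilde O(\Mlogn\,\gamma T)$, coming from the $\gamma$-probability of switching to the exploratory subset $S_{n:i}$ in each segment and the $\Mlogn$ factor incurred through the greedy reduction; and (c) an expert regret, where each of the $\Mlogn$ experts is updated in expectation $\gamma N'/\Mlogn$ times with an importance-weighted reward estimate whose second moment scales like $\Mlogn K/\gamma$. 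An EXP3-style regret bound then yields a total (summed across the $\Mlogn$ experts) contribution of $\tilde O(\Mlogn\tau'\sqrt{\Mlogn K N'\log K/\gamma})$.

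Third, balance (b) against (c) in $\gamma$. Minimizing $\Mlogn\,\gamma T + \Mlogn\,\tau'\sqrt{\Mlogn K N'\log K/\gamma}$ in $\gamma$ gives exactly the stated choice $\gamma_n=\gammaOne$; substituting back produces the first term $\tilde O(TN'^{-1/3}(\Mlogn^4 K\log K)^{1/3})$ of the theorem, and combining with the base-regret contribution from Step~1 (after multiplication by the $\Mlogn$ factor from the OG reduction) produces the second term $\tilde O(\Mlogn\sqrt{\Mlogn N'T})$.

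The main technical obstacle is step (c): constructing an unbiased reward estimator for each expert despite the triple-layered randomness of \BOG (segment-level exploration with probability $\gamma_n$, uniform choice of index $i\in[\Mlogn]$, uniform choice of replacement action $a'_i$), carefully bounding its second moment, and then propagating the per-expert EXP3 guarantees through Streeter--Golovin's greedy argument so that the $\Mlogn$ experts jointly compete with the best size-$M$ subset. A secondary subtlety is verifying that the noise $\varepsilon_n$ injected into the observed $f_n(S_n)-\tau'\varepsilon_n$ by \AdS is controlled uniformly across segments and across the varying played subset sizes $|S_{n:i}|\le\Mlogn$, which is needed to apply the dynamic-regret bound \eqref{eq:basedynamicregret} inside the submodular reward feedback.
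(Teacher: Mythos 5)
Your proposal is correct and follows essentially the same route as the paper: the reduction via \cref{lemma:metalearning2subset} with \AdS supplying $B_{\tau',N_n,\Mlogn}$, the Streeter--Golovin greedy/expert decomposition into the $(1-1/N')$ approximation loss, the $\gamma\Mlogn T$ exploration cost, and the importance-weighted expert regret $\Mlogn\tau'\sqrt{\Mlogn K N'\log K/\gamma}$ (the paper derives the $\Mlogn K/\gamma$ factor via a payoff-rescaling lemma rather than a second-moment bound, but this is the same effect), followed by the same balancing of $\gamma$ and the Cauchy--Schwarz treatment of $\sum_n\sqrt{N_n}$ using \eqref{eq:segnum}. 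The decomposition, the key intermediate bounds, and the final optimization all match the paper's proof in \cref{sec:BOGpfs} and its supporting lemmas.
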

We are mainly interested in the regime of large number of switches and small number of optimal arms. 
If $N \ge \Nthresh$ and $M \le K^{1/3}$, with the choice of $N'=N$, we have 
$\Mlogn (\log K)^{1/3} N^{-1/3} T \le \Mlogn^{3/2}\sqrt{N T}$, and the regret of \BOG improves upon the $\tilde{O}(\sqrt{KTN})$ bound of standard non-stationary bandit algorithms (such as \AdS).
In the case of a small number of switches, however, the simple baseline of $\tilde{O}(\sqrt{KTN})$ can be smaller than our bound, and the learner should simply play a standard non-stationary bandit algorithm. %
At the extreme that $N=O(1)$, i.e. only a few changes in the environment, the regret bound $\tilde{O}(\sqrt{KNT})$ cannot be improved. On the other hand, when $N$ is large compared to $T$, it is easy to establish a $O(\sqrt{MNT})$ lower bound, and therefore our $\tilde{O}(M\sqrt{MNT})$ bound is optimal up to a factor of $M$. Closing this gap remains an open question. We can improve the bounds in certain regimes by further tuning of the parameters, which is discussed in more details in \cref{sec:BOGpfs}.

The algorithm needs to know the number of changes $N$ in advance: the segment length $\tau'$ depends on $N'$, and as the proof of \cref{thm:BOG-NonStat-Opt} reveals, we need $N'\ge N$ to have the above regret upper bound. In the next section, we consider a meta-learning setting where segment lengths are fixed in advance. In that case, we can have a version of the algorithm that does not need to have $N$ as input.

\subsection{Bandit meta-learning}\label{sec:known}

In this section we consider the bandit meta-learning problem with tasks of equal and known length $\tau$. Again, we apply \BOG, instantiated now with different parameters and a different \base algorithm. Since we know $\tau$, we can always introduce a segmentation such that each segment contains a single task only (possibly only a part of a task), and choose \base to be an algorithm designed for a multi-armed bandit problem such as the well-known \UCB algorithm \citep{ACFS-2002} when tasks are stochastic bandits, or EXP3 algorithm \citep{ACFS-1995} when tasks are adversarial bandits. 

\begin{restatable}{corollaryi}{thmBG}
\label{thm:BOG}
Consider \BOG with \UCB as \base in a meta-learning problem with $N$ stochastic bandit tasks of equal length $\tau$. %
By the choice of $\tau'=\tau$ and $\gamma_n=\gammaOnenprime$, the regret is $R_T = \tilde O(
\tau {N}^{2/3} (\Mlogn^4 K \log K)^{1/3}
+\Mlogn N\sqrt{\Mlogn \tau})$. 
\end{restatable}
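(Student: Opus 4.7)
The plan is to derive this corollary as a specialization of \cref{thm:BOG-NonStat-Opt} to the meta-learning setting, with \UCB replacing \AdS as the base algorithm. Setting $\tau' = \tau$ forces each segment to coincide with a single task, so $N_n = 1$ for every $n$ and the number of segments equals the number of tasks, $N' = N$ (saturating \eqref{eq:segnum}). Since each segment is a stationary stochastic $K$-armed bandit restricted to the subset of at most $\Mlogn$ arms that \BOG selects, \UCB yields $B_{\tau,1,\Mlogn} = \tilde{O}(\sqrt{\Mlogn\tau})$, which plays exactly the role that $\sqrt{\Mlogn N_n \tau'}$ played in the non-stationary proof via \AdS.

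With this in hand, I would invoke \cref{lemma:metalearning2subset} to bound $R_T$ by the sum of a submodular-subset-selection regret and the cumulative base-algorithm regret. The latter equals $\sum_{n=1}^N B_{\tau,1,\Mlogn} = \tilde{O}(N\sqrt{\Mlogn\tau})$, which, after the extra $\Mlogn$ factor inherited from the submodular reduction (each ``unit'' of subset-selection regret pays up to $\Mlogn$ in base regret), produces the second term $\tilde{O}(\Mlogn N \sqrt{\Mlogn \tau})$ of the stated bound.

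The subset-selection regret is then handled exactly as in the \OGO analysis of \citet{streeter2007online}, which underlies \cref{thm:BOG-NonStat-Opt}: with probability $\gamma_n$ \BOG explores, incurring up to $\gamma_n \tau$ reward loss per segment, and the $\Mlogn$ expert copies $\cE_1,\dots,\cE_{\Mlogn}$, fed with importance-weighted rewards of range $\tilde{O}(1/\gamma_n)$, each suffer the standard exponential-weights regret $\tilde{O}(\sqrt{N \log K}/\gamma_n)$. Summing these contributions over $n \in [N]$ with the anytime choice $\gamma_n = (\Mlogn K\log K/n)^{1/3}$, and using $\sum_{n=1}^N n^{-1/3} \asymp N^{2/3}$, gives the first term $\tilde{O}(\tau N^{2/3}(\Mlogn^4 K \log K)^{1/3})$, matching the bound obtained in \cref{thm:BOG-NonStat-Opt} under the fixed schedule $\gammaOne$.

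The main obstacle is purely one of bookkeeping: one must verify that the anytime $\gamma_n$ schedule preserves both the unbiasedness of the importance-weighted reward estimates fed to the experts and the range control needed for the $\tilde{O}(\sqrt{N\log K}/\gamma_n)$ expert bound, so that the per-segment balancing of exploration and expert learning indeed sums to the stated $N^{2/3}$ rate rather than a worse one. This follows from the standard anytime/doubling-style argument for exponential-weights with decreasing exploration, and so the corollary reduces cleanly to already-established calculations.
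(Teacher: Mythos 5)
Your proposal is correct and follows essentially the same route as the paper, which omits this proof precisely because it is the specialization of \cref{thm:BOG-NonStat-Opt} to $\tau'=\tau$, $N'=N$, $N_n=1$, with \UCB supplying $B_{\tau,1,\Mlogn}=\tilde O(\sqrt{\Mlogn\tau})$ in place of the \AdS dynamic-regret bound. The only point worth noting is that your intermediate per-expert bound should read $\tilde O(\sqrt{N\log K/\gamma'_n})$ (exploiting that the importance-weighted payoffs are nonzero only on exploration rounds) rather than $\tilde O(\sqrt{N\log K}/\gamma_n)$; this is exactly what \cref{lem:Rcov-N} establishes for the anytime schedule $\gamma_n=\gammaOnenprime$, and with it your bookkeeping yields the stated bound.
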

The proof is omitted as it follows the proof argument of \cref{thm:BOG-NonStat-Opt}. Again, in the regime of large number of tasks and small number of optimal arms, our bound improves upon the $\tilde{O}(N\sqrt{K\tau})$ bound of the trivial solution of running an independent UCB algorithm in each task.

We can extend the above result to an adversarial setting, where the reward vector can change in every time step, by using an EXP3 algorithm instead of UCB. Such an extension is  not possible in the sparse non-stationary setting as currently we are not aware of a base algorithm with \textit{adaptive} dynamic regret guarantee in the adversarial setting.

\begin{remarki}[Variable task lengths]
Consider problems with non-equal task lengths where the learner only gets to know the length of each task when they begin. To deal with this situation, we construct an exponential grid for the task lengths with $b:=\log(\max_n\tau_n)\leq \log T$ buckets where bucket $i$ is $[\underbar{$\tau$}_i:=2^{i-1}, \overline{\tau}_i:=2^i]$. %
Now we run a copy of \BOG on the tasks falling in each bucket as they arrive. Let $N^{(i)}$ denote the number of tasks that fall in bucket $i$. Then by \cref{thm:BOG}, the total regret satisfies
$%
    R_T\leq \sum_{i=1}^{b}\tilde{O}(
    \Mlogn^{4/3} (N^{(i)})^{2/3}K^{1/3}\overline{\tau}_i
    +N^{(i)} \Mlogn^{3/2}\sqrt{\overline{\tau}_i}).~ 
$%
\end{remarki}

\section{Bandit meta-learning under an identifiability condition}
\label{sec:greedy}

In this section, we study the bandit meta-learning problem in the \emph{realizable} setting under the assumption that the learner has access to an exploration method that reveals optimal actions. We further assume that the tasks are of equal length $\tau$.

\begin{assumptioni}[Efficient Identification]
\label{assumption:identification}
There exists a set of $M$ arms that has a non-empty intersection with the set of optimal arms in each task. Also, the learner has access to a best-arm-identification (BAI) procedure that for some $\delta \in [0,1]$, with probability at least $1-\delta/N$, identifies the set of optimal arms if executed in a task (for at most $\tau$ steps)\footnote{Note that the Efficient Identification assumption requires the BAI procedure to return only optimal arms. This choice is for simplicity and could easily be relaxed to allow the returned set to be all arms with sub-optimality gap smaller than $\Theta(\sqrt{M\log (N/\delta)/\tau})$; we discuss this in more details in the analysis.}.
\end{assumptioni}
The Efficient Identification assumption is a special case of the priced feedback model of \citet{streeter2007online}. 
If for any task $n$ with optimal arms $S^*_n \subset [K]$, we have $r_n(a^*_n) - \max_{a\not\in S^*_n}r_n(a) \ge \Delta$ for all $a^*_n \in S^*_n$ (note that $r_n(a^*_n)$ is the same for all $a^*_n\in S^*_n$) for some $\Delta=\Theta(\sqrt{K\log (N/\delta)/\tau})$, a properly tuned \emph{phased elimination} (PE)\footnote{Improved \UCB of \citet{AO-2010} runs exponentially growing phases and maintains an active set of arms with gaps twice smaller in every step.} procedure~\citep{AO-2010} returns the set of optimal arms with probability at least $1-\delta/N$. 
The cumulative worst-case regret of PE in a task with $K$ arms is $B'_{\tau,K}=\Theta(B_{\tau,K})$ \citep{AO-2010}, see 
\citet{lattimore-Bandit} for details. With a slight abuse of notation, in this section we use $B_{\tau,K}$ to denote $\max\{B_{\tau,K},B'_{\tau,K}\}$.

Akin to \BOG, we disentangle exploration (\exr) and exploitation (\ext) at a meta-level.
In \exr mode, the learner executes a BAI on all arms, and (by Assumption~\ref{assumption:identification}) observes, with high probability, the set of optimal actions, $S_n^*$. 
The price of this information is a large regret denoted by $\Cinfo$, which for a properly tuned PE, we know $\Cinfo=B_{\tau,K}>B'_{\tau,M}$.
So, since we aim for $\reg_T \le \widetilde O(N B_{\tau,M}) + o(N)$, we should keep the number of \exr~calls small. In \ext mode, the learner executes a base bandit algorithm on a chosen subset $S_n$, constructed using the previously identified optimal actions $\cI_n=\bigcup_{j<n: E_j=\exr} \{S^*_j\}$.

Let $s_n$ be the size of $S_n$. If $S_n\cap S_n^*\neq \emptyset$, the regret of the base algorithm is bounded by $\Cgood=B_{\tau,s_n}$. Otherwise, since the performance gap between the optimal arms and the arms in $S_n$ can be arbitrary, the regret in the task can be as large as $\Cbad=\tau$. Note that to keep $\Cgood$ small, the subset $S_n$ should be as small as possible. Ideally, $S_n$ should be a subset of size $M$ that has non-empty overlap with all members of $\cI_n$. However, the problem of finding such $S_n$ 
is the so-called \emph{hitting set} problem, which is known to be NP-Complete \citep{feige2004approximating}.

A simple greedy algorithm can be used to get an approximate solution efficiently (see, e.g, \citealp{streeter2007online}):\todoa{Explain what the greedy algorithm does} it has polynomial computation complexity and 
it finds a subset of size at most $M(1+\log N)$ that contains an optimal action for each task. We say an action $a\in [K]$ covers task $j$ if $a\in S_j^*$. The greedy method, denoted by \textsc{Greedy}, starts with an empty set and at each stage, it adds the action that covers the largest number of uncovered tasks in $\cI_n$, until all tasks are covered.

We also propose \pmml, that is based on an elimination procedure; 
the learner maintains an active set of possible $M$-subsets compatible with the \exr history, and all subsets that are inconsistent with $\cI_n$ are eliminated. In $\ext$ mode, a subset is selected uniformly at random from the set of active subsets. As we will show, this algorithm improves the regret by a factor of $\log N$, although it is not computationally efficient.

\begin{algorithm}[tb]
 \caption{\bass: Bandit Subset Selection for Meta-Learning} \label{alg:ossem}
 \begin{algorithmic}[1]
\STATE {\bfseries Option:} Greedy \greedy~{(G)}, Elimination-based \pmml~{(E)}\;

\STATE {\bfseries Input:} \base (efficient $K$-armed bandit algorithm), best arm identification algorithm \texttt{BAI}, \exr~probabilities $p_n$, (E) subset size $M$\;

\STATE  \textbf{Initialize:} Let
 {(G)} ${\cI_0}=\emptyset$; {(E)} $\cX_0$ be the set of all $M$-subsets of $[K]$.\;
 \FOR{$n=1,2,\dots,N$}{
  
\STATE  With prob. $p_n$, let $E_n=\textsc{Exr}$; otherwise $E_n=\textsc{Ext}$\;
  \IF{$E_n=\textsc{Exr}$ or $n=1$}{
  
\STATE  Run \texttt{BAI} on all arms and observe the best arms $S^*_n$ of this task\;
  
\STATE  {(G)} Update $\cI_n = \cI_{n-1} \cup \{S^*_n\}$\;

\STATE  {(E)} Let $\cX_n=\{S \in \cX_{n-1}: S\cap S^*_n \neq \emptyset\}$ be elements of $\cX_{n-1}$ with non-empty overlap with $S^*_n$; 
  }\ELSE{
  
\STATE  {(G)} Find $S_n$ by \textsc{Greedy} s.t. $\forall S\in\cI_n, S_n\cap S \neq \emptyset$\;
  
\STATE  {(E)} Sample $S_n$ uniformly at random from $\cX_n$\;
 
\STATE Run \base algorithm on $S_n$ \;
  }\ENDIF
 }\ENDFOR
\end{algorithmic}
\end{algorithm}

The analysis of \greedy depends on the \emph{cost-to-go} function of the following game between the learner and environment. At round $n$, the learner may choose $E_n\in\{\exr,\ext\}$ with distribution $p_n=P(E_n=\exr)$ and cost $\Cinfo$. The environment may choose a best arm $a^*_n$ that the learner already knows about which costs $\Cgood$ (i.e, $a^*_n \in S_n$
) or choose an optimal arm set $S_n^*$ so that $S_n^*\cap S_n=\emptyset$, with cost $\Cbad$. Let $q_n=P(S_n^*\cap S_n=\emptyset)$.
The regret of \greedy is bounded by the cost of the learner in this simple game, if we assume  $\delta=0$ in \cref{assumption:identification}.
The learner is a (randomized) function of 
$\cI$, hence we can easily define and write the minimax cost-to-go function as 
\begin{align}
\begin{split}
    V_N(\cI)  &= 0~~\text{and for } n<N, 
    \\
     V_n(\cI) 
     &= \min_{p}\max_{q} \{ p \Cinfo + q(1-p)\Cbad 
     +(1-q)(1-p)\Cgood 
     \\&\qquad
      +(1-pq)V_{n+1}(\cI) 
     +pq V_{n+1}(\cI\cup \{S_n^*\}) \}.  \label{eq:cost-to-go-def}
\end{split}
\end{align}
For the last equality note that when the environment reveals a new action (happens with probability $q$) and the learner explores (with probability $p$), its current knowledge set $\cI$ is incremented.
The optimal cost-to-go function $V_n$ above corresponds to the case of $\delta=0$ in Assumption~\ref{assumption:identification}, and $V_0(\emptyset)$ gives the minimax regret for the family of algorithms with the limited choice described.
Therefore, when the BAI algorithm is successful, almost surely, $\reg_T \le V_0(\emptyset)$. For $\delta>0$, using a union bound, we can show
$\reg_T \le V_0(\emptyset) + \delta N\tau$. 
Setting $\delta=1/(N\tau)$ ensures that $\delta N\tau$ is negligible. 
Finally, if the BAI algorithm only returns a set of approximately optimal arms satisfying $r_n(a) \ge r_n(a^*_n)-\Delta$ for all arms $a$ selected, the meta-regret can be bounded trivially as 
$\reg_T \le V_0(\emptyset) + (\delta + \Delta)N\tau$.

Before deriving $V_0(\emptyset)$ in the general case, we consider the more restricted setting where each task has a unique optimum.

\paragraph{Characterizing an optimal policy in the case of unique optimal arms:}
Assume there is a unique and identifiable optimal arm in each task. 
\begin{assumptioni}[Unique Identification]
\label{assumption:unique-identification}
Assumption~\ref{assumption:identification} holds, and each task has a unique best arm.
\end{assumptioni}

\begin{theoremi}
\label{thm:unique-identification}
In the simple game, 
\[
V_0(\emptyset) \le N B_{\tau,M} + M \sqrt{2(\Cinfo - \Cgood)(\Cbad - \Cgood) N}\;.
\]
Therefore, under \cref{assumption:unique-identification}, the regret of \greedy can be bounded, for an appropriate selection of the parameters
$p_n=\Theta(1/\sqrt{N-n})$, as
\[
\reg_T  \le N B_{\tau,M} + M \sqrt{B_{\tau,K} N\tau} + \delta N\tau\;.
\]
\end{theoremi}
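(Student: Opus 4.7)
The plan is to split the proof into two pieces: (i) analyze the minimax cost-to-go $V_n$ of the stylized game in~\eqref{eq:cost-to-go-def} to show
\begin{equation*}
V_0(\emptyset) \le N\Cgood + M\sqrt{2cN}, \qquad c\doteq(\Cinfo-\Cgood)(\Cbad-\Cgood),
\end{equation*}
and (ii) translate this game value into a regret bound for \greedy by instantiating the costs and accounting for BAI failure. Step (ii) is the short one: with $\Cgood=B_{\tau,M}$, $\Cinfo=B_{\tau,K}$, $\Cbad=\tau$ we have $c\le B_{\tau,K}\tau$, so $M\sqrt{2cN}\le M\sqrt{2B_{\tau,K}N\tau}$; and a union bound over the at most $N$ BAI calls, each succeeding with probability $1-\delta/N$, contributes the additive $\delta N\tau$ term from failure events (on which the per-task regret is trivially at most $\tau$), as already argued in the paragraph following~\eqref{eq:cost-to-go-def}.

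For step (i), Assumption~\ref{assumption:unique-identification} implies that each $S_n^*$ is a singleton and that the total number of distinct optimal arms is at most $M$. Hence the state $\cI$ enters the recursion only through $m=|\cI|$, and once $m=M$ the adversary can no longer force a miss, giving the boundary value $V_n(M)=(N-n)\Cgood$. With $V_N(\cdot)=0$ as the terminal condition, I would prove by backward induction on $n$ the claim
\begin{equation*}
V_n(m) \le (N-n)\Cgood + (M-m)\sqrt{2c(N-n)}.
\end{equation*}

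For the inductive step, expanding~\eqref{eq:cost-to-go-def} and collecting terms rewrites the bracket as
\begin{equation*}
\Cgood + p(\Cinfo-\Cgood) + V_{n+1}(m) + q\bigl[(1-p)(\Cbad-\Cgood)-p\Delta_n\bigr],
\end{equation*}
where $\Delta_n \doteq V_{n+1}(m)-V_{n+1}(m+1)$, which the inductive hypothesis bounds as $\Delta_n\le\sqrt{2c(N-n-1)}$. Zeroing the coefficient of $q$ gives the interior saddle point
\begin{equation*}
p_n^{\star} = \frac{\Cbad-\Cgood}{\Cbad-\Cgood+\sqrt{2c(N-n-1)}} = \Theta(1/\sqrt{N-n}),
\end{equation*}
matching the prescription $p_n=\Theta(1/\sqrt{N-n})$ in the statement. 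Substituting $p_n^{\star}$ back and using $p_n^{\star}(\Cinfo-\Cgood)\le c/\sqrt{2c(N-n-1)}=\sqrt{c/(2(N-n-1))}$, the induction reduces to the elementary inequality
\begin{equation*}
\sqrt{c/(2(N-n-1))} \le (M-m)\sqrt{2c}\bigl[\sqrt{N-n}-\sqrt{N-n-1}\bigr],
\end{equation*}
which I would verify using $\sqrt{N-n}-\sqrt{N-n-1}=1/(\sqrt{N-n}+\sqrt{N-n-1})$ together with $M-m\ge 1$ (for $m<M$).

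The main technical obstacle I expect is the tightness of this last algebraic step: the ansatz just barely closes, so care is needed for small values of $N-n$, where a slightly inflated leading constant or a separate crude bound on the final $O(1)$ tasks may be required to cover the case $M-m=1$. Once the induction is in place, evaluating at $(n,m)=(0,0)$ yields $V_0(\emptyset)\le N\Cgood+M\sqrt{2cN}$, and combining with step (ii) gives the stated bound $\reg_T\le NB_{\tau,M}+M\sqrt{B_{\tau,K}N\tau}+\delta N\tau$ after absorbing absolute constants.
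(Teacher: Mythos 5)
Your overall strategy---reduce to the minimax game \eqref{eq:cost-to-go-def}, solve for the saddle point, run a backward induction, then instantiate $\Cinfo=B_{\tau,K}$, $\Cgood=B_{\tau,M}$, $\Cbad=\tau$ and union-bound over BAI failures---is the same as the paper's. The difference, and the problem, is the choice of inductive invariant. Writing $c=(\Cinfo-\Cgood)(\Cbad-\Cgood)$ as you do, you induct on $V_n(m)$ itself via the ansatz $V_n(m)\le(N-n)\Cgood+(M-m)\sqrt{2c(N-n)}$, and at the inductive step you must upper-bound the per-round surplus $\frac{c}{\Cbad-\Cgood+\Delta_n}$, or equivalently certify that your chosen $p_n^{\star}$ makes the coefficient of $q$ non-positive. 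Both require a \emph{lower} bound on $\Delta_n=V_{n+1}(m)-V_{n+1}(m+1)$: if $\Delta_n$ is small, either the learner explores with large probability and pays nearly $\Cinfo-\Cgood$ per round, or the adversary plays $q=1$ and collects $(1-p)(\Cbad-\Cgood)-p\Delta_n>0$. Your induction hypothesis supplies only \emph{upper} bounds on the $V_{n+1}$ values, hence only an upper bound on $\Delta_n$ (and a weaker one than you state: combining the hypothesis with the trivial $V_{n+1}(m+1)\ge(N-n-1)\Cgood$ gives $\Delta_n\le(M-m)\sqrt{2c(N-n-1)}$, not $\sqrt{2c(N-n-1)}$). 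An upper bound on $\Delta_n$ makes $\frac{c}{\Cbad-\Cgood+\Delta_n}$ \emph{larger}, not smaller, so the step "substituting $p_n^{\star}$ back and using $p_n^{\star}(\Cinfo-\Cgood)\le\sqrt{c/(2(N-n-1))}$" is unsupported; this is a genuine gap, not a matter of constants near $n=N$.

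The paper closes the induction by making the difference itself the inductive quantity: with $G_n(s)=V_n(s)-V_n(s+1)$, the exact saddle-point recursion $V_n(s)=V_{n+1}(s)+\Cgood+\frac{c}{\Cbad-\Cgood+G_{n+1}(s)}$ combined with $V_n(s+1)\ge V_{n+1}(s+1)+\Cgood$ yields the self-contained recursion $G_n(s)\le h(G_{n+1}(s))$ for $h(z)=z+\frac{c}{\Cbad-\Cgood+z}$. Since $h$ is nondecreasing (using $\Cinfo-\Cgood\le\Cbad-\Cgood$), an \emph{upper} bound $G_{n+1}(s)\le\sqrt{2c(N-n-1)}$ propagates to $G_n(s)\le\sqrt{2c(N-n)}$, and the theorem follows by telescoping $V_0(0)-V_0(M)=\sum_{s=0}^{M-1}G_0(s)\le M\sqrt{2cN}$ with $V_0(M)=N\Cgood$. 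To repair your argument, either switch to this invariant or augment your induction with a matching lower bound on $V_n(m)$ that controls $\Delta_n$ from below; your step (ii) and the final instantiation are otherwise fine.
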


We prove this in \cref{sec:thm:unique-identification-pf} by solving the min-max problem in \eqref{eq:cost-to-go-def} for $V_n$.
Interestingly, the exploration probability $p_n$ increases as $\Theta(1/\sqrt{N-n})$. This might seem counter-intuitive at first as typically the exploration rate decreases in most online learning algorithms. The intuition is that as $n$ gets closer to $N$, if $s<M$, the adversary does not have much time left to use the remaining budget to make the learner suffer a big cost. Therefore, the adversary needs to increase its probability of choosing a new optimal arm.

\paragraph{The more general case:}
Now we consider the more general case with potentially multiple optimal arms in each task. 
\begin{theoremi}
\label{thm:greedy}
Let $M'=M(1+\log N)$. 
Under \cref{assumption:identification}, the regret of the \greedy~algorithm with exploration probability $p_n=\sqrt{\frac{|S_n| K \tau}{N B_{\tau,K}}}$ for all $n$ is bounded as
\begin{align*}
\reg_T \le N B_{\tau,M'} + M B_{\tau,K} +\sqrt{M K B_{\tau,K} N \tau} + \delta N\tau \;.
\end{align*}
\end{theoremi}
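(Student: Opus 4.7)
My plan is to follow the blueprint of \cref{thm:unique-identification}, modified to handle two new complications in the general case: optimal sets $S^*_n$ may contain more than one arm, and the greedy set-cover routine returns a hitting set of size up to $M(1+\log N) = M'$ rather than exactly $M$. As a preliminary step, a union bound over the at most $N$ calls to \texttt{BAI} shows all of them succeed with probability at least $1-\delta$, contributing the additive $\delta N \tau$ term. From here I condition on this success event, so each $S^*_n$ observed during an $\exr$ round is the true optimal set.

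I then decompose the regret task-by-task into three buckets. If $E_n=\exr$ (with probability $p_n$), the contribution is at most $\Cinfo = B_{\tau,K}$. If $E_n=\ext$ and the hit event $\{S_n \cap S^*_n \ne \emptyset\}$ occurs, the base bandit algorithm pays at most $B_{\tau, |S_n|}$; the standard $\ln$-approximation guarantee for greedy set cover, combined with \cref{assumption:identification} (which guarantees the existence of a hitting set of size $M$ for any $\cI_n$), gives $|S_n| \le M(1+\log|\cI_n|) \le M'$, so the hit cost is at most $B_{\tau, M'}$. Summed over all $N$ tasks this already yields the leading term $N B_{\tau, M'}$. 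If $E_n=\ext$ and the miss event $\{S_n \cap S^*_n = \emptyset\}$ occurs, the contribution is bounded by $\Cbad = \tau$.

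It remains to bound the expected exploration cost $\Cinfo\,\E[\sum_n p_n]$ and the expected miss cost $\tau\,\E[\sum_n (1-p_n)\,\mathbf{1}\{S_n\cap S^*_n=\emptyset\}]$. The key structural observation is that every round in which the learner explores \emph{and} the adversary revealed a miss is informative: the set $S^*_n$ added to $\cI_n$ is disjoint from the current greedy cover $S_n$, so any valid greedy cover of $\cI_n \cup \{S^*_n\}$ must include at least one arm outside $S_n$. A potential-function argument that tracks the learner's progress toward identifying the $M$-element hitting set guaranteed by \cref{assumption:identification} bounds the total number of such miss-explore events by $O(M)$ almost surely. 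Using this budget and plugging in $p_n = \sqrt{|S_n| K \tau / (N B_{\tau, K})}$, which is the AM--GM-balanced choice for trading off exploration cost against exploitation-miss cost, the expected exploration cost becomes $\sqrt{M K B_{\tau, K} N \tau}$ and the residual expected miss cost from the $O(M)$ informative rounds is at most $M \Cinfo = M B_{\tau, K}$. Summing the four pieces gives the claimed bound.

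The hard part is the potential argument in the previous paragraph. In the unique-arm case of \cref{thm:unique-identification} the relevant potential is just the count of distinct optimal arms revealed so far, trivially bounded by $M$. In the general case a miss-exploration may reveal an $S^*_n$ that overlaps arms the learner has already seen yet was still missed by the greedy cover, so the potential must encode progress toward covering all admissible $S^*$'s (equivalently, progress toward approximating the true $M$-hitting set) rather than simply counting arms. Designing this potential so that it grows by at least one on every miss-exploration and is bounded by $M$ rather than $M'$ is what keeps the constant term at $M B_{\tau, K}$ instead of $M' B_{\tau, K}$, avoiding an extra $\log N$ factor there.
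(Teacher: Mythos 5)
Your overall decomposition (union bound over \texttt{BAI} failures giving $\delta N\tau$, the greedy set-cover guarantee giving $|S_n|\le M'$ and hence the leading term $NB_{\tau,M'}$, and the balancing of exploration cost against miss cost via the stated $p_n$) matches the paper's proof. The genuine gap is the step you yourself flag as the hard part: the claim that a potential function bounds the number of miss-and-explore events by $O(M)$. No such potential exists, and the claim is false. Consider $M=1$ with a single arm $a^*$ contained in every $S^*_n$: after the first \exr round the greedy cover is a single arm chosen arbitrarily among the arms covering $\cI_n$, and the adversary can repeatedly reveal sets of the form $\{a^*\}\cup([K]\setminus\{\text{arms already used as covers}\})$ so that each new set misses the current cover while still containing $a^*$. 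Every arm other than $a^*$ can serve as a wrong greedy choice once, so $\Omega(K)$ miss-and-explore events can be forced even though $M=1$. Your potential would have to increase on each of these while staying below $M=1$, which is impossible. The observation that ``any valid cover of $\cI_n\cup\{S^*_n\}$ must include an arm outside $S_n$'' only forces the cover to \emph{change}; it does not prevent the cover from cycling through new wrong arms.

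The paper's proof handles this differently: for each arm $a$ in the true $M$-hitting set, it argues that $a$ can be dodged by the greedy cover at most $K$ times (each dodge consumes a distinct alternative arm), bounding the number of miss-and-explore events by $MK$ and the associated cost by $MKB_{\tau,K}$ rather than $MB_{\tau,K}$. (Indeed, the displayed conclusion of the paper's own proof is $NB_{\tau,M'}+MKB_{\tau,K}+3\sqrt{MK\tau B_{\tau,K}N}$, so the $MB_{\tau,K}$ term in the theorem statement is not what the argument delivers either; your proposal inherits and amplifies this by trying to prove the stronger stated constant.) The remaining $\sqrt{MKB_{\tau,K}N\tau}$ term is obtained in the paper by bounding the expected number of $q=1$ rounds before all cover positions are revealed by $K\sum_{s=1}^{M'}\hat p_s^{-1}\le 2\sqrt{M'KNB_{\tau,K}/\tau}$ and charging $\tau$ to each; your accounting of that term is consistent with this once the miss-and-explore budget is corrected to $MK$. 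To repair your proof you would need to either adopt the paper's per-arm counting argument (accepting the $MKB_{\tau,K}$ term) or supply a genuinely new argument for a smaller budget, which the counterexample above rules out in the worst case.
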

The proof is similar in spirit to that of Theorem~\ref{thm:unique-identification} and is deferred to Appendix~\ref{ap:proof-thm-greedy}. The regret guarantee holds in the realizable setting, but the \greedy~algorithm does not need $M$ as input. The next theorem shows that the regret of \pmml~is bounded as $N B_{\tau,M} + o(N)$, which is smaller than the regret of the \greedy~ algorithm by a factor of $\log N$; note, however, that \pmml is not computationally efficient and also requires $M$ as input. The proof of the theorem is in Appendix~\ref{app:partial-monitoring}.
\begin{theoremi}
\label{thm:PM}
Under \cref{assumption:identification}, with constant exploration probability $p_n=\left(\frac{\tau}{K}\right)^{1/4}\sqrt{\frac{\log K}{N}}$, the regret of the \pmml~ algorithm is bounded as $\reg_T \le N B_{\tau,M} + O(\tau^{3/4}K^{1/4}\sqrt{N M \log K})$.
\end{theoremi}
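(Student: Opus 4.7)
The plan is to decompose the expected regret of \pmml into exploration and exploitation contributions and control the exploitation contribution by a potential argument on $|\cX_n|$. First condition on the event $\mathcal B$ that every \texttt{BAI} call succeeds: by \cref{assumption:identification} and a union bound, $\Pr(\mathcal B^c)\le\delta$, and on $\mathcal B^c$ the total regret is at most $N\tau$; choosing $\delta\le 1/(N\tau)$ absorbs this into the lower-order term. We work on $\mathcal B$ below and write $p\doteq p_n$ (constant).

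For each round $n$, define
\begin{align*}
b_n \doteq \frac{|\{S\in\cX_n: S\cap S_n^*=\emptyset\}|}{|\cX_n|},
\end{align*}
so that a uniform draw $S_n\sim\cX_n$ fails to cover $S_n^*$ with probability $b_n$. Using $\Cinfo = B_{\tau,K}$ (BAI on $K$ arms), $\Cgood = B_{\tau,M}$ (every $S\in\cX_n$ has size $M$, and $S\cap S_n^*\neq\emptyset$ means the best arm of $S$ is globally optimal in task $n$ by \cref{assumption:identification}), and $\Cbad\le\tau$, the conditional expected round-$n$ regret is at most
\begin{align*}
pB_{\tau,K} + (1-p)\bigl((1-b_n)B_{\tau,M}+b_n\tau\bigr)\le pB_{\tau,K} + B_{\tau,M} + b_n\tau.
\end{align*}
Summing over $n$ gives $\reg_T\le NB_{\tau,M} + NpB_{\tau,K} + \tau\sum_{n=1}^N\E[b_n] + \delta N\tau$.

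The crux is bounding $\sum_n\E[b_n]$. Let $\Phi_n = \log|\cX_n|$. \cref{assumption:identification} guarantees that an $M$-subset intersecting every $S_n^*$ always survives, so $\Phi_n\ge 0$, while $\Phi_0 = \log\binom{K}{M}\le M\log(eK/M)$. On exploration rounds $\Phi_{n+1}=\Phi_n+\log(1-b_n)$ by the elimination rule, and on exploitation $\Phi_{n+1}=\Phi_n$. Telescoping and using $\log(1-x)\le -x$ yields
\begin{align*}
\sum_{n=1}^N\mathbf{1}\{E_n=\exr\}\,b_n \;\le\; \Phi_0 - \Phi_N \;\le\; M\log(eK/M).
\end{align*}
Since $E_n\sim\mathrm{Bern}(p)$ is drawn independently of $(\cX_n,S_n^*)$, $\E[\mathbf{1}\{E_n=\exr\}b_n]=p\,\E[b_n]$, and hence $\sum_n\E[b_n]\le M\log(eK/M)/p$.

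Plugging this back in and balancing the two $p$-dependent terms $NpB_{\tau,K}$ and $\tau M\log(eK/M)/p$ picks $p^\star\propto \sqrt{\tau M\log K/(NB_{\tau,K})}$; with $B_{\tau,K}=\tilde O(\sqrt{K\tau})$ this matches (up to constants) the stated $p_n$ and yields $\reg_T\le NB_{\tau,M}+O(\tau^{3/4}K^{1/4}\sqrt{NM\log K})$. The main obstacle is the potential argument: one has to verify that $b_n$ is measurable with respect to the information available before the independent draw of $E_n$, so the decoupling $\E[\mathbf{1}\{E_n=\exr\}b_n]=p\,\E[b_n]$ is legitimate even under a worst-case (realizable) sequence of tasks. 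Once this is in place the remaining calculation is routine tuning.
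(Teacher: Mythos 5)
Your proof is correct, but it reaches the bound by a more direct route than the paper. The paper proves \cref{thm:PM} by casting \pmml as a partial monitoring game, running an exponentially weighted average forecaster on importance-weighted cost estimates $\widehat C_n$, and invoking the ``fast rate'' of EWA when the best expert has zero estimated loss (Theorem~\ref{thm:PM-app}, realizable case), which yields $\sum_n \E(C_n(E_n)) - N\Cgood \le \Cinfo Np + \Cbad\log Z/p$ with $Z=\binom{K}{M}$; it then observes that EWA degenerates to uniform sampling over the surviving subsets, i.e.\ to \pmml. You instead argue directly on the version space: the potential $\Phi_n=\log|\cX_n|$ decreases by $-\log(1-b_n)\ge b_n$ on each \exr round, is bounded below by $0$ under realizability, and starts at $\log\binom{K}{M}$, so $\sum_{n:E_n=\exr}b_n\le M\log(eK/M)$; the independence of the Bernoulli coin $E_n$ from $(\cX_{n-1},S^*_n)$ then decouples this into $\sum_n\E[b_n]\le M\log(eK/M)/p$, which is exactly the role played by the EWA fast rate. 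The two arguments are the same mechanism in different clothing (your potential argument is the standard proof of the logarithmic mistake bound for a halving-type algorithm with a perfect expert), but yours is self-contained, avoids introducing the loss estimator and the EWA machinery, and is tied to the literal algorithm (uniform sampling over survivors) rather than to a forecaster that only coincides with it in the realizable case; the paper's route buys the additional agnostic-case $O(N^{2/3})$ bound of Theorem~\ref{thm:PM-app} for free. One cosmetic remark: your balanced $p^\star$ carries an extra $\sqrt{M}$ relative to the $p_n$ displayed in the theorem statement; the paper's own general tuning $p=\sqrt{\Cbad\log Z/(\Cinfo N)}$ has the same $\sqrt{M}$, so this is an inconsistency in the statement rather than in your derivation, and the final regret bound is unaffected.
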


\begin{remarki}[Connections with partial monitoring games]
The setting of this section can be viewed more generally as a partial monitoring game. Partial monitoring is a general framework in online learning that disentangles rewards and observations (information). 
In our bandit meta-learning problem, different actions of the meta-learner (\textsc{Exr} and \textsc{Ext}) provide different levels of information and have different costs, and the problem can be reduced to a partial monitoring game on $\cX$, the set of $M$-subsets of $[K]$. More details are in Appendix~\ref{app:partial-monitoring}.
\end{remarki}

\section{Related work}
\label{sec:related}

Our bandit solution is based on subset selection, which connects it to many branches of online learning and bandit literature. %

\noindent\textbf{Non-stationary experts/bandits with long-term memory. } The sparse non-stationary bandit problem is the bandit variant of experts problem with small set of optimal arms whose study goes back to \citet{Bousquet-Warmuth-2002}. %
The only result in the bandit setting that we are aware of is the work of \citet{ZLDW-2019}, who show an algorithm for competing against a small set of optimal arms in an adversarial setting with sparse reward vectors. The solution of \citet{ZLDW-2019} has similarities with our approach, as both solutions are reduction-based and employ a meta-learner that plays with base algorithms. Similar to our approach, the meta-learner of \citet{ZLDW-2019} needs to satisfy a static regret guarantee, while the base algorithm needs to satisfy a dynamic regret guarantee. The algorithm of \citet{ZLDW-2019} also requires the number of change points as input and its dynamic regret is
$%
\Tilde O ((s N)^{1/3} (MT)^{2/3} + M \sqrt{s T \log K} + M K^3 \log T) \,,~
$%
where $s=\max_n \|r_n\|_0$ is the number of non-zero elements in reward vectors. Notice that without this sparsity condition, the above dynamic regret is worse than $\Tilde O(\sqrt{KNT})$ regret of EXP3.S. When $s$ is a small constant, the above bound improves upon the regret of EXP3.S when $M^{4/3}(T/N)^{1/3} < K < (TN/M^2)^{1/5}$.

\noindent\textbf{Meta-learning in adversarial bandits.} \citet{balcan2022meta} studies bandit meta-learning problems with adversarial bandit tasks. They introduce a meta-learning algorithm that tunes the initialization and step-size of the online mirror decent base algorithm. \citet{balcan2022meta} show that their algorithm achieves  $\min_{\beta\in(0,1]}\tilde{O}(N\sqrt{H_\beta K^\beta \tau/\beta}+N^{1-\beta/6})$ total regret, where $H_\beta$ is a notion of entropy. When a subset of size $M$ contains the optimal arms of most tasks, then by the choice of $\beta=1/\log(K)$ the regret bound simplifies to $\tilde{O}(N\sqrt{M\tau} + N^{1-1/(6\log K)})$.  
Compared to our results in Section~\ref{sec:known}, their convergence rate of $\tilde{O}(1/N^{1/(6\log K)})$ is much slower than our convergence rate of $\tilde{O}(1/N^{1/3})$. However, their task-averaged regret is $O(\sqrt{M \tau})$, whereas it is $O(M\sqrt{M \tau})$ in our case. Finally, the regret bound of \citet{balcan2022meta} adaptively holds for the best value of $M$, while $M$ is an input to our algorithm.
\citet{osadchiy2022online} further improves the results of \citet{balcan2022meta} under extra identifiability assumptions.  

\noindent\textbf{Slate bandits. } The reduction in Section~\ref{sec:subset} is an instance of slate bandit problems with a non-separable cost function~\citep{DVJ-2019,RAZK-2020,KRS-2010}.
\citet{RAZK-2020} study the problem in the stochastic setting, where the reward parameter is fixed throughout the game. \citet{MM-2019} study a problem that includes the probabilistic maximum coverage as a special case. They obtain problem-dependent logarithmic and problem-independent $O(\sqrt{N})$ regret bounds. However, the feedback structure in this work is richer than our setting. Applied to our problem, they assume that in each round, for each item and task pair, a random variable is observed whose expected value is the probability that the item is the optimal arm in that task.

\noindent\textbf{Meta-learning and bandit meta-learning. }
Meta, Multi-Task, and Transfer Learning \citep{baxter2000model,caruana1997multitask,thrun1996learning} are related machine learning problems concerned with learning a lower dimensional subspace across tasks. In that sense, our work is connected to other theoretical studies \citep{franceschi2018bilevel,denevi2018incremental,DCSP-2018,DCGP-2019,KSSKO-2020,khodak2019provable,TJJ-2021} though indeed we focus on the bandit learning setting. Various other ways of modelling structure have been proposed and studied in bandit meta-learning. A special case of our problem was studied by \citet{ALM-2013} where $K$-armed bandit problems are sampled from a prior over a finite set of tasks. \citet{PSJO-2021} consider a continual learning setting where  the bandit environment changes under a Lipschitz condition. \citet{kveton-2020} observe that the hyperparameters of bandit algorithms can be learned by gradient descent across tasks. Learning regularization for bandit algorithms \citep{KKZHMBS-2021,CLP-2020} are also proposed, building on the biased regularization ideas from \citet{baxter2000model}. Interestingly, these contextual problems are also connected with latent and clustering of bandit models \citep{MM-2014,GLZ-2014,HKZCAB-2020,HKZCAGB-2020}.

\noindent\textbf{Bandits with very large action spaces. } 
As $K$ grows very large, our bandit meta-learning problem is akin to infinitely many armed bandits \citep{BCZHS-1997,WAM-2008,BP-2013,CV-2015,chan2020infinite} and countable-armed bandits~\citep{Kalvit-Zeevi-2020} though these settings do not have a meta-learning aspect. 

\section{Experiments}

In this section, we study the performance of our algorithms on synthetic environments. 
The experiments include: 1) \greedy
~\footnote{\pmml~is computationally too expensive so we only run it on smaller settings in \cref{sec:FurtherExperiments}.}
from \cref{sec:greedy}, 2) \cref{alg:BanditOG}, \BOG, 3) \MOSS which is agnostic MOSS \citep{AB-2009} running independently on the tasks without any knowledge of the optimal $M$-subset, 4) \OptMoss, an oracle MOSS that plays only the arms in the optimal $M$-subset, and its performance constitutes an empirical lower bound on the achievable regret, and 5) \OGO, which is \BOG with the choice of $\tau'=\tau$ and optimized $\gamma$. Error bars are $\pm1$ standard deviation computed over $5$ independent runs.

We study the impact of four variables on the regret: number of tasks $N$, length of each task $\tau$, number of arms in each task $K$, and the optimal subset size $M$. To do so, we fix a default setting of $(N,\tau,K,M)$ and for each experiment we let one of these parameters vary. The problems are generated by an \emph{oblivious} adversary (see \cref{sec:FurtherExperiments} for further details). 

\cref{fig:all} demonstrates the impact of $N$ and $M$. Further experiments in \cref{sec:FurtherExperiments} illustrate the effect of all four variables including $\tau$ and $K$. 
Under \cref{assumption:identification} (left two plots), \greedy~outperforms all methods with a regret close to that of the oracle \OptMoss. When this assumption does not hold (right two plots), \BOG outperforms the other algorithms, while \greedy naturally has high variance.

\begin{figure*}[htb!]
    \centering
    \begin{minipage}{.24\textwidth}
        \includegraphics[width=\textwidth]{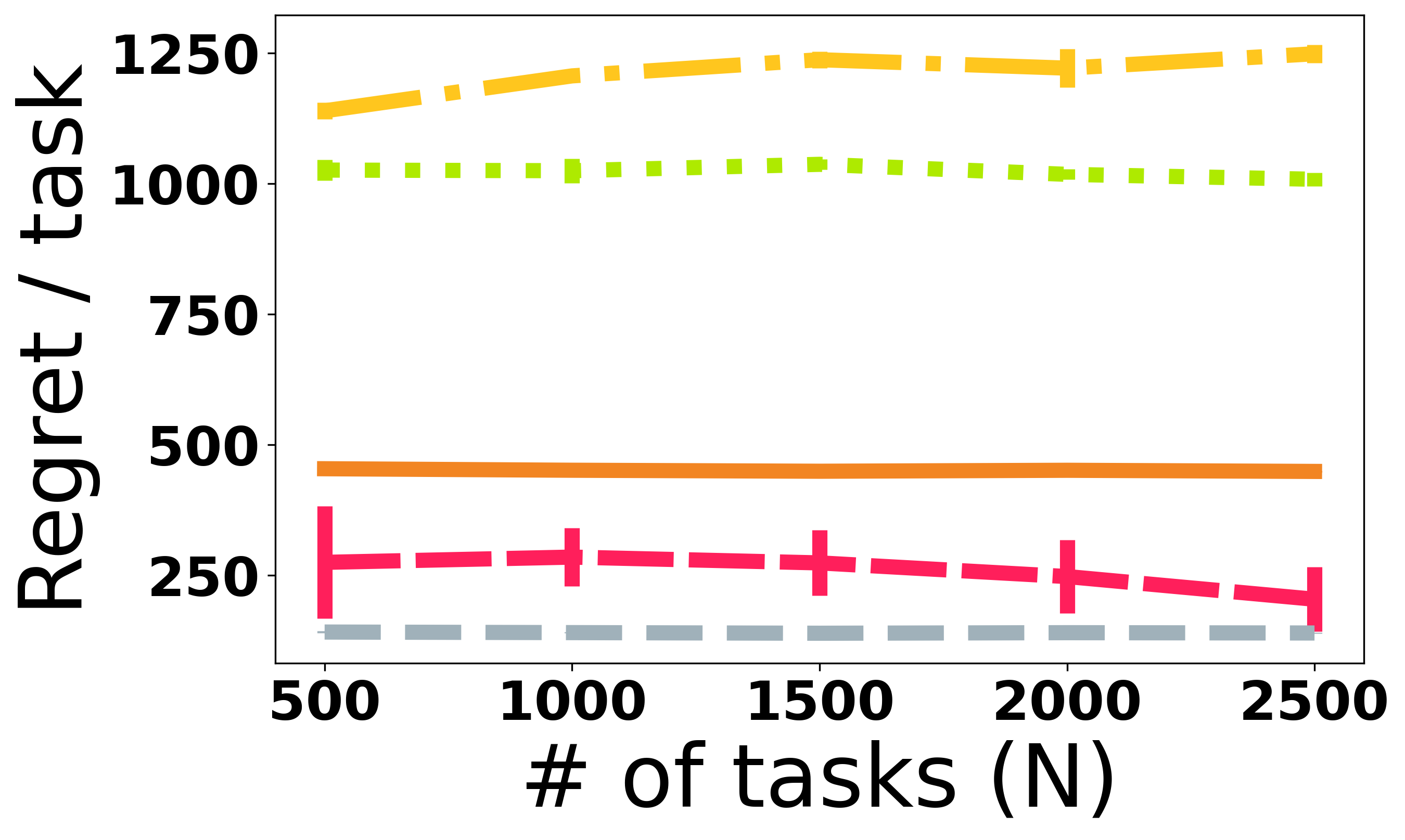}
    \end{minipage}
    \begin{minipage}{.24\textwidth}
        \includegraphics[width=\textwidth]{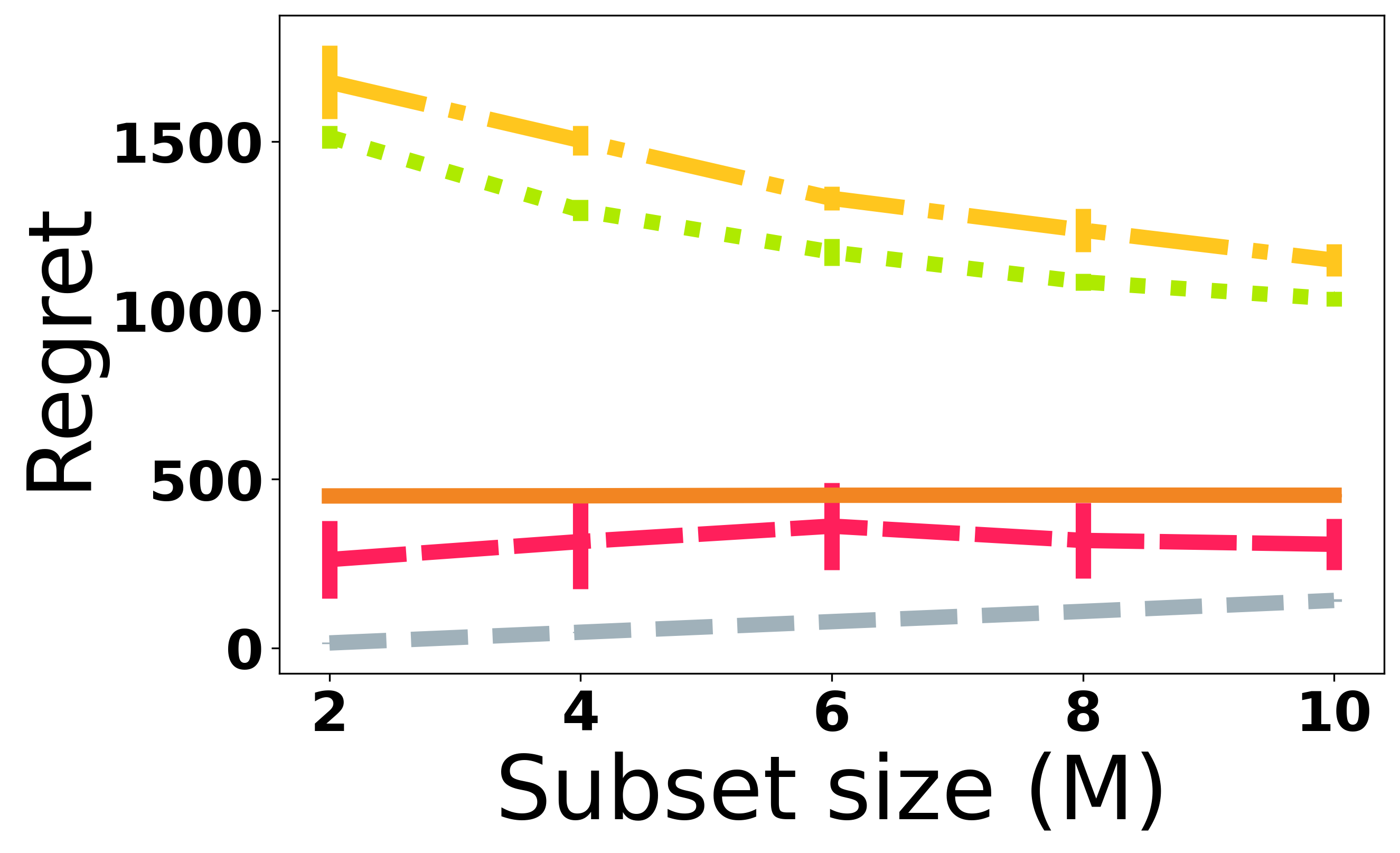}
    \end{minipage}
    \begin{minipage}{.24\textwidth}
        \includegraphics[width=\textwidth]{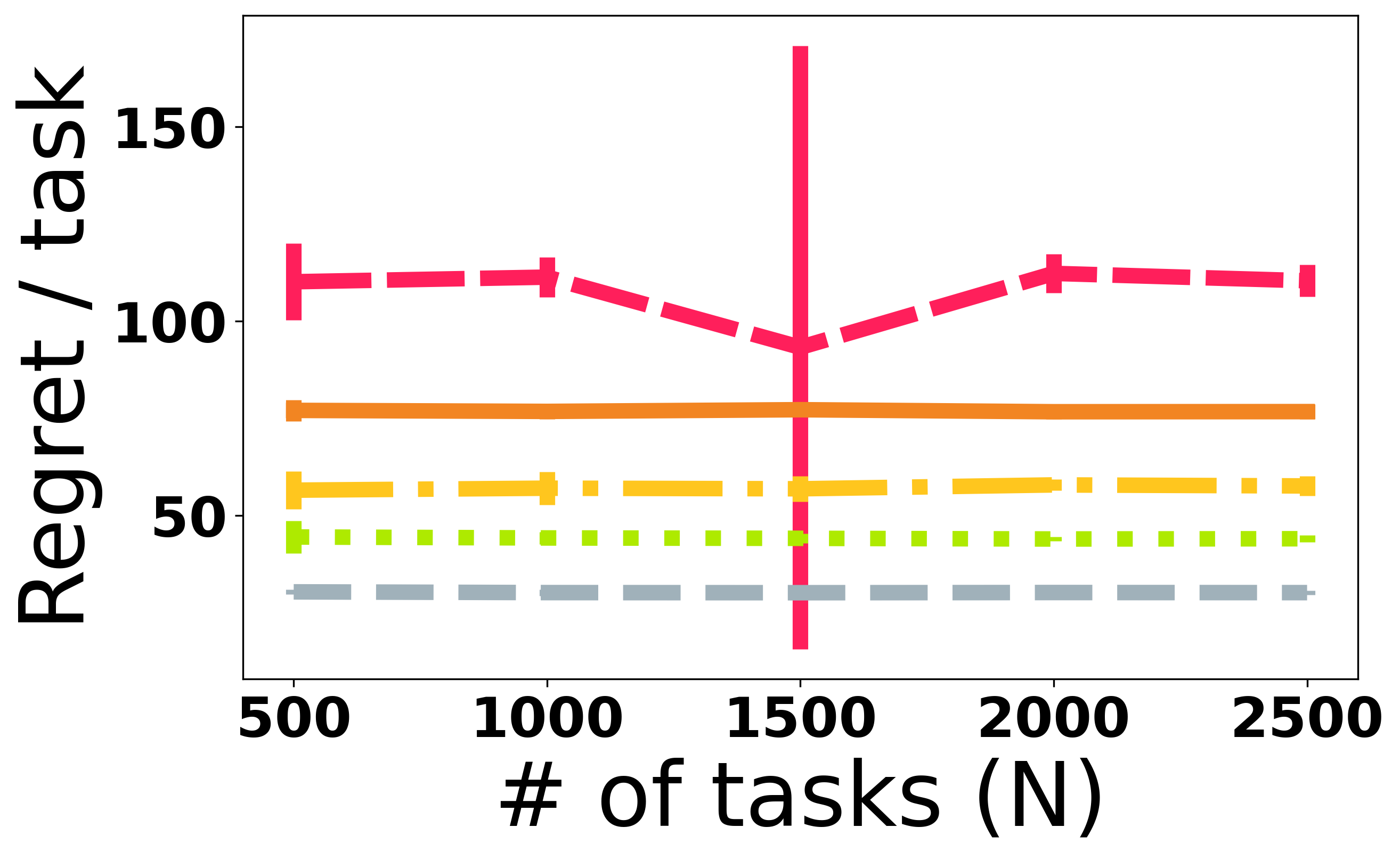}
    \end{minipage}
    \begin{minipage}{.24\textwidth}
        \includegraphics[width=\textwidth]{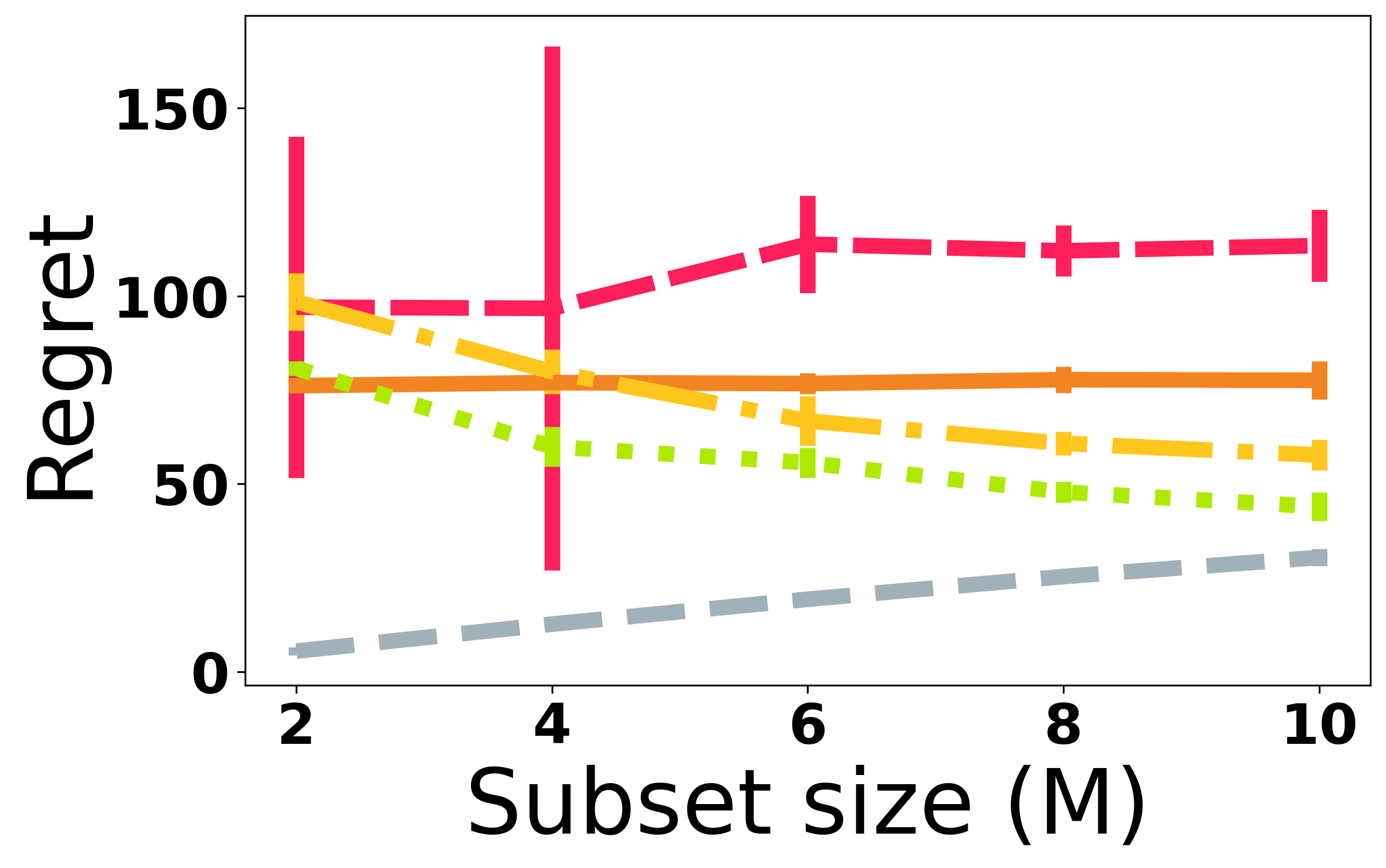}
    \end{minipage}
        \includegraphics[width=\textwidth]{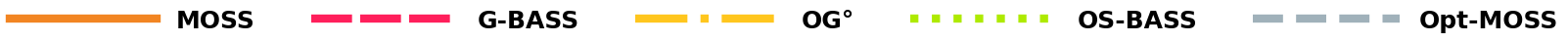}
    \vspace{-0.8cm}
    \caption{
    Default setting: $\setupOne$.
    In the right two plots $\tau=450$.
    Left to Right: Regret as a function of $N$ and $M$ under \cref{assumption:identification}. Regret as a function of $N$ and $M$ without \cref{assumption:identification}.
    }
    \label{fig:all}
\end{figure*}

\section{Discussion and Future Work}\label{sec:conclude}
We analyze a problem of $N$ tasks of $K$-armed bandits arriving sequentially, where the adversary is constrained to choose the optimal arm of each task in a smaller subset of $M$ arms. We consider two settings: bandit meta-learning and non-stationary bandits. We design an algorithm based on a reduction to bandit submodular optimization, and prove that its regret is $\tilde{O}(M N^{-1/3}T (\log K)^{1/3} + M^{3/2} \sqrt{N T})$. Under additional identifiability assumptions, we develop a meta-learning algorithm with an improved regret.

For the general non-stationary setting, we assume $N$ is known. Relaxing this assumption adaptively is left for further work. %

\bibliography{refs}

\begin{thebibliography}{50}
\providecommand{\natexlab}[1]{#1}
\providecommand{\url}[1]{\texttt{#1}}
\expandafter\ifx\csname urlstyle\endcsname\relax
  \providecommand{\doi}[1]{doi: #1}\else
  \providecommand{\doi}{doi: \begingroup \urlstyle{rm}\Url}\fi

\bibitem[Audibert and Bubeck(2009)]{AB-2009}
Jean-Yves Audibert and S\'{e}bastien Bubeck.
\newblock Minimax policies for adversarial and stochastic bandits.
\newblock In \emph{Proceedings of the 22nd Annual Conference on Learning Theory
  (COLT)}, 2009.

\bibitem[Auer and Ortner(2010)]{AO-2010}
P.~Auer and R.~Ortner.
\newblock Ucb revisited: Improved regret bounds for the stochastic multi-armed
  bandit problem.
\newblock \emph{Periodica Mathematica Hungarica}, pages 55–--65, 2010.

\bibitem[Auer et~al.(1995)Auer, Cesa-Bianchi, Freund, and Schapire]{ACFS-1995}
P.~Auer, N.~Cesa-Bianchi, Y.~Freund, and R.~E. Schapire.
\newblock Gambling in a rigged casino: The adversarial multi-armed bandit
  problem.
\newblock In \emph{36th Annual Symposium on Foundations of Computer Science},
  1995.

\bibitem[Auer et~al.(2002)Auer, Cesa-Bianchi, Freund, and Schapire]{ACFS-2002}
P.~Auer, N.~Cesa-Bianchi, Y.~Freund, and R.~E. Schapire.
\newblock The nonstochastic multiarmed bandit problem.
\newblock \emph{SIAM Journal on Computing}, 2002.

\bibitem[Auer et~al.(2019{\natexlab{a}})Auer, Chen, Gajane, Lee, Luo, Ortner,
  and Wei]{Auer-2019}
Peter Auer, Yifang Chen, Pratik Gajane, Chung-Wei Lee, Haipeng Luo, Ronald
  Ortner, and Chen-Yu Wei.
\newblock Achieving optimal dynamic regret for non-stationary bandits without
  prior information.
\newblock In \emph{COLT}, 2019{\natexlab{a}}.

\bibitem[Auer et~al.(2019{\natexlab{b}})Auer, Gajane, and
  Ortner]{ADSWITCH-auer19a}
Peter Auer, Pratik Gajane, and Ronald Ortner.
\newblock Adaptively tracking the best bandit arm with an unknown number of
  distribution changes.
\newblock In \emph{Proceedings of the Thirty-Second Conference on Learning
  Theory}, pages 138--158. PMLR, 25--28 Jun 2019{\natexlab{b}}.
\newblock URL \url{http://proceedings.mlr.press/v99/auer19a.html}.

\bibitem[Azar et~al.(2013)Azar, Lazaric, and Brunskill]{ALM-2013}
Mohammad~Gheshlaghi Azar, Alessandro Lazaric, and Emma Brunskill.
\newblock Sequential transfer in multi-armed bandit with finite set of models.
\newblock In \emph{NIPS}, 2013.

\bibitem[Balcan et~al.(2022)Balcan, Harris, Khodak, and Wu]{balcan2022meta}
Maria-Florina Balcan, Keegan Harris, Mikhail Khodak, and Zhiwei~Steven Wu.
\newblock Meta-learning adversarial bandits.
\newblock \emph{arXiv preprint arXiv:2205.14128}, 2022.

\bibitem[Baxter(2000)]{baxter2000model}
Jonathan Baxter.
\newblock A model of inductive bias learning.
\newblock \emph{Journal of artificial intelligence research}, 12:\penalty0
  149--198, 2000.

\bibitem[Berry et~al.(1997)Berry, Chen, Zame, Heath, and Shepp]{BCZHS-1997}
Donald~A. Berry, Robert~W. Chen, Alan Zame, David~C. Heath, and Larry~A. Shepp.
\newblock Bandit problems with infinitely many arms.
\newblock \emph{The Annals of Statistics}, 1997.

\bibitem[Bonald and Proutiere(2013)]{BP-2013}
T.~Bonald and A.~Proutiere.
\newblock Two-target algorithms for infinite-armed bandits with {B}ernoulli
  rewards.
\newblock In \emph{In Advances in Neural Information Processing Systems}, 2013.

\bibitem[Bousquet and Warmuth(2002)]{Bousquet-Warmuth-2002}
Olivier Bousquet and Manfred~K. Warmuth.
\newblock Tracking a small set of experts by mixing past posteriors.
\newblock \emph{Journal of Machine Learning Research}, 2002.

\bibitem[Carpentier and Valko(2015)]{CV-2015}
Alexandra Carpentier and Michal Valko.
\newblock Simple regret for infinitely many armed bandits.
\newblock In \emph{ICML}, 2015.

\bibitem[Caruana(1997)]{caruana1997multitask}
Rich Caruana.
\newblock Multitask learning.
\newblock \emph{Machine learning}, 28\penalty0 (1):\penalty0 41--75, 1997.

\bibitem[Cella et~al.(2020)Cella, Lazaric, and Pontil]{CLP-2020}
Leonardo Cella, Alessandro Lazaric, and Massimiliano Pontil.
\newblock Meta-learning with stochastic linear bandits.
\newblock \emph{Arxiv}, 2020.

\bibitem[Cesa-Bianchi and Lugosi(2006)]{PLG}
Nicolo Cesa-Bianchi and G\'abor Lugosi.
\newblock \emph{Prediction, learning, and games}.
\newblock Cambridge University Press, 2006.

\bibitem[Chan and Hu(2020)]{chan2020infinite}
Hock~Peng Chan and Shouri Hu.
\newblock Infinite arms bandit: Optimality via confidence bounds, 2020.

\bibitem[Chen et~al.(2019)Chen, Lee, Luo, and Wei]{CLLW-2019}
Yifang Chen, Chung-Wei Lee, Haipeng Luo, and Chen-Yu Wei.
\newblock A new algorithm for non-stationary contextual bandits: Efficient,
  optimal and parameter-free.
\newblock In \emph{COLT}, 2019.

\bibitem[Denevi et~al.(2018{\natexlab{a}})Denevi, Ciliberto, Stamos, and
  Pontil]{DCSP-2018}
Giulia Denevi, Carlo Ciliberto, Dimitris Stamos, and Massimiliano Pontil.
\newblock Learning to learn around a common mean.
\newblock In \emph{Advances in Neural Information Processing Systems},
  volume~31, 2018{\natexlab{a}}.

\bibitem[Denevi et~al.(2018{\natexlab{b}})Denevi, Ciliberto, Stamos, and
  Pontil]{denevi2018incremental}
Giulia Denevi, Carlo Ciliberto, Dimitris Stamos, and Massimiliano Pontil.
\newblock Incremental learning-to-learn with statistical guarantees,
  2018{\natexlab{b}}.

\bibitem[Denevi et~al.(2019)Denevi, Ciliberto, Grazzi, and Pontil]{DCGP-2019}
Giulia Denevi, Carlo Ciliberto, Riccardo Grazzi, and Massimiliano Pontil.
\newblock Learning-to-learn stochastic gradient descent with biased
  regularization.
\newblock In \emph{Proceedings of the 36th International Conference on Machine
  Learning}, 2019.

\bibitem[Dimakopoulou et~al.(2019)Dimakopoulou, Vlassis, and Jebara]{DVJ-2019}
M.~Dimakopoulou, N.~Vlassis, and T.~Jebara.
\newblock Marginal posterior sampling for slate bandits.
\newblock In \emph{IJCAI}, 2019.

\bibitem[Feige et~al.(2004)Feige, Lov{\'a}sz, and
  Tetali]{feige2004approximating}
Uriel Feige, L{\'a}szl{\'o} Lov{\'a}sz, and Prasad Tetali.
\newblock Approximating min sum set cover.
\newblock \emph{Algorithmica}, 40\penalty0 (4):\penalty0 219--234, 2004.

\bibitem[Franceschi et~al.(2018)Franceschi, Frasconi, Salzo, Grazzi, and
  Pontil]{franceschi2018bilevel}
Luca Franceschi, Paolo Frasconi, Saverio Salzo, Riccardo Grazzi, and
  Massimilano Pontil.
\newblock Bilevel programming for hyperparameter optimization and
  meta-learning, 2018.

\bibitem[Gentile et~al.(2014)Gentile, Li, and Zappella]{GLZ-2014}
Claudio Gentile, Shuai Li, and Giovanni Zappella.
\newblock Online clustering of bandits.
\newblock In \emph{Proceedings of the 31st International Conference on Machine
  Learning}, 2014.

\bibitem[Hong et~al.(2020{\natexlab{a}})Hong, Kveton, Zaheer, Chow, Ahmed, and
  Boutilier]{HKZCAB-2020}
Joey Hong, Branislav Kveton, Manzil Zaheer, Yinlam Chow, Amr Ahmed, and Craig
  Boutilier.
\newblock Latent bandits revisited.
\newblock In \emph{NeurIPS}, 2020{\natexlab{a}}.

\bibitem[Hong et~al.(2020{\natexlab{b}})Hong, Kveton, Zaheer, Chow, Ahmed,
  Ghavamzadeh, and Boutilier]{HKZCAGB-2020}
Joey Hong, Branislav Kveton, Manzil Zaheer, Yinlam Chow, Amr Ahmed, Mohammad
  Ghavamzadeh, and Craig Boutilier.
\newblock Non-stationary latent bandits.
\newblock \emph{arXiv}, 2020{\natexlab{b}}.

\bibitem[Kale et~al.(2010)Kale, Reyzin, and Schapire]{KRS-2010}
S.~Kale, L.~Reyzin, and R.~E. Schapire.
\newblock Non-stochastic bandit slate problems.
\newblock In \emph{NIPS}, 2010.

\bibitem[Kalvit and Zeevi(2020)]{Kalvit-Zeevi-2020}
Anand Kalvit and Assaf Zeevi.
\newblock From finite to countable-armed bandits.
\newblock In \emph{Conference on Neural Information Processing Systems}, 2020.

\bibitem[Khodak et~al.(2019)Khodak, Balcan, and Talwalkar]{khodak2019provable}
Mikhail Khodak, Maria-Florina Balcan, and Ameet Talwalkar.
\newblock Provable guarantees for gradient-based meta-learning.
\newblock In \emph{Proceedings of the 36th International Conference on Machine
  Learning}, pages 424--433, 2019.

\bibitem[Kong et~al.(2020)Kong, Somani, Song, Kakade, and Oh]{KSSKO-2020}
Weihao Kong, Raghav Somani, Zhao Song, Sham Kakade, and Sewoong Oh.
\newblock Meta-learning for mixed linear regression.
\newblock In \emph{Proceedings of the 37th International Conference on Machine
  Learning}, 2020.

\bibitem[Kveton et~al.(2020)Kveton, Mladenov, Hsu, Zaheer, Szepesv\'{a}ri, and
  Boutilier]{kveton-2020}
Branislav Kveton, Martin Mladenov, Chih-Wei Hsu, Manzil Zaheer, Csaba
  Szepesv\'{a}ri, and Craig Boutilier.
\newblock Differentiable meta-learning in contextual bandits.
\newblock \emph{arXiv:2006.05094v1}, 2020.

\bibitem[Kveton et~al.(2021)Kveton, Konobeev, Zaheer, wei Hsu, Mladenov,
  Boutilier, and Szepesvari]{KKZHMBS-2021}
Branislav Kveton, Mikhail Konobeev, Manzil Zaheer, Chih wei Hsu, Martin
  Mladenov, Craig Boutilier, and Csaba Szepesvari.
\newblock Meta-thompson sampling.
\newblock \emph{Arxiv}, 2021.

\bibitem[Kwon et~al.(2017)Kwon, Perchet, and
  Vernade]{Kwon-Perchet-Vernade-2017}
Joon Kwon, Vianney Perchet, and Claire Vernade.
\newblock Sparse stochastic bandits.
\newblock In \emph{2017 Conference on Learning Theory (COLT)}, volume~65, 2017.

\bibitem[Lattimore and Szepesv{\'a}ri(2020)]{lattimore-Bandit}
T.~Lattimore and C.~Szepesv{\'a}ri.
\newblock \emph{Bandit Algorithms}.
\newblock Cambridge University Press, 2020.
\newblock ISBN 9781108687492.
\newblock URL \url{https://books.google.com/books?id=xe3vDwAAQBAJ}.

\bibitem[Lattimore and Szepesvari(2020)]{LS-2020}
Tor Lattimore and Csaba Szepesvari.
\newblock \emph{Bandit Algorithms}.
\newblock Cambridge University Press, 2020.

\bibitem[Littlestone and Warmuth(1994)]{LITTLESTONE-1994-RMW}
N.~Littlestone and M.K. Warmuth.
\newblock The weighted majority algorithm.
\newblock \emph{Information and Computation}, 108\penalty0 (2):\penalty0
  212--261, 1994.
\newblock ISSN 0890-5401.
\newblock \doi{https://doi.org/10.1006/inco.1994.1009}.
\newblock URL
  \url{https://www.sciencedirect.com/science/article/pii/S0890540184710091}.

\bibitem[Maillard and Mannor(2014)]{MM-2014}
Odalric-Ambrym Maillard and Shie Mannor.
\newblock Latent bandits.
\newblock In \emph{ICML}, 2014.

\bibitem[Merlis and Mannor(2019)]{MM-2019}
Nadav Merlis and Shie Mannor.
\newblock Batch-size independent regret bounds for the combinatorial
  multi-armed bandit problem.
\newblock In \emph{COLT}, 2019.

\bibitem[Osadchiy et~al.(2022)Osadchiy, Levy, and Meir]{osadchiy2022online}
Ilya Osadchiy, Kfir~Y Levy, and Ron Meir.
\newblock Online meta-learning in adversarial multi-armed bandits.
\newblock \emph{arXiv preprint arXiv:2205.15921}, 2022.

\bibitem[Park et~al.(2021)Park, Shin, Jun, and Ok]{PSJO-2021}
Hyejin Park, Seiyun Shin, Kwang-Sung Jun, and Jungseul Ok.
\newblock Transfer learning in bandits with latent continuity.
\newblock \emph{arXiv}, 2021.

\bibitem[Radlinski et~al.(2008)Radlinski, Kleinberg, and Joachims]{RKJ-2008}
Filip Radlinski, Robert Kleinberg, and Thorsten Joachims.
\newblock Learning diverse rankings with multi-armed bandits.
\newblock In \emph{ICML}, 2008.

\bibitem[Rhuggenaath et~al.(2020)Rhuggenaath, Akcay, Zhang, and
  Kayma]{RAZK-2020}
Jason Rhuggenaath, Alp Akcay, Yingqian Zhang, and Uzay Kayma.
\newblock Algorithms for slate bandits with non-separable reward functions.
\newblock \emph{arXiv}, 2020.

\bibitem[Russac et~al.(2019)Russac, Vernade, and
  Capp\'{e}]{Russac-Vernade-Cappe-2019}
Yoan Russac, Claire Vernade, and Olivier Capp\'{e}.
\newblock Weighted linear bandits for non-stationary environments.
\newblock In \emph{NeurIPS}, 2019.

\bibitem[Streeter and Golovin(2007)]{streeter2007online}
Matthew Streeter and Daniel Golovin.
\newblock An online algorithm for maximizing submodular functions.
\newblock Technical report, Carnegie Mellon University, Pittsburgh, PA, School
  of computer science, 2007.
\newblock URL \url{https://apps.dtic.mil/sti/pdfs/ADA476871.pdf}.
\newblock Available at \url{https://apps.dtic.mil/sti/pdfs/ADA476871.pdf}.

\bibitem[Thrun(1996)]{thrun1996learning}
Sebastian Thrun.
\newblock Is learning the n-th thing any easier than learning the first?
\newblock In \emph{Advances in neural information processing systems}, pages
  640--646. MORGAN KAUFMANN PUBLISHERS, 1996.

\bibitem[Tripuraneni et~al.(2021)Tripuraneni, Jin, and Jordan]{TJJ-2021}
Nilesh Tripuraneni, Chi Jin, and Michael~I. Jordan.
\newblock Provable meta-learning of linear representations.
\newblock \emph{Arxiv}, 2021.

\bibitem[Wang et~al.(2008)Wang, Audibert, and Munos]{WAM-2008}
Yizao Wang, Jean-Yves Audibert, and R\'{e}mi Munos.
\newblock Algorithms for infinitely many-armed bandits.
\newblock In \emph{NIPS}, 2008.

\bibitem[Wei and Luo(2021)]{WL-2021}
Chen-Yu Wei and Haipeng Luo.
\newblock Non-stationary reinforcement learning without prior knowledge: An
  optimal black-box approach.
\newblock \emph{COLT}, 2021.

\bibitem[Zheng et~al.(2019)Zheng, Luo, Diakonikolas, and Wang]{ZLDW-2019}
Kai Zheng, Haipeng Luo, Ilias Diakonikolas, and Liwei Wang.
\newblock Equipping experts/bandits with long-term memory.
\newblock In \emph{NeurIPS}, 2019.

\end{thebibliography}

\newpage
\appendix
\onecolumn

\newpage

\section{Submodular functions}\label{app:submod}

In this section, we verify that $f(S)\doteq f(r,S)=\max_{a\in S}r(a)$ is a submodular function. It is obviously monotone. We have
\begin{align*}
    f(S_1\cup S_2\cup \{a\})-f(S_1\cup S_2)\leq f(S_1\cup \{a\})-f(S_1)
\end{align*}
This is because (1) if $r(a)\leq f(S_1)$ then the inequality holds as $f(S_1\cup \{a\})=f(S_1)$ and $f(S_1\cup S_2\cup \{a\})=f(S_1\cup S_2)$. (2) if $r(a) > f(S_1)$ then (2.i) if $r(a)>f(S_1\cup S_2)$ the inequality holds as $r(a)-f(S_1\cup S_2)\leq r(a)-f(S_1)\iff f(S_1\cup S_2)\geq f(S_1)$, which holds by monotonicity of $f$,  (2.ii) if $r(a)\leq f(S_1\cup S_2)$ then $f(S_1\cup S_2\cup \{a\})=f(S_1\cup S_2)$ and the inequality simplifies to $0\leq r(a)-f(S_1)$ which holds as we assumed $r(a) > f(S_1)$ in (2).

\section{Proofs of Section~\ref{sec:subset}}
\label{app:metalearning2subset}

\begin{proof}[Proof of Lemma~\ref{lemma:metalearning2subset}]
We have that
\begin{align*}
R_T &= \sup_{\substack{r_{1},\dots,r_{N}\\ \in [0,1]^{K}}} \max_{\substack{a_1,\dots, a_N\in [K],\\ |\{a_1,\dots,a_N\}|\le M}}\; \E\left[\sum_{n=1}^N \sum_{t=1}^{\tau_n} (r_n(a_n) - r_{n}(A_{n,t}))\right] 
\\
&= \sup_{\substack{r_{1},\dots,r_{N}\\ \in [0,1]^{K}}} \max_{S \in\cS}\; \E\left[ \sum_{n=1}^{N'} \sum_{u=1}^{N_{n}} \sum_{t=1}^{\tau_{n,u}} (\max_{a\in S}(r_{n,u}(a) - r_{n,u}(A_{n,u,t})) \right] 
\\
&= \sup_{\substack{r_{1},\dots,r_{N}\\ \in [0,1]^{K}}} \max_{S \in\cS}\; \E\left[ \sum_{n=1}^{N'} \sum_{u=1}^{N'_n} \left(\sum_{t=1}^{\tau_{n,u}} (\max_{a\in S}r_{n,u}(a) - \max_{a\in S_{n}}r_{n,u}(a))+ \sum_{t=1}^{\tau_{n,u}} (\max_{a\in S_{n}}r_{n,u}(a) - r_{n,u}(A_{n,u,t}))
\right)\right] 
\\
&\le \sup_{(f_{n,u}\in \cF)_{n\in[N'],u\in[N_n]} } \max_{S\in \cS}\; \E\left[ \sum_{n=1}^{N'} \left( \sum_{u=1}^{N_n}  \tau_{n,u} (f_{n,u}(S) - f_{n,u}(S_{n})) + \tau'\varepsilon_{n}\right)\right]
\\
&= \sup_{f_1,\cdots,f_{N'} \in \cF} \max_{S\in \cS}\; \E\left[\sum_{n=1}^{N'} \left(f_n(S) - f_n (S_{n}) + B_{\tau',N_n,M} \right) \right] \;.
\end{align*}
\end{proof}

\section{Proofs of Section~\ref{sec:unknown}}

First, we present the relevant results from \citet{streeter2007online} with appropriate modifications. We start with the regret analysis of the \OG algorithm, which is designed to solve the online submodular maximization in the full feedback model. 

\subsection{The \OG algorithm}\label{sec:OGBase}

For a submodular function $g$, consider an ordered set of actions $\bar{S}=\langle\bar{a}_1,\bar{a}_2,\cdots\rangle$ that satisfies the following \emph{greedy} condition for any $j$: %
\begin{align}
g(\bar{S}_j\cup \bar{a}_j)-g(\bar{S}_j) \geq \max_{a\in[K]}\{g(\bar{S}_j\cup a)-g(\bar{S}_j) \}-\alpha_j,
 \label{eq:grd}
\end{align}
where $\alpha_j$ are some positive error terms. Let $\bar{S}_0=\emptyset$, $\bar{S}_j=\langle\bar{a}_1,\bar{a}_2,\cdots,\bar{a}_{j-1}\rangle$,
and for a sequence of actions $S \subset [K]$, let $S_{\langle M\rangle}$ be the set of actions in $S$ truncated at the $M$'th action.
The following result shows near-optimality of $\bar{S}$ as constructed above. Recall that $\Mlogn=\mppr$.

\begin{theoremi}[Based on \citet{streeter2007online}, Theorem~6]
\label{thm:6}
Consider the greedy solution in \cref{eq:grd}. Then
\[
g(\bar{S}_{\langle \Mlogn\rangle})> \left(1-\frac{1}{N'} \right)\max_{S\in\ca{S}}g(S_{\langle M\rangle})-\sum_{j=1}^{\Mlogn}\alpha_j \;.
\]
\end{theoremi}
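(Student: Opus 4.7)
The plan is to carry out the standard analysis of the greedy algorithm for monotone submodular maximization, adapted to the setting where we take $\Mlogn = \lceil M\log N'\rceil$ greedy steps to beat the best $M$-subset by a $(1-1/N')$ factor, while carefully tracking the per-step errors $\alpha_j$.

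First I would fix $S^\star \in \arg\max_{S\in\cS} g(S_{\langle M\rangle})$, an $M$-subset achieving the maximum on the right-hand side, and define the ``gap'' $\Delta_j \doteq g(S^\star) - g(\bar{S}_j)$. By monotonicity of $g$, $g(\bar{S}_j \cup S^\star) \ge g(S^\star)$, and by a standard telescoping argument using submodularity,
\begin{align*}
g(\bar{S}_j \cup S^\star) - g(\bar{S}_j)
\le \sum_{a\in S^\star} \bigl[g(\bar{S}_j \cup \{a\}) - g(\bar{S}_j)\bigr]
\le M\cdot \max_{a\in[K]} \bigl[g(\bar{S}_j\cup\{a\}) - g(\bar{S}_j)\bigr].
\end{align*}
Hence $\max_a[g(\bar{S}_j\cup\{a\}) - g(\bar{S}_j)] \ge \Delta_j/M$, and combining this with the greedy condition~\eqref{eq:grd} gives
\begin{align*}
g(\bar{S}_{j+1}) - g(\bar{S}_j) \;\ge\; \frac{\Delta_j}{M} - \alpha_j,
\quad\text{i.e.,}\quad
\Delta_{j+1} \;\le\; \Bigl(1-\tfrac{1}{M}\Bigr)\Delta_j + \alpha_j.
\end{align*}

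Next I would unroll the recursion from $j=0$ to $j=\Mlogn$: since $\Delta_0 = g(S^\star)$ and $(1-1/M)^{\Mlogn - j}\le 1$,
\begin{align*}
\Delta_{\Mlogn} \;\le\; \Bigl(1-\tfrac{1}{M}\Bigr)^{\Mlogn}\! g(S^\star) + \sum_{j=1}^{\Mlogn} \Bigl(1-\tfrac{1}{M}\Bigr)^{\Mlogn-j}\!\alpha_j
\;\le\; \Bigl(1-\tfrac{1}{M}\Bigr)^{\Mlogn}\! g(S^\star) + \sum_{j=1}^{\Mlogn}\alpha_j.
\end{align*}
Using $1-x\le e^{-x}$ and the fact that $\Mlogn = \lceil M\log N'\rceil \ge M\log N'$, we get $(1-1/M)^{\Mlogn}\le e^{-\Mlogn/M}\le 1/N'$. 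Therefore
\begin{align*}
g(S^\star) - g(\bar{S}_{\langle \Mlogn\rangle}) \;\le\; \frac{g(S^\star)}{N'} + \sum_{j=1}^{\Mlogn}\alpha_j,
\end{align*}
which, after rearranging, yields the claimed bound.

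The arguments are essentially routine; the only subtlety is verifying the two ingredients of the ``best marginal gain'' step, namely that monotonicity of $g$ implies $g(\bar{S}_j\cup S^\star)\ge g(S^\star)$ and that submodularity justifies the telescoping inequality for the marginal of $S^\star$ over $\bar{S}_j$. Both follow from the standard definitions, which we have already confirmed for our max-reward $f_n$'s in Appendix~\ref{app:submod}, so the main obstacle is not an obstacle at all but merely the bookkeeping of the exponential decay bound $(1-1/M)^{\Mlogn}\le 1/N'$, which is exactly why $\Mlogn$ is chosen to be $\lceil M\log N'\rceil$ in the algorithm.
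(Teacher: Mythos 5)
Your proposal is correct and follows essentially the same route as the paper: both establish the recursion $\Delta_{j+1}\le(1-\tfrac{1}{M})\Delta_j+\alpha_j$ and unroll it using $(1-1/M)^{\Mlogn}\le 1/N'$. The only cosmetic difference is that you prove the marginal-gain step directly by telescoping over $S^\star$ and submodularity, whereas the paper packages that same inequality as Fact~\ref{fact2} via Lemma~1 of \citet{streeter2007online}.
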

\begin{proof}
Let $C^*=\max_{S\in\ca{S}}g(S_{\langle M\rangle})$ and for any $j$ let $\Delta_j=C^*-g(\bar{S}_j)$. Then, by \Cref{fact2} below, we have $C^*\leq g(\bar{S}_j)+M(s_j+\alpha_j)$. Therefore, $\Delta_j\leq M(s_j+\alpha_j)=M(\Delta_j-\Delta_{j+1}+\alpha_j)$ which means $\Delta_{j+1}\leq \Delta_j(1-\tfrac{1}{M})+\alpha_j$. Unrolling this $\Mlogn$ times (and noting $1-\tfrac{1}{N'}<1$) gives
\begin{align*}
    \Delta_{\Mlogn+1} &\leq \Delta_1\left(1-\frac{1}{M}\right)^{\Mlogn}+\sum_{j=1}^{\Mlogn}\alpha_j
    \\
    &<\Delta_1\tfrac{1}{N'}+\sum_{j=1}^{\Mlogn}\alpha_j
    \leq C^*\tfrac{1}{N'}+\sum_{j=1}^{\Mlogn}\alpha_j.
\end{align*}
This concludes the proof since $C^*-g(\bar{S}_{\Mlogn+1})=\Delta_{\Mlogn+1}$ and $g(\bar{S}_{\Mlogn+1})=g(\bar{S}_{\langle \Mlogn\rangle})$. 
\end{proof}

\begin{fact}[\citet{streeter2007online}, Fact 1]\label{fact2}
For any subset of arms $S$, and any positive integer $j$, and any $t>0$, we have $g(S_{\langle t\rangle})\leq g(\bar{S}_j)+t(s_j+\alpha_j)$ where $s_j=g(\bar{S}_{j+1})-g(\bar{S}_j)$.
\end{fact}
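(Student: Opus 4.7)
The plan is to combine monotonicity, submodularity, and the approximate greedy condition of \eqref{eq:grd} in a telescoping argument. First I would use monotonicity of $g$ to replace $g(S_{\langle t\rangle})$ by $g(\bar{S}_j\cup S_{\langle t\rangle})$, which is at least as large and is convenient because it contains $\bar{S}_j$ as a subset. This lets me write $g(S_{\langle t\rangle})-g(\bar{S}_j)\le g(\bar{S}_j\cup S_{\langle t\rangle})-g(\bar{S}_j)$, and the right-hand side is exactly the quantity I can decompose into marginal contributions.

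Next I would telescope the difference $g(\bar{S}_j\cup S_{\langle t\rangle})-g(\bar{S}_j)$ by enumerating the elements of $S_{\langle t\rangle}$ as $a_1,\dots,a_t$ and writing the sum $\sum_{i=1}^{t}\bigl[g(\bar{S}_j\cup\{a_1,\dots,a_i\})-g(\bar{S}_j\cup\{a_1,\dots,a_{i-1}\})\bigr]$. By submodularity, each term in this sum is bounded by the marginal gain of adding $a_i$ to the smaller set $\bar{S}_j$, i.e., $g(\bar{S}_j\cup\{a_i\})-g(\bar{S}_j)$. So every summand is controlled by a single-element marginal on top of $\bar{S}_j$.

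Now I would invoke the approximate greedy condition \eqref{eq:grd}: because $\bar{a}_j$ is chosen to come within $\alpha_j$ of the best next action for $\bar{S}_j$, every single-element marginal gain $g(\bar{S}_j\cup\{a\})-g(\bar{S}_j)$ is at most $s_j+\alpha_j$, where $s_j=g(\bar{S}_{j+1})-g(\bar{S}_j)$ by definition. Applying this bound to each of the $t$ telescoped summands yields $g(\bar{S}_j\cup S_{\langle t\rangle})-g(\bar{S}_j)\le t(s_j+\alpha_j)$, and combining with the monotonicity step gives exactly $g(S_{\langle t\rangle})\le g(\bar{S}_j)+t(s_j+\alpha_j)$.

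This is essentially the standard ``greedy meets OPT'' argument for submodular maximization, and there is no real obstacle beyond being careful about what the greedy condition promises: the only subtle point is that the approximate-greedy bound applies to the marginal gain over $\bar{S}_j$ (not over $\bar{S}_j\cup\{a_1,\dots,a_{i-1}\}$), which is precisely why the submodularity step to reduce each telescoped term to a marginal over $\bar{S}_j$ is needed before the greedy bound can be invoked.
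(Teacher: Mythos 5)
Your proof is correct and follows essentially the same route as the paper: monotonicity to pass to $g(\bar{S}_j\cup S_{\langle t\rangle})$, then bounding the increment by $t$ times the best single-element marginal over $\bar{S}_j$, then the approximate-greedy condition \eqref{eq:grd}. The only difference is that the paper invokes Lemma~1 of \citet{streeter2007online} for the middle step, whereas you prove that inequality inline via the telescoping-plus-submodularity argument; the content is identical.
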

\begin{proof} The proof is akin to Fact 1 of \citet{streeter2007online} and it goes
$
g(S_{\langle t\rangle})\leq g(\bar{S}_j\cup S_{\langle t\rangle}) \leq g(\bar{S}_{j})+t (s_j+\alpha_j)
$. The first inequality holds because $g$ is a monotone function. The second inequality is by definition of $s_j$ and Condition 1, \citep[Lemma 1]{streeter2007online} -- for any submodular function $g$, and any $S_1,S\in \ca{S}$, $\frac{g(S_1\cup S)-g(S_1)}{|S|}\leq \max_{a\in\A}g(S_1\cup \{a\})-g(S_1)$
and we replace $S_1\gets \bar{S}_j$, $S\gets S_{\langle t\rangle}$, so $|S|=t$. 
\end{proof}

\begin{algorithm}[tb]
    \caption{\OG algorithm}
    \label{alg:OG}
    \begin{algorithmic}[1]
        \STATE {\bfseries Input}: Subset size $M$, {\tt Expert} algorithms $\mathcal{E}_1,\cdots,\mathcal{E}_{\Mlogn}$;
        \FOR{$n \in[N']$}%
        {
            \STATE Let $S_{n,0}=\emptyset$
            \FOR{$j \in\{1,\cdots,\Mlogn\}$}{
            \STATE Let action $a_j^n\in[K]$ be the choice of expert $\ca{E}_j$
            \STATE Let $S_{n,j}\gets S_{n,j-1}\cup \{a_j^n\}$
            }\ENDFOR
            \STATE Play subset $S_n$ and observe function $g_n$.
            \FOR{$j \in\{1,\cdots,\Mlogn\}$ and any $a\in [K]$}{
            \STATE Let $x_{j,a}^n\gets g_n(S_{n,j-1}\cup\{a\})-g_n(S_{n,j-1})$ 
            \STATE Expert $\ca{E}_j$ receives payoff vector $(x_{j,a}^n)_{a\in [K]}$ 
            }\ENDFOR
        }\ENDFOR
  \end{algorithmic}
\end{algorithm}

Consider a sequence of submodular functions $g_1,\cdots,g_{N'}$ for a fixed $N'\in\mathbb{N}$. Define the coverage regret of a submodular maximization policy by
\begin{align*}
\Rcov(N'):= \left(1-\frac{1}{N'} \right)\underset{S\in\ca{S}}{\max}\sum_{n=1}^{N'}g_n(S_{\langle M\rangle})-\sum_{n=1}^{N'}g_n(S_n)\;.
\end{align*}

\cref{alg:OG} is the \OG algorithm of \citet{streeter2007online} for the full feedback model modified for our setting with $\Mlogn$ experts. In this algorithm, $N'$ is the number of rounds (analogous to segments/tasks). The algorithm uses a set of experts and each expert is a \emph{randomized weighted majority} (RWM) algorithm~\citep{LITTLESTONE-1994-RMW}. See Chapter 4.2 of \citet{PLG} for more information.
The following lemma connects the coverage regret of the \OG algorithm and the regret of the experts.
\begin{lemmai}[Lemma~3 of \citet{streeter2007online}]
\label{lem:RcovR-1}
Let $G_j(N')$ be the cumulative regret of expert $\ca{E}_j$ in \OG algorithm, and
let $G(N')=\sum_{j=1}^{\Mlogn} G_j(N')$.
Then, $\Rcov(N')\leq G(N')$.
\end{lemmai}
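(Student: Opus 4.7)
The plan is to mirror the proof of \cref{thm:6}, but applied to the aggregate function $G_{\mathrm{agg}}(S_1,\ldots,S_{N'})=\sum_{n=1}^{N'}g_n(S_n)$ and with the error terms $\alpha_j$ replaced by the cumulative expert regrets $G_j(N')$. The workflow has three steps: set up a potential, derive a one-step contraction via submodularity combined with the expert regret guarantee, and unroll.

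First I would introduce the potential $\Phi_j := \sum_{n=1}^{N'} g_n(S_{n,j})$, so that $\Phi_0 = 0$ (since $S_{n,0}=\emptyset$ and $g_n(\emptyset)=0$, as is implicit in the definition of coverage regret) and $\Phi_{\Mlogn}=\sum_{n=1}^{N'} g_n(S_n)$. Write $C^\ast := \max_{S\in\mathcal{S}} \sum_n g_n(S_{\langle M\rangle})$ and $\Delta_j := C^\ast - \Phi_j$. The goal then rewrites as $(1-1/N')C^\ast - \Phi_{\Mlogn}\le G(N')$, i.e. $\Delta_{\Mlogn}\le C^\ast/N' + G(N')$.

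Second, I would derive the one-step inequality $\Delta_j \le (1-1/M)\Delta_{j-1} + G_j(N')$. Fix an optimal subset $S^\ast$ (of size at most $M$) for $C^\ast$. Using monotonicity and submodularity of each $g_n$ (this is where \cref{app:submod} is invoked), one gets, for every $n$,
\begin{align*}
g_n(S^\ast) \le g_n(S_{n,j-1}) + \sum_{a\in S^\ast}\bigl[g_n(S_{n,j-1}\cup\{a\}) - g_n(S_{n,j-1})\bigr].
\end{align*}
Summing over $n$ and bounding the inner sum by $|S^\ast|\le M$ times the best single-action marginal gain yields
\begin{align*}
C^\ast \le \Phi_{j-1} + M\max_{a\in[K]} \sum_{n=1}^{N'} x_{j,a}^n.
\end{align*}
The expert guarantee gives $\max_{a}\sum_n x_{j,a}^n \le \sum_n x_{j,a_j^n}^n + G_j(N') = \Phi_j - \Phi_{j-1} + G_j(N')$, so $\Delta_{j-1}\le M(\Phi_j - \Phi_{j-1}) + M\,G_j(N') = M(\Delta_{j-1}-\Delta_j) + M\,G_j(N')$. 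Rearranging gives the claimed recursion.

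Third, I would unroll the recursion. Iterating $\Delta_j\le(1-1/M)\Delta_{j-1} + G_j(N')$ from $j=1$ to $\Mlogn$ and using $(1-1/M)^{\Mlogn}\le e^{-\Mlogn/M}\le 1/N'$ (by the definition $\Mlogn=\mppr$), together with $(1-1/M)^{\Mlogn-j}\le 1$, produces
\begin{align*}
\Delta_{\Mlogn} \le \frac{\Delta_0}{N'} + \sum_{j=1}^{\Mlogn} G_j(N') = \frac{C^\ast}{N'} + G(N'),
\end{align*}
which is exactly the desired bound. The main obstacle is step two: the subsets $S_{n,j-1}$ vary with $n$, so one cannot directly apply the submodular greedy inequality to a single fixed trajectory. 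The fix is to apply submodularity pointwise in $n$ and then sum, exploiting that the expert sees precisely the marginal-gain losses $x_{j,a}^n$ so its regret controls exactly the aggregate quantity that appears after summation. Everything else is routine algebra.
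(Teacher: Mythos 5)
Your proof is correct and follows essentially the same route as the paper's: the key step in both is the submodular inequality $g_n(S^\ast)\le g_n(S_{n,j-1})+\sum_{a\in S^\ast}[g_n(S_{n,j-1}\cup\{a\})-g_n(S_{n,j-1})]$ combined with the expert regret guarantee to show that \OG simulates the offline greedy schedule with per-step error $G_j(N')$, followed by the same $(1-1/M)$ contraction and unrolling. The only difference is presentational — the paper packages this as a reduction to \cref{thm:6} via the ``batch-action'' device applied to $g=\tfrac{1}{N'}\sum_n g_n$, whereas you inline that argument on the unnormalized potential $\Phi_j$ by applying submodularity pointwise in $n$ and summing, which is exactly what the batch-action trick accomplishes.
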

\begin{proof}
As we will show, the \OG algorithm is an approximate version of the offline greedy subset selection, defined by \cref{eq:grd}, for function $g =\frac{1}{N'}\sum_{n=1}^{N'} g_n$. First, let's view the sequence of actions selected by $\ca{E}_j$ as a single “batch-action” $\Tilde{a}_j$, and extend the domain of each $g_n$ to include the batch-actions by defining $g_n(S \cup \{\Tilde{a}_j\}) = g_n(S \cup \{a_j^n\})$ for all $S \in \ca{S}$. Thus, the online algorithm produces a single set $\Tilde{S} =\{\Tilde{a}_1, \Tilde{a}_2,\cdots,\Tilde{a}_{\Mlogn}\}$. By definition we have
\begin{align*}
    \frac{G_j(N')}{N'}=\max_{a\in[K]}\left(g(\Tilde{S}_{\langle j-1\rangle}\cup \{a\})-g(\Tilde{S}_{\langle j-1\rangle})\right)-\left(g(\Tilde{S}_{\langle j-1\rangle}\cup \{\Tilde{a}_j\})-g(\Tilde{S}_{\langle j-1\rangle})\right)\,,
\end{align*}
where $\Tilde{S}_{\langle j\rangle}$ is $\Tilde{S}$ truncated at $j$'th action. Thus the \OG algorithm  simulates the greedy schedule \eqref{eq:grd} for function $g$, where the $j$'th decision is made with additive error $\alpha_j=\frac{G_j(N')}{N'}$. By \cref{thm:6} and the fact that function $g$ is submodular, we get that $\Rcov(N')\leq\sum_{j=1}^{\Mlogn} G_j(N') = G(N')$.
\end{proof}
By Lemma 4 of \citet{streeter2007online}, $\E[G(N')] =O(\sqrt{\Mlogn N'\log(K)})$.

\subsection{The \OGO algorithm}\label{sec:OGpfs}

\cref{alg:OGo} is based on the \OGO algorithm of \citet{streeter2007online} for the bandit (opaque) feedback model. This algorithm is very similar to \BOG algorithm so we omit the description. The difference is that in the \BOG algorithm, the meta-learner observes the value of the submodular function up to a noise term $\varepsilon_n=(1/\tau') B_{\tau',N_n,\Mlogn}$. So we extend the analysis of \citet{streeter2007online} to the case that the observation of the submodular function is corrupted by a noise term.

\begin{algorithm}[tb]
    \caption{\OGO algorithm}
    \label{alg:OGo}
    \begin{algorithmic}[1]
    \STATE {\bfseries Input}: Subset size $M$, {\tt Expert} algorithms $\mathcal{E}_1,\cdots,\mathcal{E}_{\Mlogn}$, Probabilities of exploration $\{\gamma_n\}_{n=1}^{N'}$;
    \FOR{$n \in [N']$}%
    \STATE Observe $g_n$
    \STATE For $i\in[\Mlogn]$, let $a_i$ be the choice of $\mathcal{E}_i$
    \STATE Set $S_{n}=\{a_1,\cdots,a_{\Mlogn}\}$
    \STATE With prob. $\gamma_n$, $E_{n} = \exr$, otherwise $E_{n} = \ext$
        \IF{$E_{n} = \ext$}
            \STATE All experts receive the zero vector as the payoff vector
        \ELSE
            \STATE Choose $i\in[\Mlogn]$ uniformly at random
            \STATE Choose a new action $a'_{i}$ uniformly at random 
            \STATE Replace $i$'th element of $S_{n}$ with $a'_{i}$: $S_{n:i}\gets\{a_1,\cdots,a_{i-1},a'_i\}$
            \STATE Expert $\ca{E}_i$ receives a payoff vector that is zero everywhere except at position $a'_{i}$ that has the value of $g_n(S_{n:i})$ 
            \STATE All other experts receive the zero vector as the payoff vector
        \ENDIF
    \ENDFOR
    \end{algorithmic}
\end{algorithm}

\begin{lemmai}

\label{lem:5}
Consider an expert prediction problem with $K$ actions, and let $x_a^n$ be the payoff for action $a\in [K]$ in round $n$. Let $\ca{E}$ be a 
an expert algorithm that gets payoff vector $(x_a^n)_{a\in [K]}$ in round $n$, let $e_n$ be its action in round $n$, and let $R(N')$ be its worst-case expected regret over $N'$ rounds: $R(N')=\max_{a} \sum_{n=1}^{N'} (x_a^n - x_{e_n}^n)$. Let $\ca{E}'$ and $\Tilde{\ca{E}}$ be the same algorithm but with payoff vectors $(\hat{x}_a^n)_{a\in [K]}$ and $(\Tilde{x}_a^n)_{a\in [K]}$ instead of $(x_a^n)_{a\in [K]}$. These feedbacks are such that $\E[\hat{x}_a^n]=\gamma_n x_a^n+\delta_n$ for some constant $\gamma_n\in[0,1]$ and $\delta_n$, and
\[
\E[\Tilde{x}_a^n]\in[ \E[\hat{x}_a^n-\gamma_n\varepsilon'_n], \E[\hat{x}_a^n]]
\]
for some $\varepsilon'_n\geq 0$. Let $u_n$ be the action of algorithm $\Tilde{\ca{E}}$ in round $n$. 
Then the worst-case expected regret of $\Tilde{\ca{E}}$ is bounded as
\[
\max_{a} \sum_{n=1}^{N'} (x_a^n - x_{u_n}^n) \le \frac{1}{\min_n\gamma_n} R(N')+\sum_{n=1}^{N'}\E[\varepsilon'_n] \;.
\]
\end{lemmai}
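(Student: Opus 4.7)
The plan is to run the regret guarantee of $\mathcal{E}$ on the sequence of payoffs that $\tilde{\mathcal{E}}$ actually observes, and then translate the resulting bound back to a statement about the original payoffs $x$ via the given expectation identities. Since $\tilde{\mathcal{E}}$ is the algorithm $\mathcal{E}$ applied to the (possibly random) payoff sequence $\tilde{x}$, the worst-case regret guarantee of $\mathcal{E}$ yields
\[
\E\!\left[\max_a \sum_{n=1}^{N'} \tilde{x}_a^n - \sum_{n=1}^{N'} \tilde{x}_{u_n}^n\right] \le R(N'),
\]
where the expectation is taken over both the randomness of $\tilde{x}$ and the internal randomness of $\mathcal{E}$.

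Next, I would lower-bound the left-hand side in terms of the unobserved payoffs $x$. Fix an arbitrary action $a$; then $\max_{a'} \sum_n \tilde{x}_{a'}^n \ge \sum_n \tilde{x}_a^n$, so the inequality above implies $\E[\sum_n \tilde{x}_a^n - \sum_n \tilde{x}_{u_n}^n] \le R(N')$. Using the given two-sided bound on $\E[\tilde{x}_a^n]$, I would replace each term as follows: for the fixed benchmark action $a$,
\[
\E[\tilde{x}_a^n] \ge \E[\hat{x}_a^n] - \gamma_n \E[\varepsilon'_n] = \gamma_n x_a^n + \delta_n - \gamma_n \E[\varepsilon'_n],
\]
while for the algorithm's action $u_n$, which is measurable with respect to the history up to round $n-1$ and hence conditionally independent of $\tilde{x}^n$, the tower property gives $\E[\tilde{x}_{u_n}^n] \le \E[\hat{x}_{u_n}^n] = \gamma_n \E[x_{u_n}^n] + \delta_n$. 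Substituting and noting the $\delta_n$ terms cancel between benchmark and algorithm, we obtain
\[
\sum_{n=1}^{N'} \gamma_n \bigl(x_a^n - \E[x_{u_n}^n]\bigr) \le R(N') + \sum_{n=1}^{N'} \gamma_n \E[\varepsilon'_n].
\]

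The final step is to strip the $\gamma_n$ weights. Using $\gamma_n \ge \min_n \gamma_n$ on the left and $\gamma_n \le 1$ on the right (valid since $\gamma_n \in [0,1]$), then dividing by $\min_n \gamma_n$, yields
\[
\sum_{n=1}^{N'} \bigl(x_a^n - \E[x_{u_n}^n]\bigr) \le \frac{R(N')}{\min_n \gamma_n} + \sum_{n=1}^{N'} \E[\varepsilon'_n],
\]
and taking $a$ to be the maximizer finishes the argument. The main obstacle is the last step: the per-round ``regret'' $x_a^n - \E[x_{u_n}^n]$ need not be non-negative in every round, so passing from a $\gamma_n$-weighted sum to an unweighted sum requires either exploiting the non-negativity of the payoffs $x_a^n \in [0,1]$ to rearrange the sum, or noting that for the purposes of the upper bound only the aggregate (not per-round) sign matters after taking a maximum over $a$. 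I expect this to be the only subtle point; everything else is linearity of expectation, the tower property, and the worst-case regret guarantee of $\mathcal{E}$ applied verbatim.
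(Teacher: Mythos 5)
Your proof is essentially the paper's: apply the expert's regret guarantee to the observed payoffs $\tilde{x}$, fix a comparator $a$, use the two-sided expectation bounds (lower for the benchmark, upper for $u_n$ via the tower property), cancel the $\delta_n$, and strip the $\gamma_n$ weights. One bookkeeping point in your last step: with your ordering (bound $\gamma_n\le 1$ on the right, then divide everything by $\min_n\gamma_n$), the error term comes out as $\sum_n\E[\varepsilon'_n]/\min_n\gamma_n$ rather than $\sum_n\E[\varepsilon'_n]$. The paper avoids this by keeping $-\gamma_n\varepsilon'_n$ grouped inside the weighted sum, i.e.\ lower-bounding $\sum_n\gamma_n\E[x_a^n-\varepsilon'_n-x_{u_n}^n]$ by $(\min_n\gamma_n)\sum_n\E[x_a^n-\varepsilon'_n-x_{u_n}^n]$ and only then dividing, after which $\sum_n\E[\varepsilon'_n]$ moves to the right with coefficient one; you should reorder accordingly. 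Finally, the subtlety you flag --- that passing from $\sum_n\gamma_n c_n$ to $(\min_n\gamma_n)\sum_n c_n$ needs the per-round terms to be non-negative --- is a genuine issue, but the paper's own proof makes exactly the same move without comment; it is vacuous when $\gamma_n$ is constant in $n$, which is the case in the main application (\cref{lem:Rcov}).
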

\begin{proof}

By the regret guarantee of the expert algorithm,
\begin{align*}
R(N') \geq \underset{a\in[K]}{\max} \sum_{n=1}^{N'} (\Tilde{x}_{a}^n-\Tilde{x}_{u_n}^n) \;.
\end{align*}
Thus, for any $a$, 
\begin{align*}
R(N')  &\geq \E\left[\sum_{n=1}^{N'}\Tilde{x}_a^n- \Tilde{x}_{u_n}^{n}|\Tilde{x}\right] \\
&\geq \sum_{n=1}^{N'}\E[\hat{x}_a^n-\gamma_n\varepsilon'_n- \hat{x}_{u_n}^{n}] \\
&= \sum_{n=1}^{N'}\E[\gamma_n x_a^n+\delta_n-\gamma_n\varepsilon'_n- \gamma_n x_{u_n}^{n}-\delta_n] 
\\
&=  \sum_{n=1}^{N'} \gamma_n \E[x_a^n-\varepsilon'_n- x_{u_n}^{n}] 
\\
&\geq \min_n \gamma_n \sum_{n=1}^{N'} \E[x_a^n-\varepsilon'_n- x_{u_n}^{n}] 
\;.
\end{align*}
Therefore,
\[
\frac{1}{\min_n\gamma_n}\, R(N') +\sum_{n=1}^{N'}\E[\varepsilon'_n] \geq \sum_{n=1}^{N'}\E[x_a^n- x_{u_n}^{n}] \,,
\]   
and the result follows as the above inequality holds for all $a$.
\end{proof}

The next lemma bounds the coverage regret of the \OGO algorithm when $N'$ (i.e. $N$) is known.
\begin{lemmai}[Coverage Regret for known $N'$]%
\label{lem:Rcov}
Let $\gamma_n=\gammaOne$ for all $n$. Assume the $j$th expert $\Tilde{\ca{E}}_j$ in the \OGO algorithm gets a payoff vector $(\Tilde{x}_a^n)_{a\in [K]}$ in round $n$ such that the following holds:
\[
 \gamma'\left(g_n(S_{n:j-1}\cup\{a\})-\varepsilon'_n\right)\leq \E[\Tilde{x}_a^n]\leq \gamma g_n(S_{n:j-1}\cup\{a\}) \;.
\]
where $\gamma = \gammaOne$ and $\gamma'= \frac{\gamma}{\Mlogn K}$. Then for the sequence of subsets $(S_n)_{n=1}^{N'}$ chosen by the \OGO algorithm,
\[
R_{\text{coverage}}(N')\leq 
(\Mlogn^4 N'^2 K\log k)^{1/3}
+\Mlogn\sum_{n=1}^{N'}\E[\varepsilon'_n] \;.
\]
\end{lemmai}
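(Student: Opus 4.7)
My plan is to follow the template of \citet{streeter2007online}'s analysis of \OGO, adapted to absorb the observation noise $\varepsilon'_n$ via \cref{lem:5}. The proof has three pillars: (i) a telescoping/greedy argument relating $\Rcov$ to the total expert regret $G(N')$ plus the cost of exploration rounds; (ii) a per-expert translation from noisy-payoff regret to true-payoff regret via \cref{lem:5}; and (iii) optimizing in the exploration probability $\gamma$.

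First, arguing in the style of \cref{lem:RcovR-1} (viewing each expert's sequence of choices as a single ``batch-action'' and applying \cref{thm:6} to the averaged submodular function), I would show that if the algorithm had always exploited -- playing $S_n$ -- then its coverage regret would be at most $G(N') = \sum_{j=1}^{\Mlogn} R_j$, where $R_j$ is the regret of expert $\Tilde{\mathcal{E}}_j$ against the best fixed action evaluated on the true marginal-gain payoffs $x^n_{j,a} = g_n(S_{n:j-1} \cup \{a\})$. The actual algorithm explores with probability $\gamma$ and plays a prefix $S_{n:i}$ of size at most $\Mlogn$ instead of $S_n$; since $g_n \in [0,1]$, this contributes an extra $\gamma N'$ to $\Rcov$.

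Next, I would invoke \cref{lem:5} once for each expert, identifying its $\gamma_n$ with $\gamma' = \gamma/(\Mlogn K)$. The two-sided hypothesis $\gamma'(g_n - \varepsilon'_n) \leq \E[\tilde x^n_a] \leq \gamma g_n$ is exactly of the form $\E[\tilde x] \in [\E[\hat x] - \gamma'\varepsilon'_n,\, \E[\hat x]]$ needed by \cref{lem:5}, with the slack between $\gamma' g_n$ and $\gamma g_n$ absorbed into the $a$-independent bias $\delta_n$ of the underlying unbiased estimator $\hat x$. This gives $R_j \leq R(N')/\gamma' + \sum_n \E[\varepsilon'_n]$, where $R(N')$ is the expert's regret bound on the observed payoffs $\tilde x^n$. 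To match the target $N'^{2/3}$ rate, I would use a variance-adaptive regret bound (second-order Hedge/FTRL) that exploits the fact that expert $j$ receives a non-zero payoff only in an expected $O(\gamma N'/\Mlogn)$ rounds, giving $R(N') = \tilde O(\sqrt{\gamma N' \log K / \Mlogn})$ in place of the worst-case $\tilde O(\sqrt{N' \log K})$.

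Summing $R_j$ over the $\Mlogn$ experts and combining with the exploration cost $\gamma N'$, the coverage regret is bounded by a quantity of the form $C\sqrt{\Mlogn^3 K^2 N' \log K / \gamma} + \gamma N' + \Mlogn \sum_n \E[\varepsilon'_n]$ (up to absolute constants). Plugging in $\gamma = \gammaOne$ balances the first two terms and yields the claimed leading term $(\Mlogn^4 N'^2 K \log K)^{1/3}$ together with the $\Mlogn \sum_n \E[\varepsilon'_n]$ noise term. The main obstacle is establishing the variance-adaptive expert bound: with the naive $R(N') = O(\sqrt{N' \log K})$ per expert, optimization in $\gamma$ balances only at an $N'^{3/4}$ rate, so pushing down to $N'^{2/3}$ requires carefully exploiting the sparsity of $\tilde x^n$ and using the Jensen-type aggregation implicit in Streeter--Golovin's bound $\sum_j R_j = O(\sqrt{\Mlogn N' \log K})$ rather than the coarser $\Mlogn \cdot O(\sqrt{N' \log K})$.
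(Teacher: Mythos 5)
Your architecture is the same as the paper's: reduce to the full-information guarantee via the batch-action argument (Lemma~\ref{lem:RcovR-1}), transfer from noisy to true payoffs with Lemma~\ref{lem:5} using $\gamma'=\gamma/(\Mlogn K)$, add the exploration cost, and tune $\gamma$. The one place where your plan does not reach the stated bound is the ``variance-adaptive'' step, and the issue is quantitative rather than conceptual. The payoff-dependent quantity controlling the expert's regret is not the number of rounds in which expert $j$ is updated (in expectation $\gamma N'/\Mlogn$, which is what you use), but the cumulative observed payoff of the best \emph{single action}, $\max_a\sum_n\hat x^n_a$. A fixed action $a$ receives a nonzero payoff only when the algorithm explores, selects index $j$, \emph{and} draws $a'_j=a$, which happens with probability $\gamma'=\gamma/(\Mlogn K)$ per round; so the standard first-order (gain-dependent) bound of exponential weights, $O\bigl(\sqrt{(\max_a\sum_n\hat x^n_a)\log K}\bigr)$, combined with Jensen's inequality, gives a per-expert observed-payoff regret of order $\sqrt{\gamma' N'\log K}$ --- a factor $\sqrt K$ smaller than your $\sqrt{\gamma N'\log K/\Mlogn}$. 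Dividing by $\gamma'$ and summing over the $\Mlogn$ experts then yields $\sqrt{\Mlogn^3KN'\log K/\gamma}$ rather than your $\sqrt{\Mlogn^3K^2N'\log K/\gamma}$. With the prescribed $\gamma=\gammaOne$ your expression evaluates to $(\Mlogn^4 N'^2K^{5/2}\log K)^{1/3}$, a factor $K^{1/2}$ above the claimed leading term, and the $\gamma$ that balances your two terms is not the one in the statement; so as written the argument proves a weaker lemma.

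Once the first-order bound is stated in terms of the best action's cumulative payoff, no genuinely second-order Hedge/FTRL machinery is needed: the paper's proof applies the standard $O(\sqrt{G^*\log K})$ guarantee to the intermediate unbiased expert $\mathcal{E}'_j$ (whose payoffs satisfy $\E[\hat x^n_a]=\gamma' g_n(S_{n:j-1}\cup\{a\})$), bounds $\E\sqrt{G^*}\le\sqrt{\gamma'N'}$ by Jensen, and then transfers to $\Tilde{\mathcal{E}}_j$ via Lemma~\ref{lem:5}. The remainder of your plan --- the batch-action reduction, the handling of the interval hypothesis through the $a$-independent offset $\delta_n$, and the $\Mlogn\sum_n\E[\varepsilon'_n]$ noise term --- matches the paper and goes through as you describe.
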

\begin{proof}
We start with another expert $\ca{E}'$ that gets payoff vector $\hat{x}^n$ such that $\E[\hat{x}_a^n]=\gamma' g_n(S_{n-1:j}\cup\{a\})$ for any action $a$. Then we can write
\begin{align*}
    \E[\hat{x}_a^n]=\gamma' x_a^n+\delta_n
\end{align*}
for $x_a^n= (g_n(S_{n-1:j}\cup\{a\})-g_n(S_{n-1:j}))$ and $\delta_n=\gamma'  g_n(S_{n-1:j})$, where $\gamma'= \frac{\gamma}{\Mlogn K}$. Let $N'_{\exr}$ be the number of exploration rounds. Let $G'_j(N')$ be the total regret of expert $\ca{E}'_j$. By \cref{lem:5} and the regret guarantee of the expert algorithm, the total regret of expert $\ca{E}'_j$ is bounded as
\begin{align*}
\E[G'_j(N')]&\leq \frac{1}{\gamma'} \E \sqrt{\left(\max_{a} \sum_{n=1}^{N'} \hat{x}_a^n \right) \log K}
\\
&\leq \frac{1}{\gamma'} \E \sqrt{N'_{\exr} \log K} 
\\
&\leq \sqrt{\frac{N'}{\gamma'} \log K}
\,,
\end{align*}
where we used Jensen's inequality and $\E[N'_{\exr}]=\gamma' N'$ in the last step. 
Let $\Tilde{G}_j(N')$ be regret of expert $\Tilde{\ca{E}}_j$. 
We observe that $\E[\Tilde{x}_a^n]\in[\E[\hat{x}_a^n-\gamma'\varepsilon'_n],\E[\hat{x}_a^n]]$. Given that the \OGO algorithm takes random actions in the exploration rounds, it incurs an extra $\gamma'N'$ regret, and therefore together with \cref{lem:5}, we have $\E[\Tilde{G}_j(N')] \le \E[G'_j(N')] +\sum_{n=1}^{N'}\E[\varepsilon'_n] +  \gamma N' $. By summing over $j\in[\Mlogn]$, 
\[
\E\left[\sum_{j=1}^{\Mlogn} \Tilde{G}_j(N') \right] \le {\Mlogn}\sqrt{\frac{N'}{\gamma'}\log K}+{\Mlogn}\sum_{n=1}^{N'}\E[\varepsilon'_n] + \gamma {\Mlogn} N' \;.
\]
By \cref{lem:RcovR-1} we get
\begin{align*}
    \E[\Rcov(N')] &\leq \Mlogn \sqrt{\frac{N'}{\gamma'}\log K}+\Mlogn\sum_{n=1}^{N'}\E[\varepsilon'_n] + \gamma \Mlogn N' 
    \\
    &= \Mlogn \sqrt{\frac{N'}{\gamma} \Mlogn K\log K}+\Mlogn\sum_{n=1}^{N'}
    \E[\varepsilon'_n] + \gamma \Mlogn N'\;.
\end{align*}

Finally, choosing $\gamma=\gammaOne$\footnote{To be more precise, $\gamma=(3/2)\gammaOne$.} yields
\begin{align*}
    \Mlogn \sqrt{\frac{N'}{\gamma} \Mlogn K\log K} \! +\! \gamma \Mlogn N'
    = (\Mlogn^4 N'^2 K\log k)^{1/3}
    \;. 
\end{align*}
Therefore
\[
\E[\Rcov(N')] \le  
(\Mlogn^4 N'^2 K\log k)^{1/3}
\! +\!\Mlogn\sum_{n=1}^{N'}\E[\varepsilon'_n] \;.
\]
\end{proof}\jtodo{fix gammaOne in icml}

A similar result holds for the case of unknown $N$ by using a time-varying exploration probability. %
\begin{lemmai}[Coverage regret for unknown $N$]
\label{lem:Rcov-N}
Let $\gamma_n=
\gammaOnen
$ and assume the $j$th expert $\Tilde{\ca{E}}_j$ in the \OGO algorithm gets a payoff vector $(\Tilde{x}_a^n)_{a\in [K]}$ in round $n$ such that the following holds:
\[
 \gamma'_n\left(g_n(S_{n:j-1}\cup\{a\})-\varepsilon'_n\right)\leq \E[\Tilde{x}_a^n]\leq \gamma'_n g_n(S_{n:j-1}\cup\{a\}) \,,
\]
where $\gamma_n=\gammaOnen$ and $\gamma'_n= \frac{\gamma_n}{\Mlogn K}$. Then for the sequence of subsets $(S_n)_{n=1}^{N'}$ chosen by the \OGO algorithm,
\[
R_{\text{coverage}}(N')\le \Mlogn(\log K)^{1/3}N'^{2/3}+\Mlogn\sum_{n=1}^{N'}\E[\varepsilon'_n] \;.
\]
\end{lemmai}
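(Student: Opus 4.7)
The plan is to mirror the proof of Lemma~\ref{lem:Rcov} for known $N$, but with the fixed exploration probability $\gamma$ replaced by the time-varying $\gamma_n$. The key new technical ingredients are (i) an expert subroutine whose guarantee is anytime so it tolerates the shrinking scale $\gamma'_n$ of the importance-weighted payoffs, and (ii) careful control of the sums $\sum_n \gamma_n$ and $1/\gamma'_{N'}$ that replace the quantities $\gamma N'$ and $1/\gamma'$ from the constant-rate case.

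First, as in the previous lemma I would introduce an auxiliary expert $\ca{E}'_j$ that receives a payoff vector $(\hat{x}_a^n)_{a\in[K]}$ with $\E[\hat{x}_a^n]=\gamma'_n g_n(S_{n:j-1}\cup\{a\})$. Decomposing $\E[\hat{x}_a^n]=\gamma'_n x_a^n+\delta_n$ with $x_a^n=g_n(S_{n:j-1}\cup\{a\})-g_n(S_{n:j-1})$ and $\delta_n=\gamma'_n g_n(S_{n:j-1})$ puts us in the setting of (a time-varying analogue of) Lemma~\ref{lem:5}. Since the bracket in the statement satisfies $\E[\Tilde{x}_a^n]\in[\E[\hat{x}_a^n-\gamma'_n\varepsilon'_n],\E[\hat{x}_a^n]]$, the same argument relates the regret of $\Tilde{\ca{E}}_j$ to that of $\ca{E}'_j$ up to an additive $\sum_n \E[\varepsilon'_n]$.

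Next, I would run $\ca{E}'_j$ using an expert algorithm with an anytime $O(\sqrt{v\log K})$ regret bound over $v$ rounds (e.g.\ Hedge with $\eta_n\propto\sqrt{\log K/n}$). Since $\hat{x}_a^n=0$ unless round $n$ is an exploration round, and $\E[N'_{\exr}]\le\sum_{n=1}^{N'}\gamma_n$, an importance-weighting argument analogous to the one in Lemma~\ref{lem:Rcov} gives
\[
\E[G'_j(N')]\le \frac{1}{\min_n\gamma'_n}\sqrt{\Bigl(\sum_{n=1}^{N'}\gamma_n\Bigr)\log K}\,,
\]
where $\min_n\gamma'_n=\gamma'_{N'}$ because $\gamma_n$ is decreasing. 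Summing over $j\in[\Mlogn]$ via Lemma~\ref{lem:RcovR-1}, and adding the $\Mlogn\sum_n \gamma_n$ regret incurred by the uniformly random action in exploration rounds together with the $\Mlogn\sum_n\E[\varepsilon'_n]$ noise term, yields
\[
\E[\Rcov(N')]\le \Mlogn\sqrt{\frac{\sum_{n=1}^{N'}\gamma_n}{\gamma'_{N'}}\log K}+\Mlogn\sum_{n=1}^{N'}\gamma_n +\Mlogn\sum_{n=1}^{N'}\E[\varepsilon'_n].
\]

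Finally I would plug in $\gamma_n=(\Mlogn K\log K/n)^{1/3}$, which is exactly the minimiser of the two leading terms up to constants. Using the elementary estimate $\sum_{n=1}^{N'} n^{-1/3}\le \tfrac{3}{2}N'^{2/3}$ gives $\sum_n\gamma_n=O((\Mlogn K\log K)^{1/3}N'^{2/3})$, and $1/\gamma'_{N'}=\Mlogn K(N'/(\Mlogn K\log K))^{1/3}$, so both the exploration term $\Mlogn\sum_n\gamma_n$ and the expert-regret term balance and scale as $N'^{2/3}$ in $N'$, matching the bound of Lemma~\ref{lem:Rcov} up to constants and giving the claimed bound. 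The main obstacle is the first step: justifying the anytime guarantee for the importance-weighted feedback when its scale $\gamma'_n$ shrinks over time; this forces the use of an adaptive-step-size expert rather than a fixed-$\eta$ Hedge, and is exactly why the factor $1/\gamma'_{N'}$ (rather than some averaged quantity) appears. Once that is in hand, the remainder is a routine balancing of sums.
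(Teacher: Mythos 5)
Your proposal follows essentially the same route as the paper's proof: the same auxiliary expert $\ca{E}'_j$ with payoff decomposition $\E[\hat{x}_a^n]=\gamma'_n x_a^n+\delta_n$, the same reduction via Lemma~\ref{lem:5} and Lemma~\ref{lem:RcovR-1}, the same integral bound $\sum_{n\le N'} n^{-1/3}=O(N'^{2/3})$, and the same balancing of the exploration and expert-regret terms (the paper is equally terse about the anytime expert guarantee you rightly flag). The only blemish is a small internal inconsistency between your per-expert bound $\frac{1}{\min_n\gamma'_n}\sqrt{(\sum_n\gamma_n)\log K}$ and the aggregated display $\Mlogn\sqrt{(\sum_n\gamma_n/\gamma'_{N'})\log K}$ (these differ by a factor of $(\gamma'_{N'})^{-1/2}$), but the paper's own derivation exhibits the same $\gamma_n$-versus-$\gamma'_n$ looseness and the final $N'^{2/3}$ scaling is unaffected.
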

\begin{proof}
The proof is analogous to \cref{lem:Rcov}. Let expert $\ca{E}'$ be an expert that gets payoff vector $\hat{x}^n$ such that $\E[\hat{x}_a^n]=\gamma'_n g_n(S_{n-1:j}\cup\{a\})$ for any action $a$, then $\E[\hat{x}_a^n]=\gamma'_n x_a^n+\delta_n$ for $x_a^n= (g_n(S_{n-1:j}\cup\{a\})-g_n(S_{n-1:j}))$ and $\delta_n=\gamma'_n  g_n(S_{n-1:j})$, where $\gamma'_n= \frac{\gamma_n}{\Mlogn K}$. Let $N'_{\exr}$ and $G'_j(N')$ be the same as in \cref{lem:Rcov}. Akin to \cref{lem:Rcov}, by \cref{lem:5} and the regret guarantee of the expert algorithm, we know
\begin{align*}
\E[G'_j(N')]&\leq \frac{1}{\min_n\gamma'_n} \E \sqrt{N'_{\exr} \log K} 
\\
&\leq \frac{1}{\min_n\gamma'_n}\sqrt{\sum_{n=1}^{N'} \gamma'_n \log K}
\,,
\end{align*}
where we used Jensen's inequality and $\E[N'_{\exr}]= \sum_{n=1}^{N'} \gamma_n$ in the last step. 
Again $\Tilde{G}_j(N')$ is the regret of expert $\Tilde{\ca{E}}_j$. 
It is easy to see that $\E[\Tilde{x}_a^n]\in[\E[\hat{x}_a^n-\gamma'\varepsilon'_n],\E[\hat{x}_a^n]]$. The exploration rounds regret is $\leq \sum_{n=1}^{N'} \gamma'_n$. Therefore, by \cref{lem:5} it holds that $\E[\Tilde{G}_j(N')] \le \E[G'_j(N')] +  \sum_{n=1}^{N'}\gamma_n+\sum_{n=1}^{N'}\E[\varepsilon'_n]$. Summing over $j\in[\Mlogn]$ gives 
\[
\E\left[\sum_{j=1}^{\Mlogn} \Tilde{G}_j(N') \right] \le 
\frac{\Mlogn}{\min_n \gamma'_n} \sqrt{\sum_{n=1}^{N'}\gamma'_n\log K}+\Mlogn\sum_{n=1}^{N'}\E[\varepsilon'_n] +  \Mlogn \sum_{n=1}^{N'} \gamma_n \;.
\]
We know $\E[\Rcov(N')]\le \E\left[\sum_{j=1}^{\Mlogn} \Tilde{G}_j(N') \right] $ by \cref{lem:RcovR-1}, i.e., using the definition of $\gamma'_n$ gives
\begin{align*}
    \E[\Rcov(N')] &\leq \frac{\Mlogn^2 K}{\min_n \gamma_n} \sqrt{\frac{\sum_{n=1}^{N'}\gamma_n}{\Mlogn K} \log K }+\Mlogn\sum_{n=1}^{N'}\E[\varepsilon'_n] + \Mlogn \sum_{n=1}^{N'} \gamma_n \;.
\end{align*}
Now, $\gamma_n =\gammaOnen$ gives the same result as in \cref{lem:Rcov} noting that $\sum_{n=1}^{N'}(1/n)^{1/3}\leq \int_{x=1}^{N'}(1/x)^{1/3}=(N')^{2/3}$. Therefore
\[
\E[\Rcov(N')] \le  (\Mlogn^4 N'^2 K\log k)^{1/3}
+ \Mlogn\sum_{n=1}^{N'}\E[\varepsilon'_n] \;.
\]
\end{proof}

\subsection{The \BOG algorithm for non-stationary bandits}\label{sec:BOGpfs}

Now we are ready to bound the regret of the \BOG algorithm (shown in \cref{alg:BanditOG}).

\BOGNonStatOpt*
\begin{proof}
Fix a sequence of $N$ tasks with unknown and potentially variable task lengths $\{\tau_n\}_{n\in[N]}$. Let $\pi_{\text{OS}}$ be the policy used by \cref{alg:BanditOG}.
By the decomposition in \cref{lemma:metalearning2subset} the regret of \cref{alg:BanditOG} when updated every $\tau'$ steps (with $N'=T/\tau'$ updates) satisfies the following, 
\begin{align}
\reg(\pi_{\text{OS}},T, N',\Mlogn)&=\sup_{f_n \in \cF} \max_{S\in \cS}\; \EE{\sum_{n=1}^{N'} \big(f_n(S) - f_n (S_{n})) + B_{\tau',N_n,\Mlogn}}\nonumber
    \\
    &\leq \sup_{f_n \in \cF} \max_{S\in \cS}\; \E\bigg[\sum_{n=1}^{N'} \tfrac{1}{N'}f_n(S) \nonumber
    \\
    &\qquad\qquad +\sum_{n=1}^{N'}(1-\tfrac{1}{N'})f_n(S) -\sum_{n=1}^{N'}f_n (S_{n}) +B_{\tau',N_n,\Mlogn}\bigg]\nonumber
    \\
    & \leq \frac{N'\tau'}{N'}+ \tau' \E[\Rcov]+\sum_{n=1}^{N'}\sqrt{\Mlogn \tau' N_n}\nonumber%
    \\
    &\le \tau' + \tau' \E[\Rcov]+\sqrt{N'  \Mlogn \tau' \sum_{n=1}^{N'} N_n}\label{eq:cs}
    \\
    &\le \tau' + \tau' \E[\Rcov]+\sqrt{N'  \Mlogn \tau' \max\{2T/\tau', 2N\}}\label{eq:sumNn}
    \;,
\end{align}
where in \cref{eq:cs} we use the Cauchy-Schwarz inequality. For \cref{eq:sumNn} we used the fact that if $N'\geq N$ then $\sum_{n=1}^{N'}N_n\leq 2N'=2T/\tau'$, otherwise $\sum_{n=1}^{N'}N_n\leq 2N$ by \cref{eq:segnum}. Let $\varepsilon_n=B_{\tau',N_n,\Mlogn}/\tau'$. 
By \cref{lem:Rcov} we can set $\gamma=\gammaOne$ %
and bound $\E[\Rcov]$ to get
\begin{align*}
    \reg(\pi_{\text{OS}},T,N',\Mlogn) &
    \le \tau' + \tau' \bigg(
    (\Mlogn^4 N'^2 K\log k)^{1/3}
    + \Mlogn \sum_{n=1}^{N'}\varepsilon_n \bigg)+\sqrt{N'  \Mlogn \tau' \max\{2T/\tau', 2N\}}
    \\
    &\leq \tau' + \tau' 
    (\Mlogn^4 N'^2 K\log k)^{1/3}
    +\Mlogn \sqrt{N'  \Mlogn \tau' \max\{2T/\tau', 2N\}} \\
    &= T {N'}^{-1} +
     T {N'}^{-1/3} (\Mlogn^4 K \log K)^{1/3}
    + \Mlogn \sqrt{2 \Mlogn N' T} \;.
\end{align*}
Here, the second inequality follows from Cauchy-Schwarz inequality and the same argument as above for bounding $\sum_{n=1}^{N'}N_n$, and the last step holds as the bound is minimized with $N'\ge N$.

\end{proof}

If $N \ge N_1 \doteq \bigg(\frac{T^{3}(K\log K)^{2}}{\Mlogn}\bigg)^{1/5}$ and $M \le (K\log K)^{1/3}
$ (large number of changes and small number of optimal arms), then our regret upper bound is $\tilde O(\Mlogn^{3/2}\sqrt{N T})$, and the regret of \BOG improves upon the $\tilde{O}(\sqrt{KTN})$ bound of standard non-stationary bandit algorithms (such as \AdS).

If $N \le N_1$ and $M \le (K\log K)^{1/3}
$, and we can obtain an improved bound by using a larger number of segments. By choosing $N'= \bigg(\frac{T^{3}(K\log K)^{2}}{\Mlogn}\bigg)^{1/5}
$ and $M \le (K\log K)^{1/3}$ segments, each of size 
$\tau'=T/N'$, the bound improves to $\tilde O(\Mlogn^{7/5} (K \log K)^{1/5} T^{4/5})$.

If $N \le N_2 \doteq \Mlogn^{14/5}(T/K)^{3/5}(\log K)^{2/5}
$ (even small number of changes), then $\sqrt{KNT} \le \Mlogn^{7/5} (K \log K)^{1/5} T^{4/5}
$. In this case, the simple baseline of $\tilde{O}(\sqrt{KTN})$ is smaller than our bound, and the learner should simply play a standard non-stationary bandit algorithm. Notice that $N_2 \le N_1$ as long as $M \le K^{1/3}$.

\section{Proofs of Section~\ref{sec:greedy}}

\subsection{Complement to the proof of Theorem~\ref{thm:unique-identification}}
\label{ap:induction-proof-th-unique-udentification}

We are left to prove that 
\begin{align}
    \label{eq:recursive2}
G_n(s) \le G_{n+1}(s) + \frac{(\Cinfo - \Cgood)(\Cbad - \Cgood)}{\Cbad - \Cgood + G_{n+1}(s)},
\end{align}
given in \eqref{eq:recursive} implies that for any $n\leq N$,
\begin{equation}
    \label{eq:Gbound}
G_{n}(s) \le \sqrt{2(\Cinfo - \Cgood)(\Cbad - \Cgood) (N-n)}\;.
\end{equation}

\begin{proof}
We proceed by (backward) induction. First, by definition, $G_{N}(s)=V_N(s)-V_N(s+1)=0$ for all $s$, thus \eqref{eq:Gbound} holds for $n=N$. Next, assume that \eqref{eq:Gbound} holds for $\{N,N-1,\dots,n+1\}$, and we show that it also holds for $n$.

Consider positive constants $b \ge a$ and consider the function $h(z) = z + \frac{ab}{b + z}$ defined on $[0,c]$ for some $c>0$. Then $h'(z) = 1-ab/(b+z)^2 \ge 0$. Therefore, $h$ is maximized at $z=c$. Since the right-hand side of \eqref{eq:recursive2} is of the form $h(G_{n+1}(s))$ with $a=\Cinfo - \Cgood$ and $b=\Cbad - \Cgood$, which indeed satisfy $b\ge a$. By this argument, the induction assumption, and $0\le G_{n+1}(s) \le \sqrt{ab(N-n-1)}$ by the induction hypothesis, we obtain that
\begin{align}
G_n(s) &\le \sqrt{2(\Cinfo - \Cgood)(\Cbad - \Cgood) (N-n-1)} \nonumber \\ 
&\qquad+ \frac{(\Cinfo - \Cgood)(\Cbad - \Cgood)}{\Cbad - \Cgood + \sqrt{2(\Cinfo - \Cgood)(\Cbad - \Cgood) (N-n-1)}} \nonumber \\
& = \sqrt{2ab(N-n-1)} + \frac{ab}{b+\sqrt{2ab(N-n-1)}}
\label{eq:recursive3}
\end{align}
It remains to show that the right-hand side above is bounded from above by $\sqrt{2ab(N-n)}$. This follows since
\begin{align*}
   \lefteqn{\sqrt{2ab(N-n)} - \sqrt{2ab(N-n-1)}
   = \frac{\sqrt{2ab}}{\sqrt{N-n} + \sqrt{N-n-1}}} \\
   & = \frac{ab}{\sqrt{ab(N-n)/2} + \sqrt{ab(N-n-1)/2}}
   \ge \frac{ab}{b+\sqrt{2ab(N-n-1})}
\end{align*}
where the last inequality holds because
\begin{align*}
b+\sqrt{2ab(N-n-1}) & \ge 
\big[\sqrt{ab} + \sqrt{ab(N-n-1)/2}\big] + \sqrt{ab(N-n-1)/2} \\
& \ge \sqrt{ab(N-n)/2} + \sqrt{ab(N-n-1)/2}
\end{align*}
(where we used that $1+\sqrt{z} \ge \sqrt{z+1}$ for $z \ge 0$).
Thus, $G_n(s) \le \sqrt{2ab(N-n)}$, proving the induction hypothesis \eqref{eq:Gbound} for $n$.

\end{proof}

\subsection{Proof of \cref{thm:unique-identification}}
\label{sec:thm:unique-identification-pf}

The proof relies on solving the min-max problem in \eqref{eq:cost-to-go-def}.
First, we consider the case that the best-arm-identification can be performed with probability $1$ (i.e., $\delta=0$ in the efficient identification assumption). From symmetry, it is easy to see that $V_n(\cI_n)$ only depends on the size of $\cI_n$, and not the actual arms in $\cI_n$. Therefore, to simplify notation and emphasize the dependence on the number of discovered optimal arms, we use below $V_n(|\cI_n|):=V_n(\cI_n)$. Let  $n_M=\argmin_n\{|\cI_n|=M\}$ be the first round when all optimal arms have been discovered. Then from any $n > n_M$, the adversary no longer can reveal new arms ($q=0$), and the learner should no longer explore ($p=0$), and so
\[
\forall n\geq n_M, \; V_n(M)=(N-n)\Cgood.
\]

Denoting $s=|\cI_n|$, the min-max optimization objective in \eqref{eq:cost-to-go-def} can be written as 
\begin{align*}
L(q,p) &= \Cgood + p (\Cinfo - \Cgood) + V_{n+1}(s) +\\& q (1-p) (\Cbad - \Cgood)
- p \left[q^1(V_{n+1}(s)\right.\\&\left. - V_{n+1}(s+1)) + \dots\right.\\&\qquad\qquad\left. + q^{M-s} (V_{n+1}(s) - V_{n+1}(M)) \right]\;,
\end{align*}
where $q^i$ denotes the probability that the environment reveals $i$ optimal arms in the round, and $q=\sum_{i=1}^{M-s} q^i$.
Given that $V_{n+1}(s) - V_{n+1}(s+1) < \cdots < V_{n+1}(s) - V_{n+1}(M)$,\todoa{explain why this is true} the maximizing $q$ is such that $q^i=0$ for $i>1$ and $q=q^1$. Using this, the saddle point can be obtained by solving $\partial L(q,p)/\partial q=0$ and $\partial L(q,p)/\partial p=0$:
\begin{align}
p= p_n = \frac{\Cbad - \Cgood}{\Cbad - \Cgood + V_{n+1}(s) - V_{n+1}(s+1)} \nonumber
\\
q_n = \frac{\Cinfo - \Cgood}{\Cbad - \Cgood + V_{n+1}(s) - V_{n+1}(s+1)} \label{eq:qn-}\;.
\end{align}
Plugging these values in \eqref{eq:cost-to-go-def}, we get
\begin{align*}
V_n(s) &= V_{n+1}(s) + \Cgood 
+ \frac{(\Cinfo - \Cgood)(\Cbad - \Cgood)}{\Cbad - \Cgood + V_{n+1}(s) - V_{n+1}(s+1)} \;.
\end{align*}
Given $N$ and $M$, the policy of the learner and the adversary can be computed by solving the above recursive equation. Given that for any $s<M$,  $V_n(s+1) \ge V_{n+1}(s+1) + \Cgood$, 
\begin{align*}
    V_n(s) - V_n(s+1) \le V_{n+1}(s) - V_{n+1}(s+1) + \frac{(\Cinfo - \Cgood)(\Cbad - \Cgood)}{\Cbad - \Cgood + V_{n+1}(s) - V_{n+1}(s+1)} \;.
\end{align*}
Let $G_n(s) = V_n(s) - V_n(s+1)\geq 0$ be the 
cost difference in state $s$ relative to state $s+1$.
We have
\begin{equation}
\label{eq:recursive}
G_n(s) \le G_{n+1}(s) + \frac{(\Cinfo - \Cgood)(\Cbad - \Cgood)}{\Cbad - \Cgood + G_{n+1}(s)} \;,
\end{equation}
and indeed by a telescopic argument,
\begin{align*}
    \reg_T - N B_{\tau,M} &= V_0(0) - V_0(M) =\sum_{s=0}^{M-1} (V_0(s) - V_0(s+1)) = \sum_{s=0}^{M-1} G_0(s) \;.
\end{align*}
The proof is completed by bounding $G_0(s)$ by backward induction on $n\leq N$:
\[
G_{N-n}(s) \le \sqrt{2(\Cinfo - \Cgood)(\Cbad - \Cgood) n}\;.
\]
The proof of this inequality relies on standard algebraic manipulations that can be found in Appendix~\ref{ap:induction-proof-th-unique-udentification}.
When the BAI routine returns the best arm with probability at least $1-\delta/N$, with a simple union bound argument, the probability that $\cI_n$ ever contains wrong elements is bounded by $\delta$ and the above derivations again hold.

\subsection{Proof of Theorem~\ref{thm:greedy}}
\label{ap:proof-thm-greedy}

\begin{proof}
The proof relies on the analysis of the optimization problem defined as in Eq.~\eqref{eq:cost-to-go-def} with a fixed $p_n=\hat{p}_{|S_n|}=\sqrt{|S_n|K\tau/(NB_{\tau,K})}$ (no minimization over the exploration probability of the learner in task $n$). As in the proof of Theorem~\ref{thm:unique-identification}, we assume that the best arm identification is successful, and the extension to $\delta\neq0$ can be done the same way.
After some algebraic manipulation, similarly to the proof of Theorem~\ref{thm:unique-identification}, the optimization objective can be written as 
\begin{align*}
L(q) &= V_{n+1}(S_n) + B_{\tau,|S_n|} + \hat{p}_{|S_n|} (B_{\tau,K} - B_{\tau,|S_n|})\\
& \qquad + q \left\{ (1-\hat{p}_{|S_n|}) (\tau - B_{\tau,|S_n|}) - \hat{p}_{|S_n|} (V_{n+1}(S_n) - V_{n+1}(S'_{n})) \right\}
\end{align*}
where $S'_n$ is the new greedy subset selected by the learner in time step $n+1$ if $S^*_n \cap S_n=\emptyset$ and the learner chooses to explore at time $n$ (we use the notation $S'_n$ instead of $S_{n+1}$ to emphasize that this corresponds to the aforementioned choices of the learner and the adversary).
Given that $L$ is linear in $q$, the optimal adversary choice is either $q=0$ or $q=1$ (similarly as in Theorem~\ref{thm:unique-identification}, it is suboptimal for the adversary to reveal multiple optimal arms). We have
\[ 
q =
  \begin{cases}
    0       & \quad \text{if } (1-\hat{p}_{|S_n|}) (\tau - B_{\tau,|S_n|}) - \hat{p}_{|S_n|} (V_{n+1}(S_n) - V_{n+1}(S'_n)) \le 0\,, \\
    1  & \quad \text{otherwise } 
  \end{cases}
\]
When $q=0$, $V_n(S_n) = V_{n+1}(S_n) + B_{\tau,|S_n|} + \hat{p}_{|S_n|} (B_{\tau,K} - B_{\tau,|S_n|})$, and given that $|S_n|\le M'$, the total contribution of these rounds to the regret is bounded by 
\[
N B_{\tau,M'} + \hat{p}_{M'} N B_{\tau,K} \;.
\]
Next, consider the rounds where $q=1$. Among these rounds, consider rounds where the adversary chooses a particular arm $a\in S^*$ and the learner chooses to explore (\textsc{Exr}). This arm is not added to the future \textsc{Ext} subset of the learner if instead another arm is used to cover this round. This means that after at most $K$ such rounds, the learner adds $a$ to the \textsc{Ext} subset. Since the learner's regret in the exploration rounds is $B_{\tau,K}$, in these rounds the cumulative regret is bounded  by $MKB_{\tau,K}$. Since the random choices made by the learner and the adversary are independent in the same round, we discover the first arm in $K/\hat{p}_1$ tasks in expectation, the second in $K/\hat{p}_2$ tasks in expectation, and so on. Thus, since the size of our cover is at most $M'$, we get 
\begin{align*}
K\sum_{s=1}^{M'} \frac{1}{\hat{p}_s} &= \sqrt{\frac{KNB_{\tau,K}}{\tau}} \sum_{s=1}^{M'} \frac{1}{\sqrt{s}} 
\le 2\sqrt{\frac{M' KNB_{\tau,K}}{\tau}} \;.
\end{align*}
The adversary reveals all positions after $2\sqrt{\frac{M' KNB_{\tau,K}}{\tau}}$ such tasks in expectation where the adversary's choice is $q=1$. If in these tasks the learner chooses to exploit, it can suffer a regret $\tau$, leading to a total expected regret of at most $2\sqrt{M' KNB_{\tau,K}\tau}$. Thus, the total regret of rounds with $q=1$ is bounded by
\[
2\sqrt{M' KNB_{\tau,K}\tau} + M K B_{\tau,K} \;.   
\]
Therefore,
\[
\reg_T = V_0(\emptyset) \le N B_{\tau,M'} + M K B_{\tau,K} + 3\sqrt{M K \tau B_{\tau,K} N} \;.
\]

\end{proof}

\section{Partial monitoring and bandit meta-learning}
\label{app:partial-monitoring}

Partial monitoring is a general framework in online learning that disentangles rewards and observations (information). It is a game where the learner has $Z$ actions and the adversary has $D$ actions, and it is characterized by two $Z\times D$ matrices (not observed): matrix $C$ maps the learner's action to its cost given the adversary's choice, and matrix $X$ maps the learner's action to its observation given the adversary's choice. 
In all generality, we consider bandit meta-learning problems with $Z+1$ learner actions: an $\exr$ action that provides information for a cost $\Cinfo$, and $Z$ other actions that do not provide information but have a hidden cost $\Cgood$ or $\Cbad$ depending on whether the chosen action had low or high cost respectively. 

As defined in the introduction, a bandit subset selection problem is realizable when there is a subset of size $M$ that contains an optimal arm in all rounds. Otherwise, the problem is called agnostic. 

In our bandit subset selection problem, $Z= {K \choose M }\leq K^M$ and the adversary can have up to $2^K$ choices depending on the realizable or agnostic nature of the problem. We have $D=M$ if the problem is realizable and if the adversary is constrained to picking a unique optimal arm in each round.
For example, let $M=2$ and $K=4$. There are $Z+1={4 \choose 2} +1 = 7$ learner actions and only $D=2$ possible choices for the adversary
\[
\begin{pmatrix}
\exr \\
\{1,2\}=x^* \\
\{1,3\} \\
\{1,4\} \\
\{2,3\} \\
\{2,4\} \\
\{3,4\}
\end{pmatrix} 
\to C = 
\begin{pmatrix}
\Cinfo & \Cinfo \\
\Cgood & \Cgood \\
\Cgood & \Cbad \\ 
\Cgood & \Cbad \\ 
\Cbad & \Cgood \\
\Cbad & \Cgood \\
\Cbad & \Cbad
\end{pmatrix}
\,,\qquad
X = 
\begin{pmatrix}
1 & 2 \\
\perp & \perp  \\
\vdots & \vdots  \\ 
\perp & \perp 
\end{pmatrix}
\;.
\]
The symbol $\perp$ is used to denote no observations. We use $C_{i,y}$ to denote the cost of action $i\in \{\exr,x_1,\ldots,x_Z\}$ when adversary chooses $a\in [D]$. Thanks to this reduction, we can leverage the partial monitoring literature to obtain an algorithm and the corresponding bounds for our problem as well. We detail this process below. Note that using the vocabulary of online learning, the learner's actions are referred to as "experts".

Next, we describe an algorithm based on the Exponentially Weighted Average (EWA) forecaster. The learner estimates the cost matrix by importance sampling when action $\exr$ is chosen. When $\ext$ is chosen, the learner samples an expert according to EWA weights that depend on the estimated cost matrix. The pseudo-code of the method is shown in Algorithm~\ref{alg:EWA}.   

\begin{algorithm}[tb]
\caption{The partial monitoring algorithm}
 \label{alg:EWA}
 \begin{algorithmic}[1]
 
\STATE Exploration probability $p\in (0,1)$, learning rate $\eta>0$, base costs $\Cinfo, \Cgood,\Cbad$ \;
     \FOR{$n=1,2,\dots,N$}{
              
            \STATE With probability $p$, let $E_n=\textsc{Exr}$ and otherwise $E_n=\textsc{Ext}$\;
      \IF{$E_n=\textsc{Exr}$}{
            \STATE Observe the best arms $S^*_{n}$ of this round and for all $i\in\textsc{Ext}$ experts, observe cost $C_{i,S^*_n}$ and let $\widehat C_n(i) = (C_{i,S^*_n}-\Cgood)/p$\;
               
            \STATE Update exponential weights $Q_{n,i} \propto \exp(-\eta \sum_{\tau=1}^{n} \widehat C_n(i))$\;
               Suffer cost $\Cinfo$\;
       }
       \ELSE{
            \STATE Sample $S_n\sim Q_{n-1}$\;
               
            \STATE Suffer (but do not observe) cost $\Cgood$ if $S^*_n \cap S_n \neq \emptyset$ and suffer cost $\Cbad$ otherwise\;
      }\ENDIF
 }\ENDFOR
 \end{algorithmic}
\end{algorithm}

To analyze the algorithm, we consider the realizable and agnostic cases. In the realizable case, there a subset of size $M$ that contains an optimal arm in all rounds. In this case, the exponential weights distribution reduces to a uniform distribution over the subsets that satisfy this condition. %
\begin{theoremi}
\label{thm:PM-app}
Consider the partial monitoring algorithm shown in Algorithm~\ref{alg:EWA}. In the agnostic case, with the choice of $p=O\left(\left(\frac{\Cbad^2 \log Z}{\Cinfo^2 N}\right)^{1/3} \right)$ and $\eta=O\left(\left(\frac{\log^{2} Z}{\Cinfo  \Cbad^{2}N^{2}}\right)^{1/3}\right)$, the regret of the algorithm is bounded as $O((\Cinfo \Cbad^{2} N^{2} \log Z)^{1/3})$. In the realizable case, with the choice of $p=\sqrt{\frac{\Cbad\log Z}{\Cinfo N}}$ and $\eta=1$, the regret of the algorithm is bounded as $O(\sqrt{\Cinfo \Cbad N \log Z})$.
\end{theoremi}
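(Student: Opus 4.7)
The plan is to treat \cref{alg:EWA} as an instance of the exponentially weighted average (EWA) forecaster over the $Z$ \textsc{Ext} experts, driven by an importance-weighted cost estimator, and to trade the resulting exploitation regret against the deterministic exploration cost $pN\Cinfo$. Throughout it is convenient to work with the shifted costs $C_{i,y}-\Cgood\in[0,\Cbad-\Cgood]$, since the additive shift cancels in the regret definition.

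\textbf{Key steps.} First, observe that $\widehat{C}_n(i)=\one\{E_n=\textsc{Exr}\}(C_{i,S^*_n}-\Cgood)/p$ is non-negative, conditionally unbiased for $C_{i,S^*_n}-\Cgood$, and has conditional second moment equal to $(C_{i,S^*_n}-\Cgood)^2/p\le (\Cbad-\Cgood)(C_{i,S^*_n}-\Cgood)/p$ (the refined bound follows from the fact that realized shifted losses lie in $\{0,\Cbad-\Cgood\}$, or more generally in $[0,\Cbad-\Cgood]$). Second, non-negativity together with the pointwise inequality $e^{-x}\le 1-x+x^2/2$ valid on all of $[0,\infty)$ gives the unconditional per-step estimate $\log\sum_i Q_n(i)e^{-\eta\widehat{C}_n(i)}\le -\eta\langle Q_n,\widehat C_n\rangle+(\eta^2/2)\langle Q_n,\widehat C_n^2\rangle$. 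This step is crucial, because the importance-weighted loss has range $\Theta(1/p)$ and thus \emph{violates} the standard Hoeffding-style constraint $\eta\widehat C_n\le 1$. Third, telescoping $\log(W_n/W_{n-1})$ with $W_n=\sum_i\exp(-\eta\sum_{m\le n}\widehat C_m(i))$ and using $-\log W_N\le \eta\sum_n\widehat C_n(i^*)$ for any fixed competitor $i^*$ yields the usual EWA master inequality. Taking expectations, using unbiasedness of $\widehat{C}_n$, and using the refined variance bound gives, for $X\doteq\mathbb{E}\sum_n\langle Q_n,C_{\cdot,S^*_n}-\Cgood\rangle$ and $L^*\doteq\min_i\sum_n(C_{i,S^*_n}-\Cgood)$,
\begin{equation*}
X\;\le\; L^* \;+\; \frac{\log Z}{\eta} \;+\; \frac{\eta(\Cbad-\Cgood)}{2p}\,X.
\end{equation*}
Adding the exploration contribution finally yields $R_N\le pN\Cinfo + X - L^*$.

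\textbf{Tuning and the two regimes; main obstacle.} In the agnostic case one uses the crude variance bound $\mathbb{E}\langle Q_n,\widehat C_n^2\rangle\le(\Cbad-\Cgood)^2/p$ in place of the self-referential bound, obtaining $R_N\lesssim pN\Cinfo + \log Z/\eta + \eta N\Cbad^2/p$. A routine three-way AM--GM balance (pair $\eta^{-1}\log Z$ with $\eta N\Cbad^2/p$ first, then pair the resulting $\sqrt{N\Cbad^2\log Z/p}$ with $pN\Cinfo$) gives $\eta\asymp(\log^2 Z/(\Cinfo\Cbad^2 N^2))^{1/3}$, $p\asymp(\Cbad^2\log Z/(\Cinfo^2 N))^{1/3}$, and $R_N=O((\Cinfo\Cbad^2 N^2\log Z)^{1/3})$, matching the stated rate. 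In the realizable case $L^*=0$, and the refined self-bounding inequality $X\le \log Z/\eta+\eta(\Cbad-\Cgood)/(2p)\cdot X$ can be solved: choosing $\eta$ small enough that $\eta(\Cbad-\Cgood)/(2p)$ is a constant strictly below $1$ (for instance $\eta\asymp p/(\Cbad-\Cgood)$, which reproduces $\eta=1$ up to constants after the appropriate rescaling) gives $X=O(\Cbad\log Z/p)$; then optimizing $pN\Cinfo+\Cbad\log Z/p$ at $p=\sqrt{\Cbad\log Z/(\Cinfo N)}$ recovers $R_N=O(\sqrt{\Cinfo\Cbad N\log Z})$. The main technical obstacle is precisely this interplay between the large $1/p$ range of the estimator and the need to allow a not-too-small $\eta$: the standard Hoeffding EWA bound is unavailable, and the proof must exploit non-negativity of the shifted loss estimator, both to validate the pointwise inequality on the log-partition function globally on $[0,\infty)$, and to absorb the second-moment term into a multiple of the exploitation loss itself in the realizable regime.
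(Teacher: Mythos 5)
Your proposal is correct and follows essentially the same route as the paper: the same exploration/exploitation decomposition $pN\Cinfo$ plus the EWA regret on the importance-weighted shifted costs $\widehat C_n(i)=\one\{E_n=\exr\}(C_{i,S^*_n}-\Cgood)/p$, the same second-order regret inequality with the $\tfrac{\eta}{2}\sum_n\|\widehat C_n\|_\infty^2$ (resp.\ self-bounding) variance term, and the same parameter balancing in both regimes. The only difference is that you derive inline (via $e^{-x}\le 1-x+x^2/2$ on $[0,\infty)$ and the self-bounding inequality for the realizable case) the two EWA facts that the paper simply cites from \citet{PLG}, which makes your argument more self-contained but not materially different.
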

\begin{proof}
Let function $f_n:[Z+1]\times [D]\rightarrow \R^{Z+1}$ be defined by
\[
f_n(k,X_{k,y})_i = \one\{k=\textsc{Exr}\} (C_{i,y}-\Cgood) \;.
\]
Therefore, $\sum_{k=1}^{Z+1}f_n(k,X_{k,y})_i = C_{i,y}-\Cgood$. With probability $p$, let $E_n=\textsc{Exr}$ and otherwise $E_n=\textsc{Ext}$. Let $C_n(i)=C_{i,Y_n}$. Define cost estimator
\[
\widehat C_{n,i} = \frac{f_n(E_n,X_{E_n,Y_n})_i}{p} = \frac{\one\{E_n=\textsc{Exr}\} (C_n(i)-\Cgood)}{p}\;.
\]
Let $Q_n$ be the weights of the EWA forecaster defined using the above costs. For any $i$, we have $\E(\widehat C_{n}(i))=C_n(i)-\Cgood$. Let $E_n$ be the learner's decision in round $n$, that is either \textsc{Exr} or a subset chosen by EWA, in which case it is denoted by $x_n$. We have
\[
\E(C_n(E_n)) = p \Cinfo + (1-p) \E(C_n(S_n)) \;.
\]
Let $S^*$ be the optimal subset. By the regret bound of EWA \citep{PLG},
\[
\sum_{n=1}^N \widehat C_n(S_n) - \sum_{n=1}^N \widehat C_n(S^*) \le \frac{\log Z}{\eta} + \frac{\eta}{2}\sum_{n=1}^N \|\widehat C_n\|_\infty^2 \;.
\]
Thus,
\begin{align*}
\sum_{n=1}^N \E(C_n(E_n)) - \sum_{n=1}^N \E(C_n(S^*)) &\le \Cinfo \sum_{n=1}^N p  + \frac{\log Z}{\eta} + \frac{\eta}{2}\sum_{n=1}^N \E(\|\widehat C_n\|_\infty^2) \\
&\le \Cinfo \sum_{n=1}^N p + \frac{\log Z}{\eta} + \frac{\eta \Cbad^2}{2}\sum_{n=1}^N \frac{1}{p} \;.
\end{align*}
With the choice of $p=O((\Cbad/\Cinfo)^{2/3}(\log^{1/3} Z)/N^{1/3})$ and $\eta=O((\log^{2/3} Z)/(\Cbad^{2/3} \Cinfo^{1/3}N^{2/3}))$, the regret of the partial monitoring game is bounded as $O(\Cbad^{2/3} \Cinfo^{1/3}N^{2/3} \log^{1/3} Z)$. The regret scales logarithmically with the number of experts, and is independent of the number of adversary choices.

Next, we show a fast $O(\sqrt{N})$ rate when the optimal expert always has small cost. More specifically, we assume that $C_n(S^*)=\Cgood$ for the optimal expert $S^*$. The fast rate holds independently of the relative values of $\Cgood,\Cinfo$, and $\Cbad$. The algorithm can also be implemented efficiently. 

Let $\widehat \ell_n = p \widehat C_n/\Cbad$, which is guaranteed to be in $[0,1]$. Notice that $\sum_{n=1}^N \widehat \ell_n(S^*)=0$ as $C_n(S^*)=\Cgood$ by assumption. In this case, the regret of EWA is known to be logarithmic:
\[
\sum_{n=1}^N \widehat \ell_n(S_n) - \sum_{n=1}^N \widehat \ell_n(S^*) = O(\log Z) \;.
\]
Thus,
\[
\sum_{n=1}^N \E(C_n(E_n)) - \sum_{n=1}^N \E(C_n(S^*)) \le \Cinfo \sum_{n=1}^N p + \frac{\Cbad \log Z}{p} \;.
\]
Therefore, with the choice of $p=\sqrt{\frac{\Cbad\log Z}{\Cinfo N}}$,
\[
\sum_{n=1}^N \E(C_n(E_n)) - \sum_{n=1}^N \E(C_n(S^*)) \le O(\sqrt{\Cinfo \Cbad N \log Z}) \;.
\]
The meta-regret scales logarithmically with the number of experts, and is independent of the number of adversary choices. Given that the optimal expert is known to have small loss in all rounds, the learner can eliminate all other experts. Therefore, the EWA strategy reduces to a uniform distribution over the surviving experts. %

\end{proof}

\subsection{Proof of Theorem~\ref{thm:PM}}
\label{app:proof-thm-PM}

\pmml~is constructed as a special case of the EWA algorithm above, where the sampling distribution at each $\ext$ round is simply the uniform distribution over the surviving experts. The proof of Theorem~\ref{thm:PM} is therefore a direct consequence of the more general analysis done for the EWA forecaster in Theorem~\ref{thm:PM-app} above.

\begin{proof}
The BAI algorithm might return a number of extra arms in addition to the optimal arm. However, since with high probability the optimal arm is always in the surviving set, the cost estimate for the optimal subset is always zero, and costs of all other subsets are under-estimated. Therefore, if $S_n$ is the expert (subset) selected in task $n$ and $S^*$ is the optimal subset, by fast rates of the previous section,
\[
\sum_{n=1}^N \E(C_n(S_n)) - \sum_{n=1}^N \E(C_n(S^*)) \le O(\sqrt{\Cinfo \Cbad N \log Z}) \;.
\]
Given that with high probability the optimal arm is always in the surviving set and therefore $C_n(S^*)=\Cgood$,
\begin{align*}
\reg_T &=  \sum_{n=1}^N \E\left(  \tau r_n(a_n^*) - \sum_{t=1}^\tau r_n(A_{n,t}) \right)\le \sum_{n=1}^N \E(C_n(S_n)) \le N \Cgood + O(\sqrt{\Cinfo \Cbad N \log Z}) \\
&= N \sqrt{M \tau} + O(\sqrt{\Cinfo \Cbad N \log Z})\\
& = N\sqrt{M \tau} + O(\tau^{3/4}K^{1/4}\sqrt{NM\log(K)}\,,
\end{align*}
where the first inequality holds by the fact that $\E(C_n(S_n))$ is an upper bound on the regret for task $n$. 
\end{proof}

\section{Further experimental details and results}\label{sec:FurtherExperiments}

This section consists of further experimental details and results. 
We use the code in the following repository: \url{https://anonymous.4open.science/r/meta-bandit-760E/README.md}. We used a server machine with the following configuration:
OS: Ubuntu 18.04 bionic, Kernel: x86\_64 Linux 4.15.0-176-generic, CPU: Intel Core i9-10900K @ 20x 5.3GHz, GPU: GeForce RTX 2080 Ti, RAM: 128825 MiB, DISK: 500 GB SSD.

\subsection{Setup}
In each experiment, the adversary first samples the size $M$ set of optimal arms, $S^*:=\cup_n S^*_n$, uniformly at random (without replacement) from $[K]$. The mean reward of task $n$, $r_n \in \mathcal{R}=[0,1]^K$, is then generated according to the experiment setup as described in the following.

\paragraph{The optimal arm:} We categorize the experiments into three settings based on how the optimal arm is generated: i) \emph{non-oblivious adversarial}, ii) \emph{oblivious adversarial}, and iii) \emph{stochastic}.

i) In the adversarial setting with a \emph{non-oblivious} adversary, the adversary peeks into the learner's set of discovered arms, $S_n$, at the end of each task. With probability $q_n$ (see \cref{eq:qn-}), the adversary chooses a new optimal arm uniformly at random from $[K]\backslash S_n$. Otherwise, the next optimal arm is chosen uniformly at random from $S_n$. 

ii) The \emph{oblivious} adversary is applicable against any learner even if the learner does not maintain a set of
discovered arms. Here the adversary simulates an imaginary \greedy algorithm with a minimax optimal $p_n$ (see \cref{eq:qn-}). Then it samples new optimal arms and generates the reward sequence with respect to this imaginary learner. This is the same as the non-oblivious adversary except here the adversary plays against an imaginary learner. 

iii) In the \emph{stochastic} setting, for each task $n$, the environment samples the optimal arm uniformly at random from the optimal set, i.e., $a^*_n\sim \text{Uniform}(S^*)$.

Note that in the \emph{non-oblivious} setting, the rewards are generated at the start of each task, according to the learner's discovered arms. In the other settings, however, rewards of all the tasks could be generated at the very beginning, independently of the learner.
\paragraph{The sub-optimal arms (min gap):} Based on the discussion after \cref{assumption:identification}, the minimum gap for the assumption to hold is 
\[
r_n(a^*) - \max_{a \neq a^*}r_n(a) \geq \Delta\;,
\]
where $\Delta = \Theta(\sqrt{K\log(N/\delta)/\tau})$. After generating the optimal arm, depending on the setting, the rewards of other arms are generated in two ways: 1) with a minimum gap condition uniformly at random in $[0,r_n(a^*_n)-\Delta)$ and 2) without a minimum gap condition uniformly at random in $[0,r_n(a^*_n)]$. In the second case, the mean reward is generated such that the gap condition is violated by at least 1 sub-optimal arm.

\paragraph{Task length and PE:} As we know, task length plays an important role in regard to PE performance. In the case where \cref{assumption:identification} holds, we set the phase length based on $\Delta$ 
and make sure $\tau$ is longer than the length of the first phase of PE. For more details, see the analysis of PE \citep{AO-2010} in exercise 6.8 (elimination algorithm) of \citet{lattimore-Bandit}.

\paragraph{\cref{assumption:identification}:}
We have two types of experiments considering \cref{assumption:identification}: i) In the experiments where \cref{assumption:identification} is supposed to hold, we make sure the task length is longer than the first phase of PE and the minimum gap condition holds (case 1 in the discussion on the min gap). 
ii) In the tasks where \cref{assumption:identification} is supposed to be violated, we use case 2 in the discussion on the min gap above with a small $\tau$ so that PE fails.

\subsection{Further experiments} 
Next, we report the experimental results under different conditions. Error bars are $\pm1$ standard deviation, computed over $5$ independent runs.

\cref{fig:Gap} shows the result when \cref{assumption:identification} holds, where \greedy almost matches the \OptMoss, outperforming the other algorithms. 
\cref{fig:NoIden} shows the results when \cref{assumption:identification} does not hold. 
In this case, we observe that \BOG outperforms the other algorithms and is close to \OptMoss. Here \greedy is less effective and sometimes has large variance due to the failure of PE.

\begin{figure*}[htb!]
    \centering
    \begin{minipage}{.24\textwidth}
        \includegraphics[width=\textwidth]{img/Adversarial_task_exp.png}
    \end{minipage}
    \begin{minipage}{.24\textwidth}
        \includegraphics[width=\textwidth]{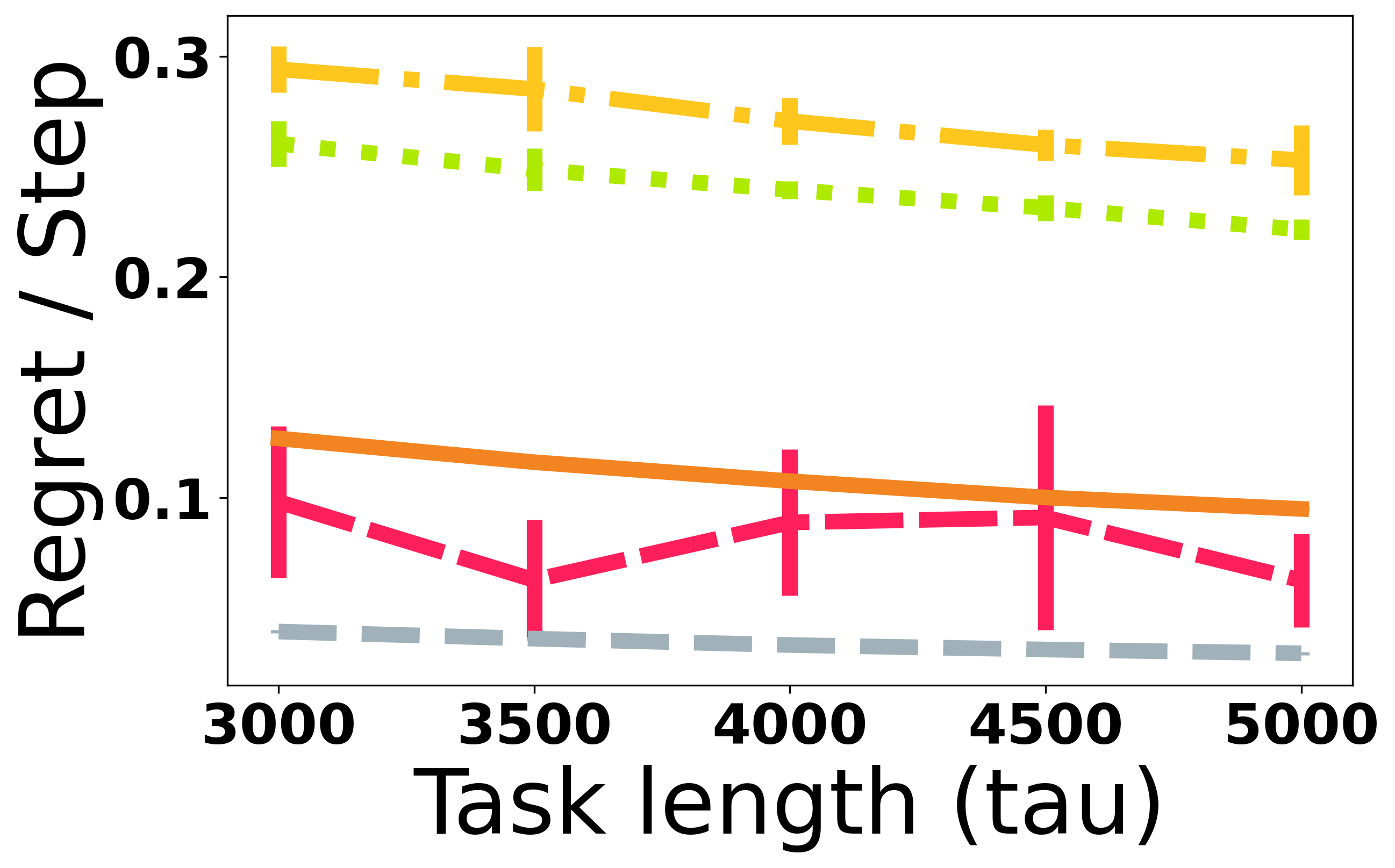}
    \end{minipage}
    \begin{minipage}{.24\textwidth}
        \includegraphics[width=\textwidth]{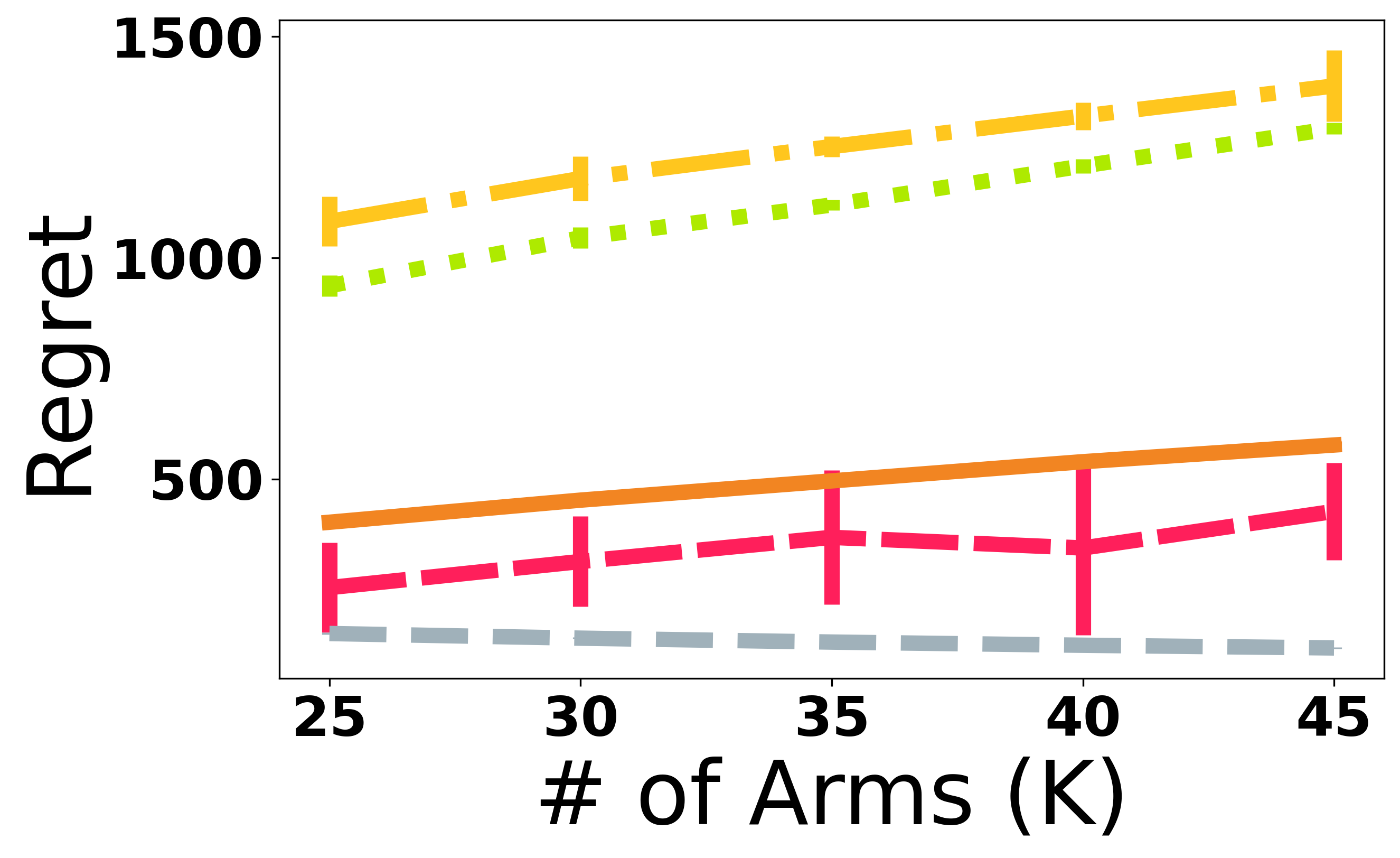}
    \end{minipage}
    \begin{minipage}{.24\textwidth}
    \includegraphics[width=\textwidth]{img/Adversarial_subset_exp.png}
    \end{minipage}
    \includegraphics[width=\textwidth]{img/legend5.png}
    \vspace{-0.8cm}
    \caption{Oblivious adversarial setting with \cref{assumption:identification}.
    Default setting: $\setupOne$. 
    \greedy is near-optimal on all tasks.
    Left to Right: Regret as a function of $N$, $\tau$, $K$, and $M$.
    }
    \label{fig:Gap}
\end{figure*}

\begin{figure}[htb!]
    \centering
    \begin{minipage}{.24\textwidth}
        \includegraphics[width=\textwidth]{img/no_gap_small_tau_Adversarial_task_exp.png}
    \end{minipage}
    \begin{minipage}{.24\textwidth}
        \includegraphics[width=\textwidth]{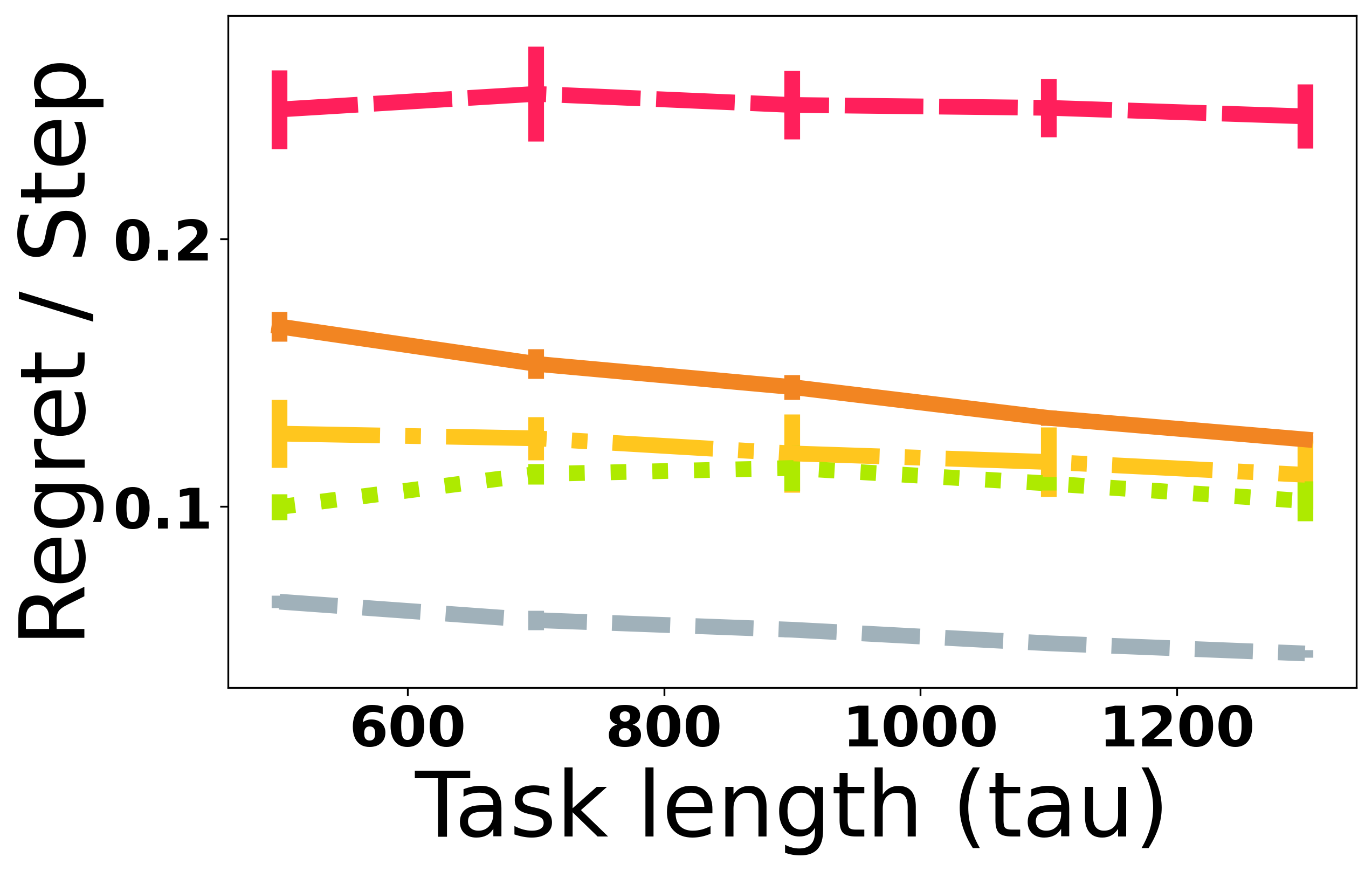}
    \end{minipage}
    \begin{minipage}{.24\textwidth}
        \includegraphics[width=\textwidth]{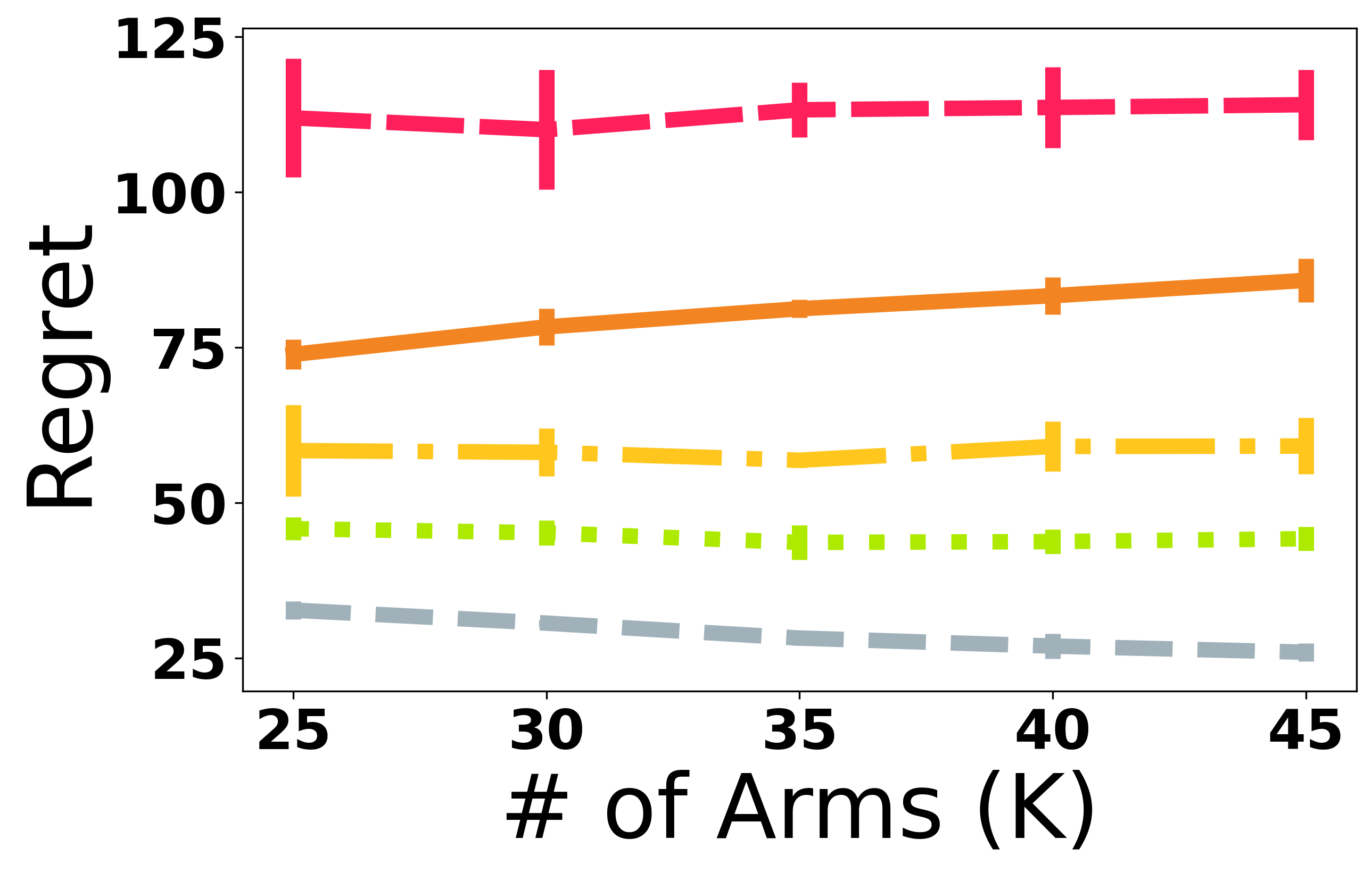}
    \end{minipage}
    \begin{minipage}{.24\textwidth}
    \includegraphics[width=\textwidth]{img/no_gap_small_tau_Adversarial_subset_exp.png}
    \end{minipage}
    \includegraphics[width=\textwidth]{img/legend5.png}
    \vspace{-0.8cm}
    \caption{Oblivious adversarial setting \NoGapSmallt. 
    Default setting: $\setupTwo$. 
    $\BOG$ is near-optimal on all tasks and outperforms \OGO. 
    Left to Right: Regret as a function of $N$, $\tau$, $K, M$.
    }
    \label{fig:NoIden}
\end{figure}

\cref{fig:E-BASS} demonstrates the performance of \pmml when \cref{assumption:identification} holds. We can see that \pmml outperforms all other baselines. For large $M$, \greedy seems to be more effective than the others. \cref{fig:E-BASS_no_gap} compares \pmml to the other algorithms when \cref{assumption:identification} does not hold. \BOG is competitive with \pmml and outperforms it for larger $M$. Comparing \cref{fig:E-BASS} and \cref{fig:E-BASS_no_gap}, we can see that \greedy and \pmml perform better if \cref{assumption:identification} holds.

\begin{figure}[htb!]
    \centering
    \begin{minipage}{.24\textwidth}
        \includegraphics[width=\textwidth]{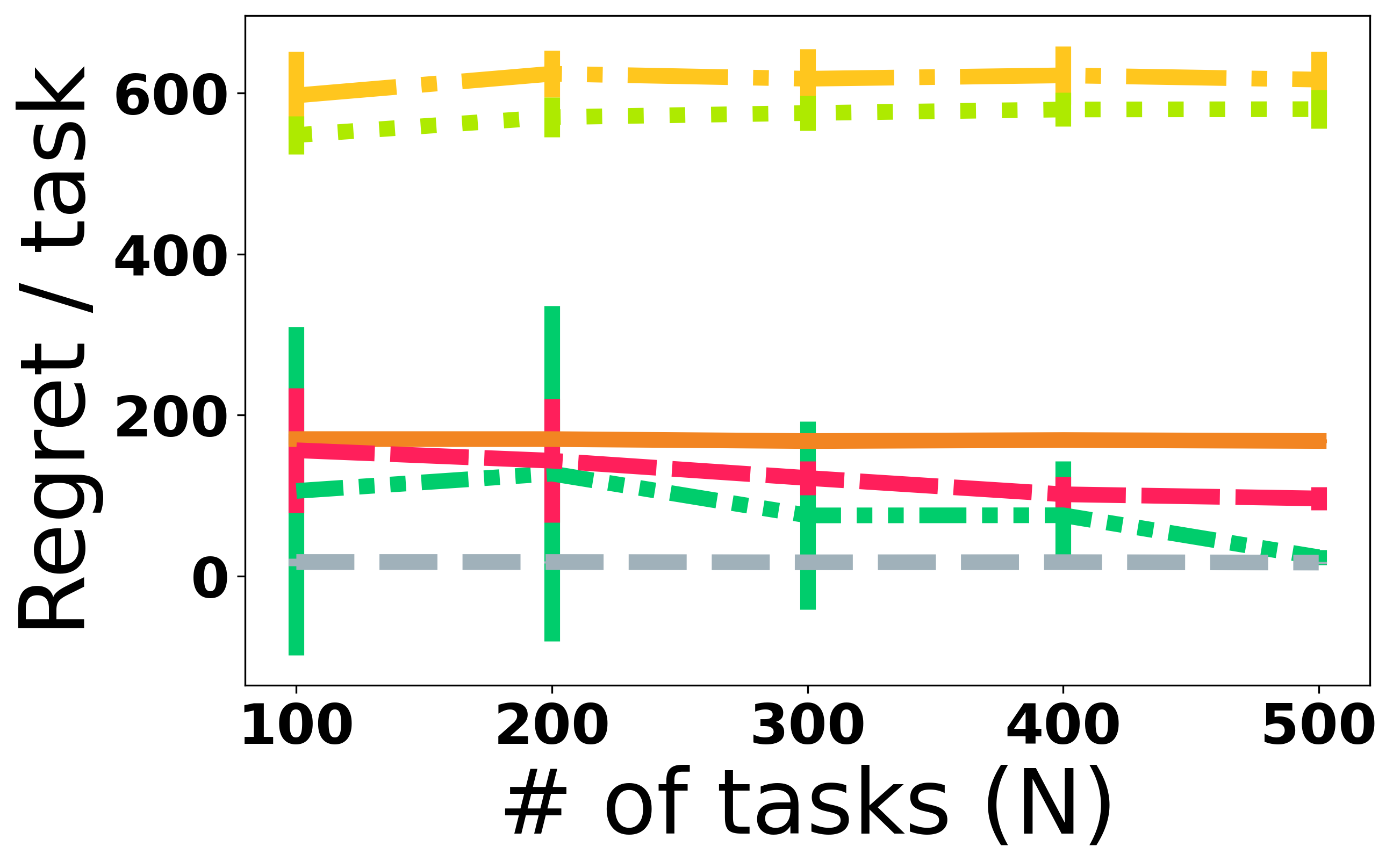}
    \end{minipage}
    \begin{minipage}{.24\textwidth}
        \includegraphics[width=\textwidth]{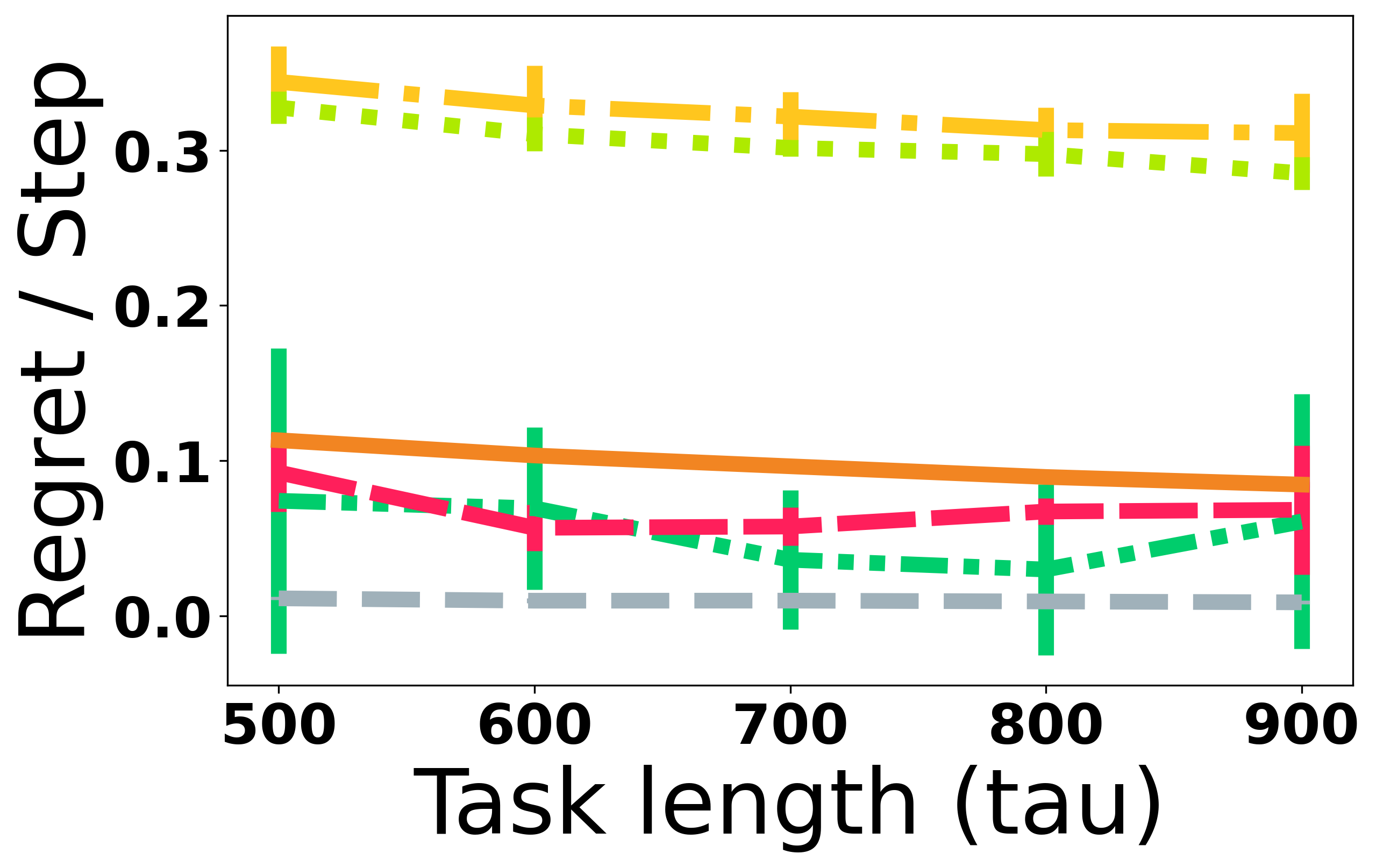}
    \end{minipage}
    \begin{minipage}{.24\textwidth}
        \includegraphics[width=\textwidth]{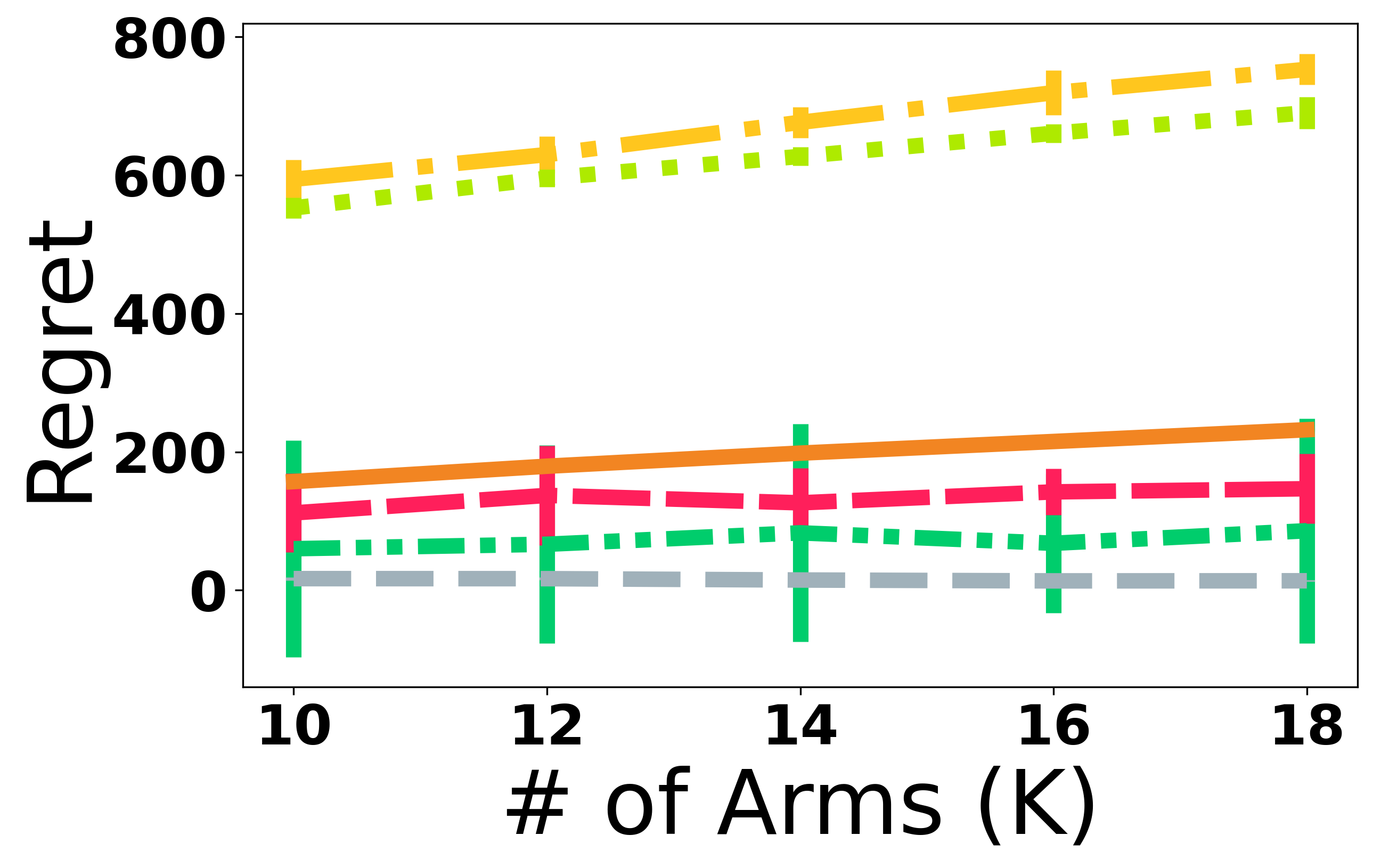}
    \end{minipage}
    \begin{minipage}{.24\textwidth}
    \includegraphics[width=\textwidth]{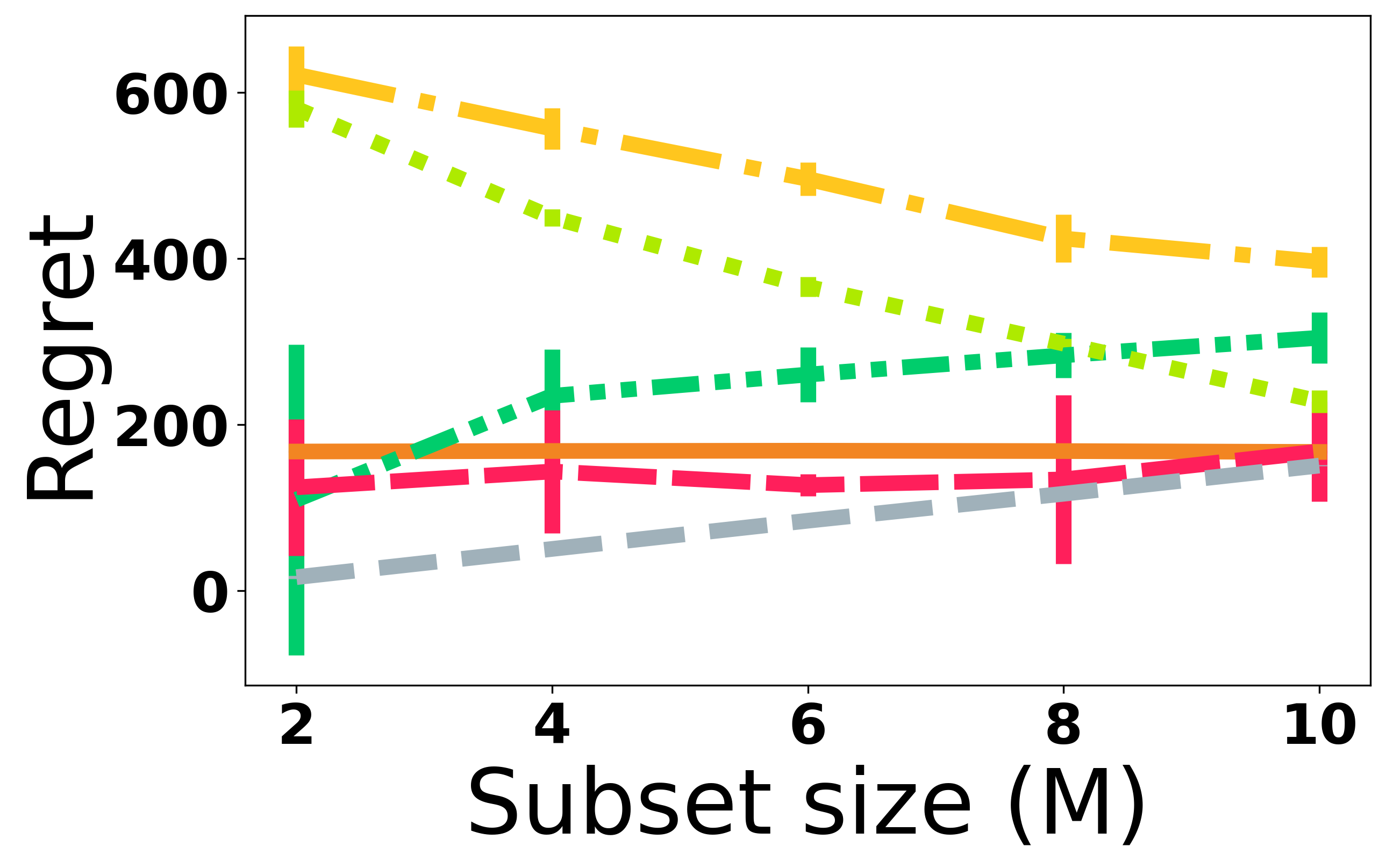}
    \end{minipage}
    \includegraphics[width=\textwidth]{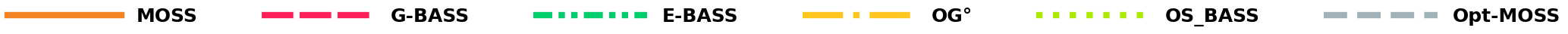}
    \vspace{-0.8cm}
    \caption{\pmml's performance in the oblivious adversarial setting with \cref{assumption:identification}.
    Default setting: ($N$, $\tau$, $K, M$) = (400, 2000, 11, 2). 
    \pmml outperforms other algorithms. 
    Left to Right: Regret as a function of $N$, $\tau$, $K, M$.
    }
    \label{fig:E-BASS}
\end{figure}

\begin{figure}[htb!]
    \centering
    \begin{minipage}{.24\textwidth}
        \includegraphics[width=\textwidth]{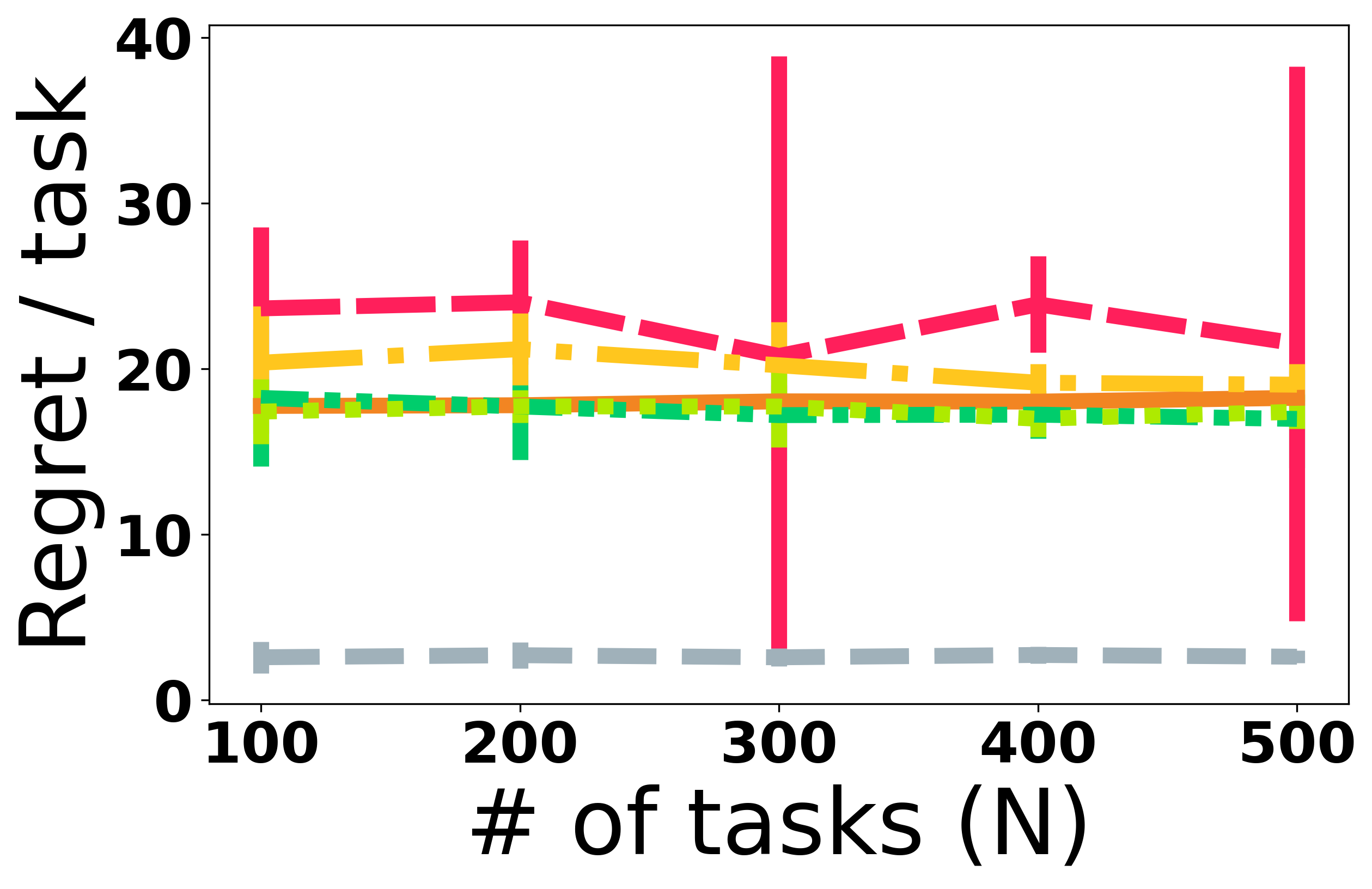}
    \end{minipage}
    \begin{minipage}{.24\textwidth}
        \includegraphics[width=\textwidth]{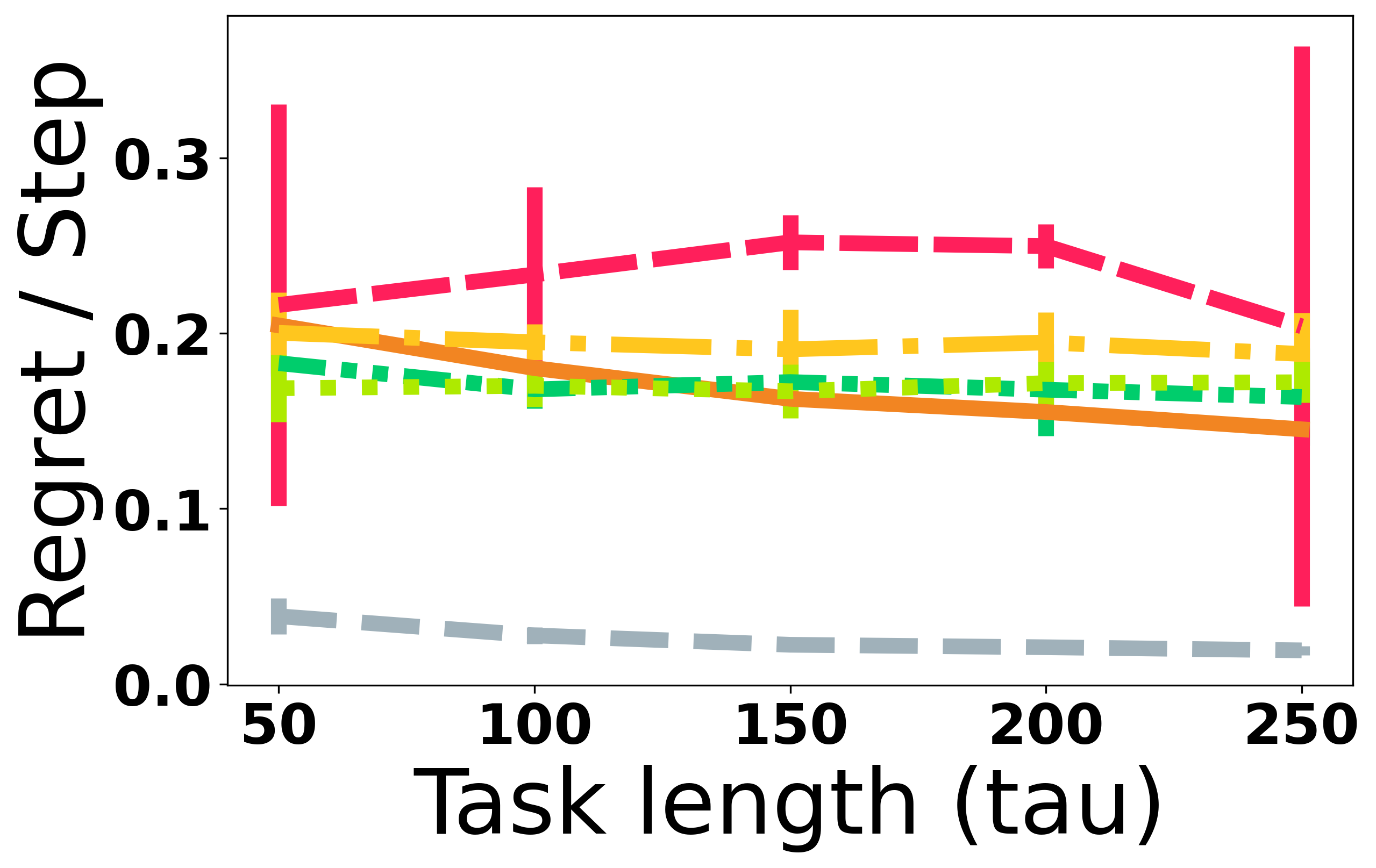}
    \end{minipage}
    \begin{minipage}{.24\textwidth}
        \includegraphics[width=\textwidth]{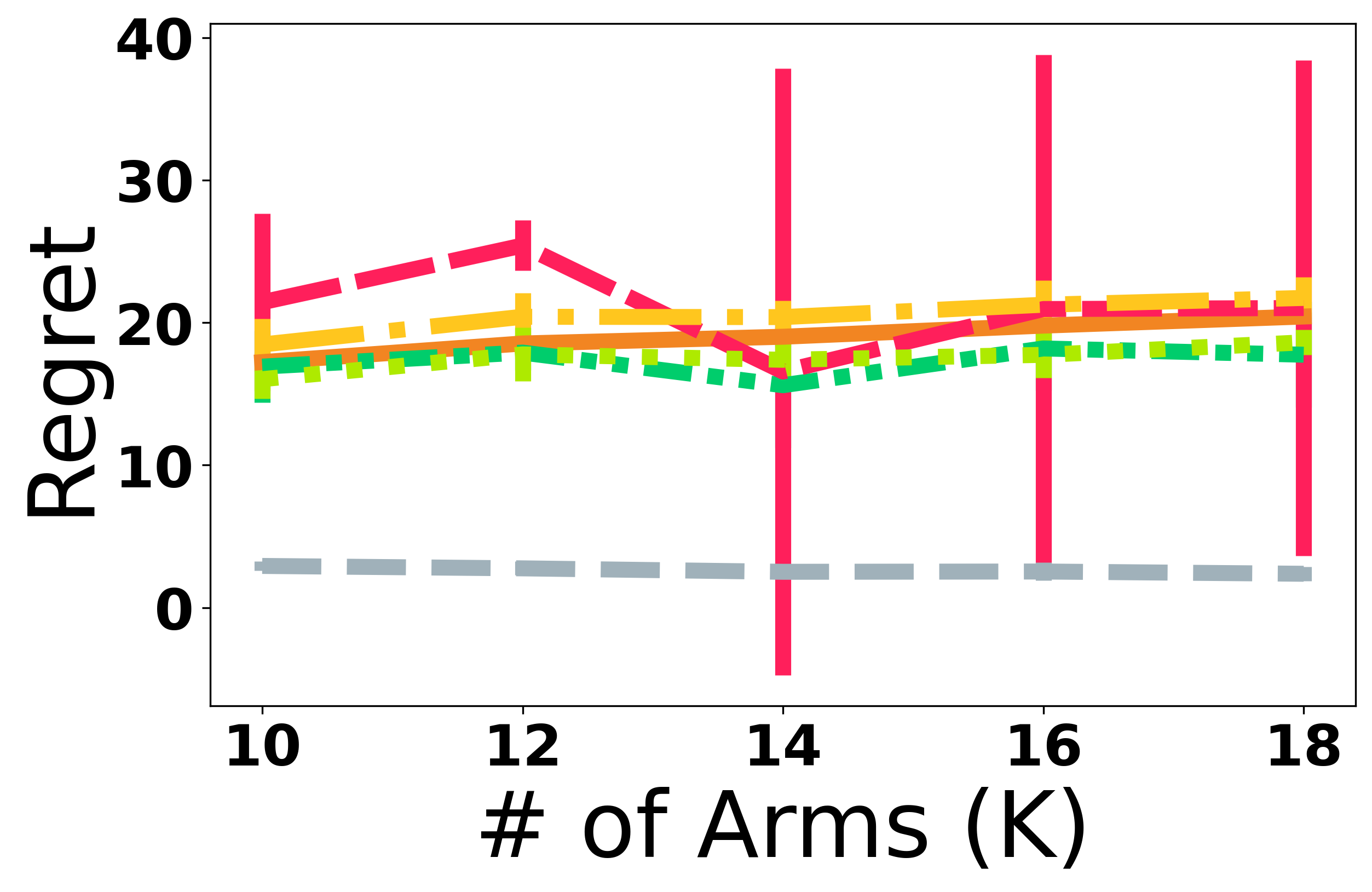}
    \end{minipage}
    \begin{minipage}{.24\textwidth}
    \includegraphics[width=\textwidth]{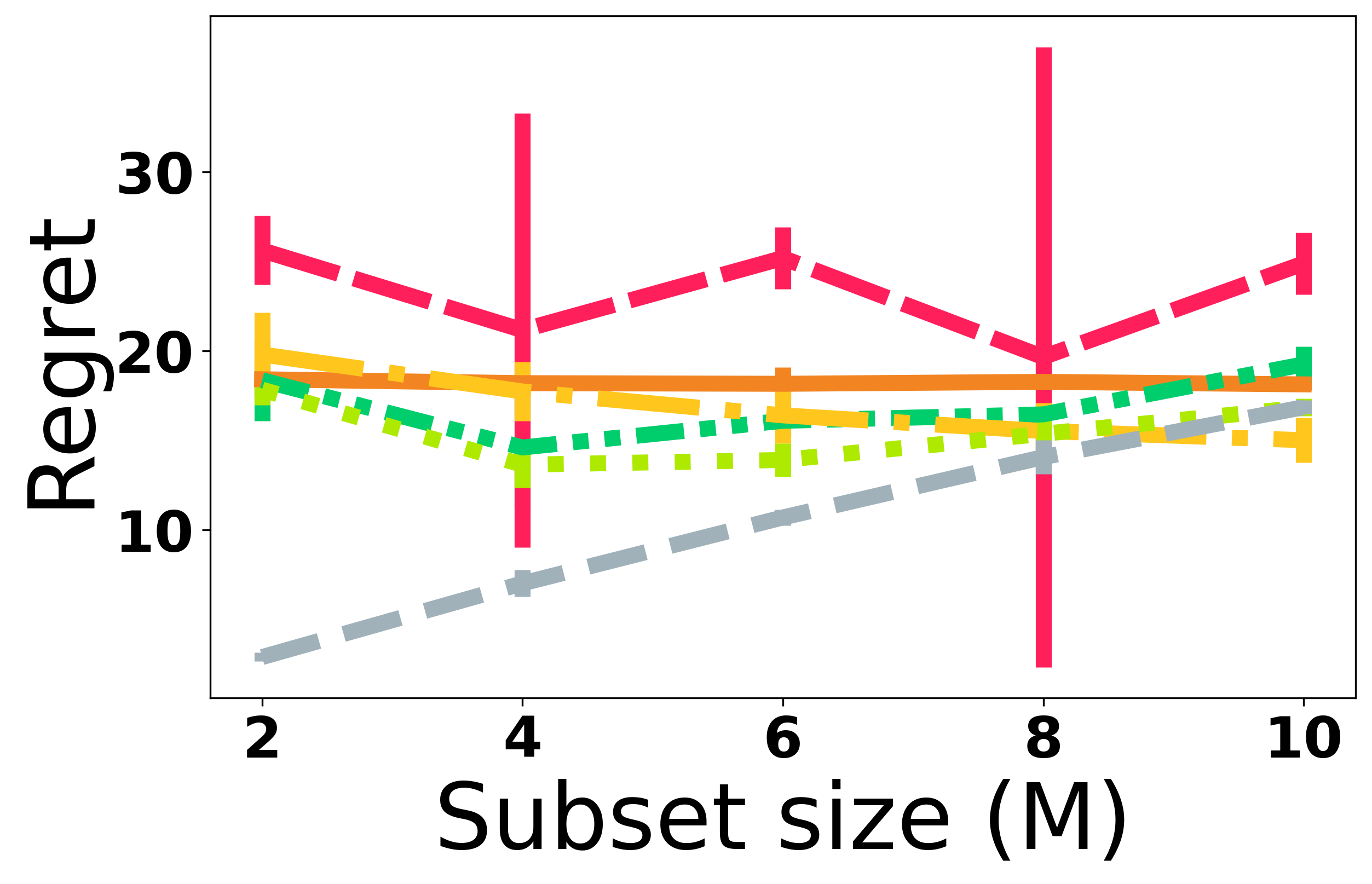}
    \end{minipage}
    \includegraphics[width=\textwidth]{img/legend6.png}
    \vspace{-0.8cm}
    \caption{\pmml's performance in the oblivious adversarial setting \NoGapSmallt. 
    Default setting: ($N$, $\tau$, $K, M$) = (400, 100, 11, 2). 
    \pmml and \BOG win all the settings, while \MOSS is competitive.
    Left to Right: Regret as a function of $N$, $\tau$, $K, M$.
    }
    \label{fig:E-BASS_no_gap}
\end{figure}

\cref{fig:Non-oblivious,fig:Non-oblivious_no_gap} show the experimental results with a non-oblivious adversary. We observe similar trends as in the previous experiments.

\begin{figure}[htb!]
    \centering
    \begin{minipage}{.24\textwidth}
        \includegraphics[width=\textwidth]{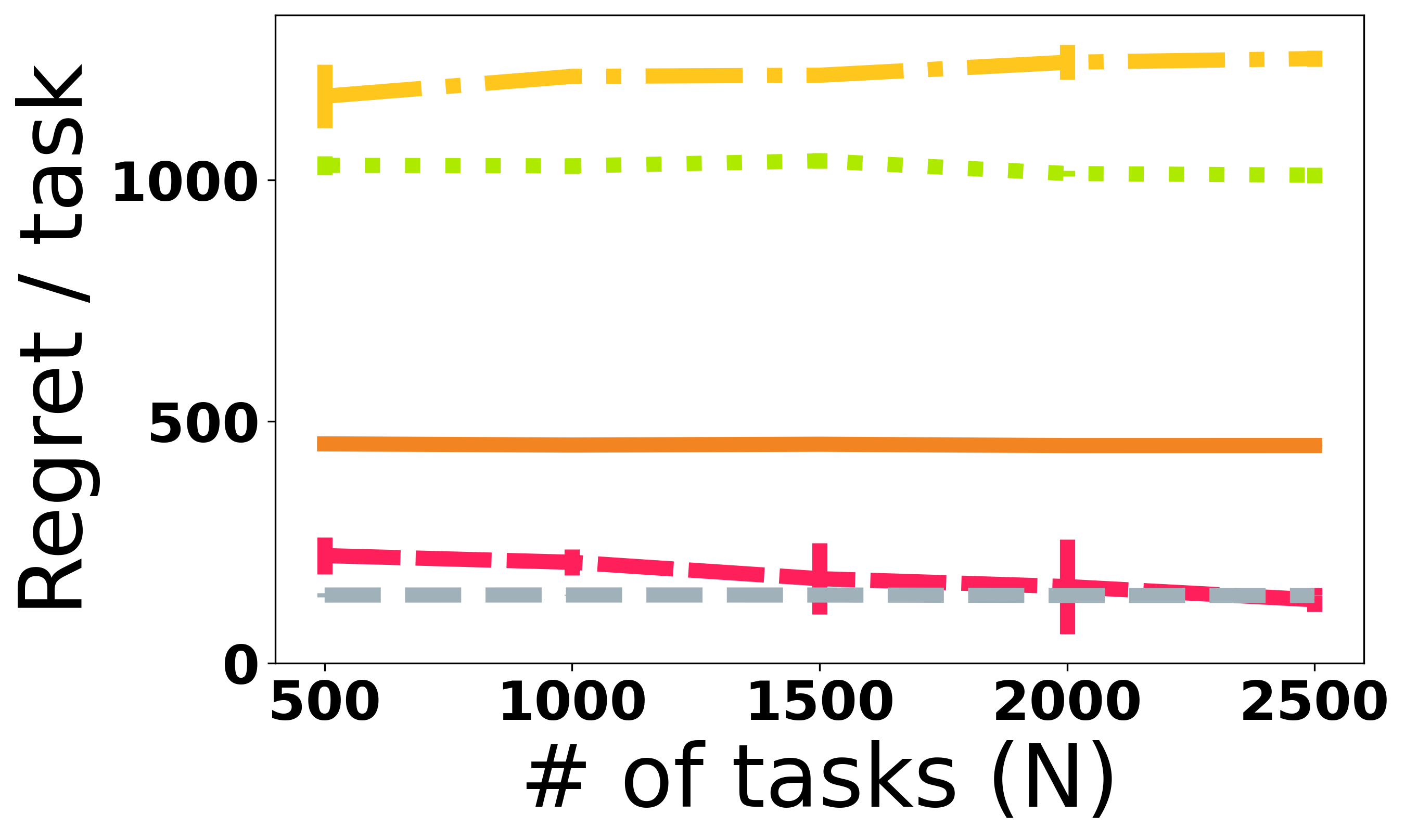}
    \end{minipage}
    \begin{minipage}{.24\textwidth}
        \includegraphics[width=\textwidth]{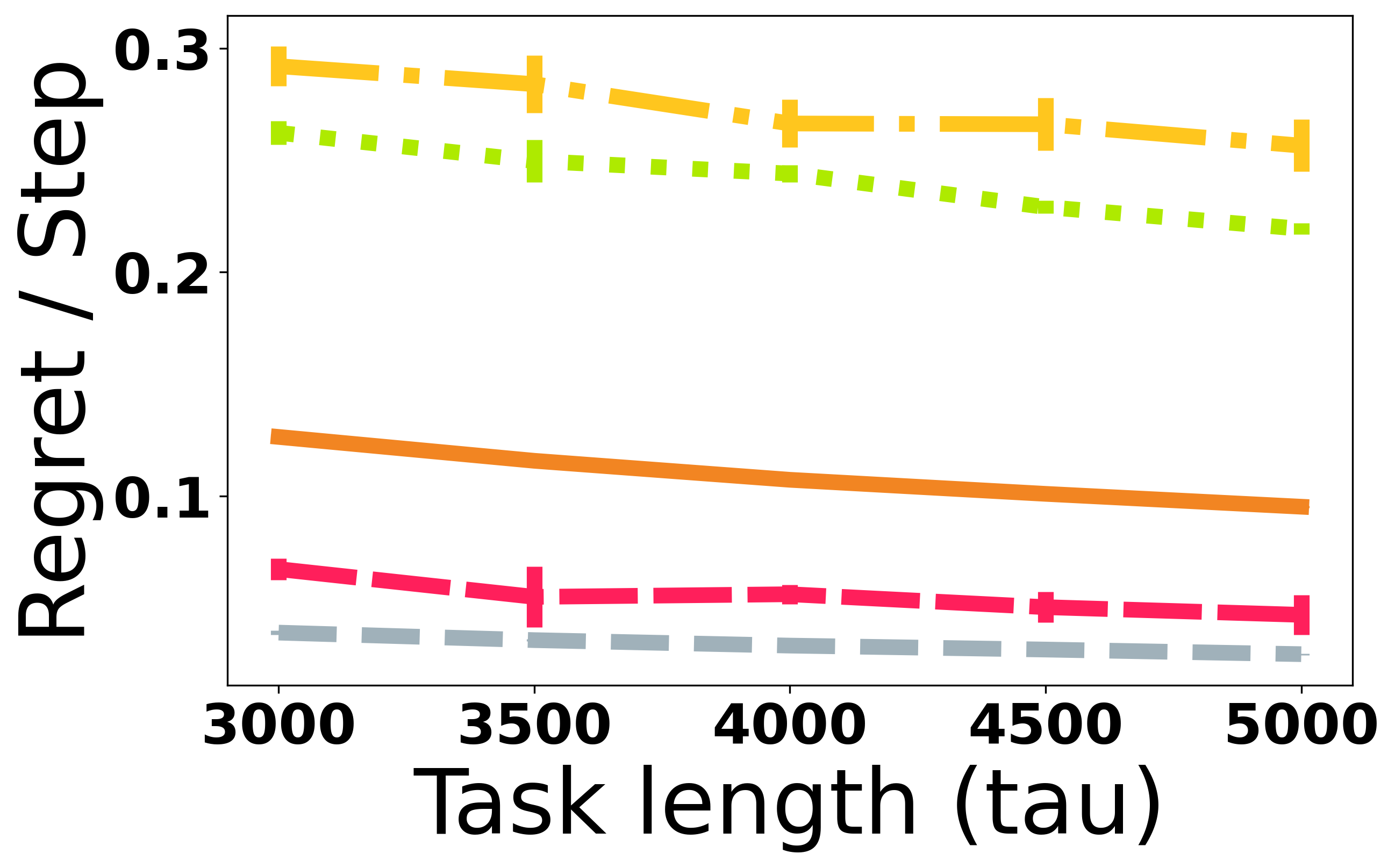}
    \end{minipage}
    \begin{minipage}{.24\textwidth}
        \includegraphics[width=\textwidth]{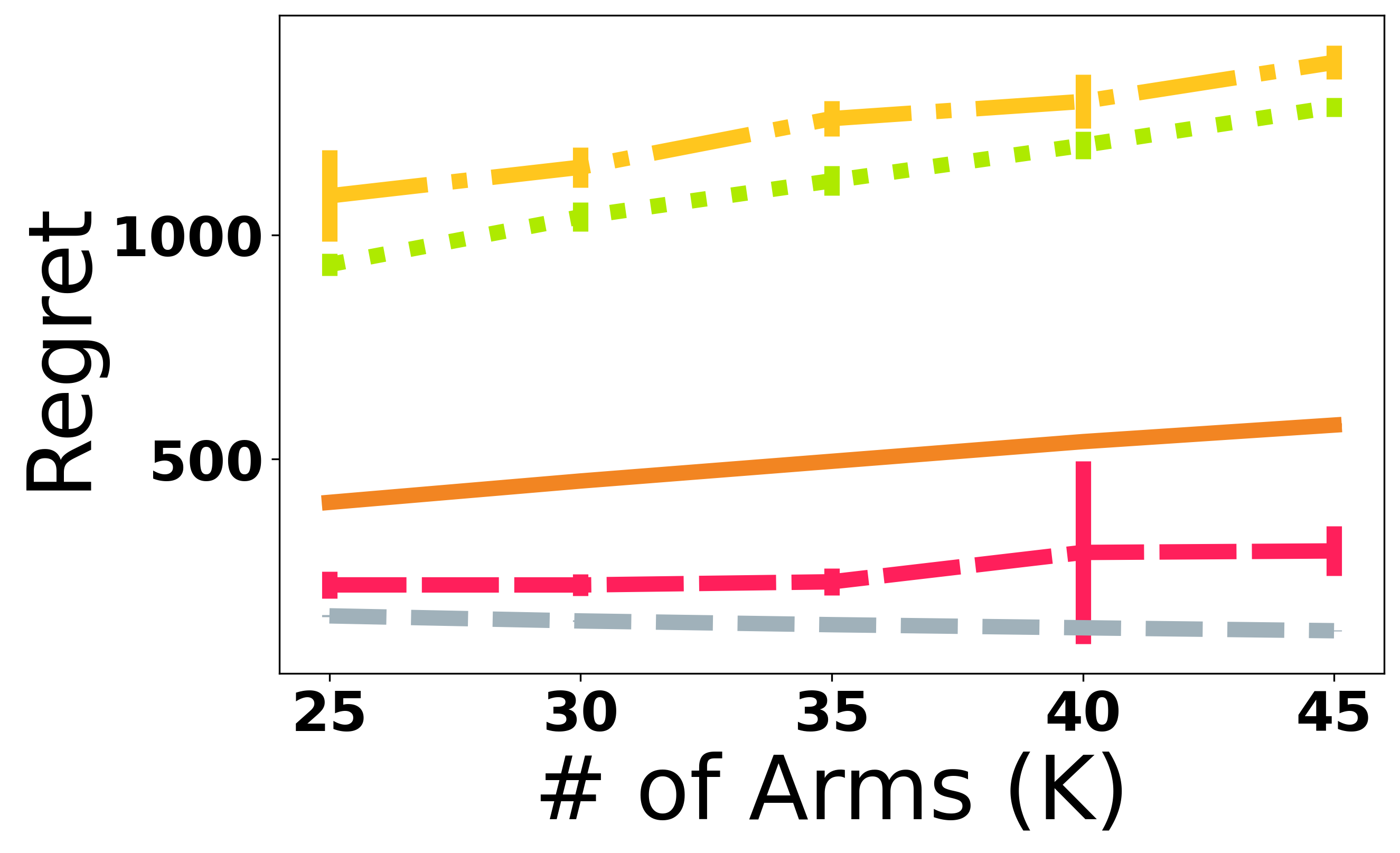}
    \end{minipage}
    \begin{minipage}{.24\textwidth}
    \includegraphics[width=\textwidth]{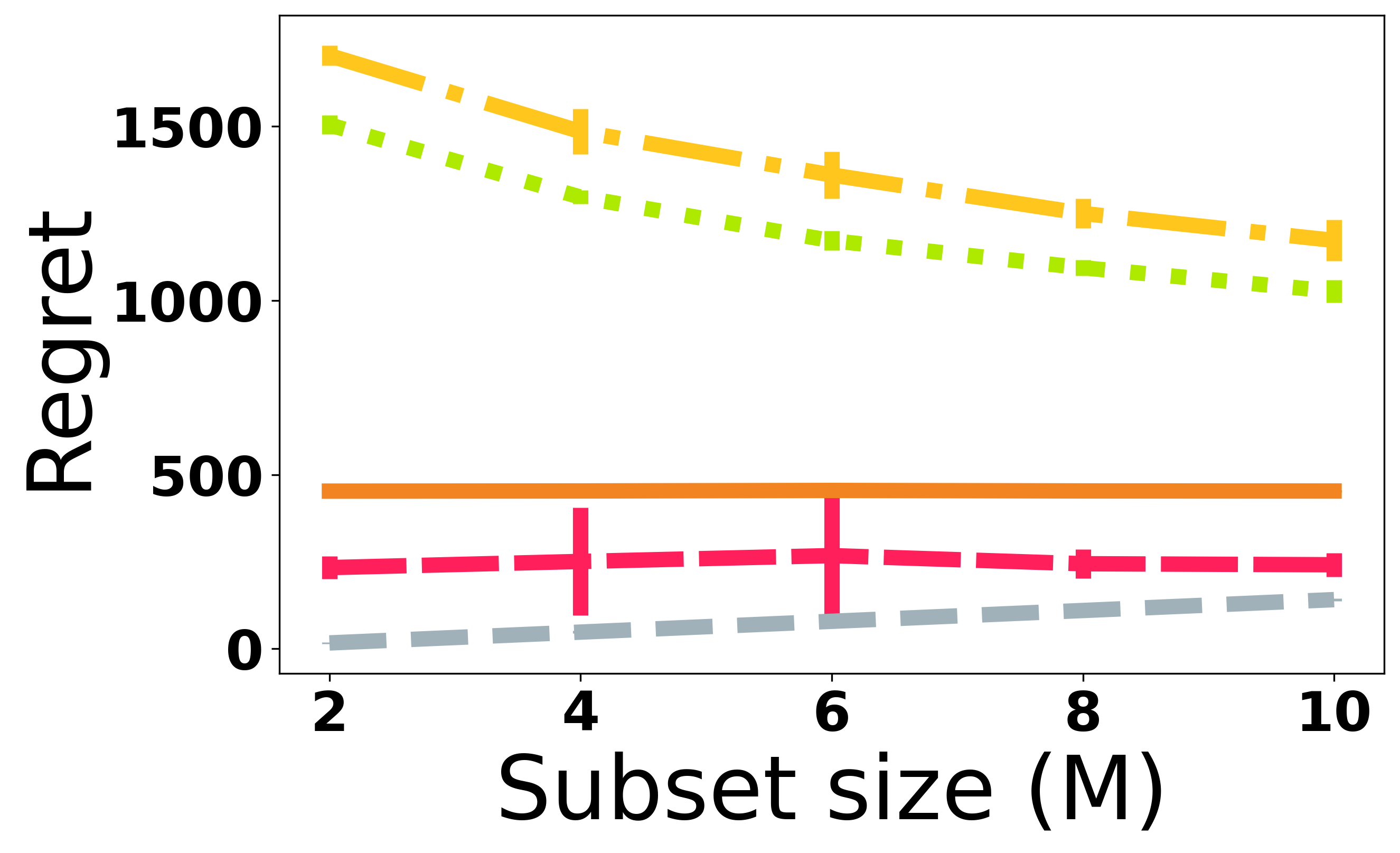}
    \end{minipage}
    \includegraphics[width=\textwidth]{img/legend5.png}
    \vspace{-0.8cm}
    \caption{The non-oblivious adversarial setting, where \cref{assumption:identification} holds.
    Default setting: $\setupOne$. 
    \greedy is near-optimal on all tasks. 
    Left to Right: Regret as a function of $N$, $\tau$, $K, M$.
    }
    \label{fig:Non-oblivious}
\end{figure}

\begin{figure}[htb!]
    \centering
    \begin{minipage}{.24\textwidth}
        \includegraphics[width=\textwidth]{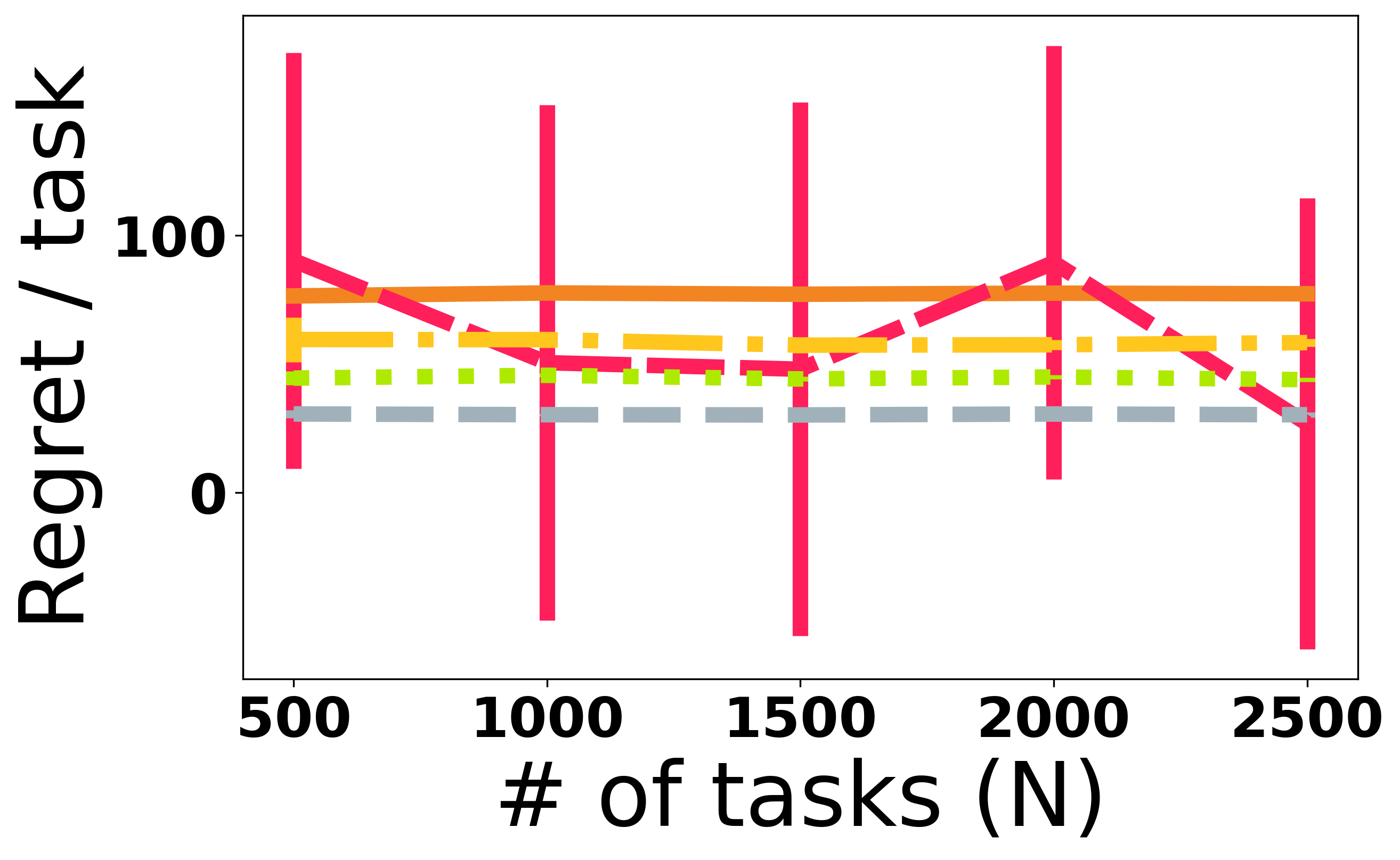}
    \end{minipage}
    \begin{minipage}{.24\textwidth}
        \includegraphics[width=\textwidth]{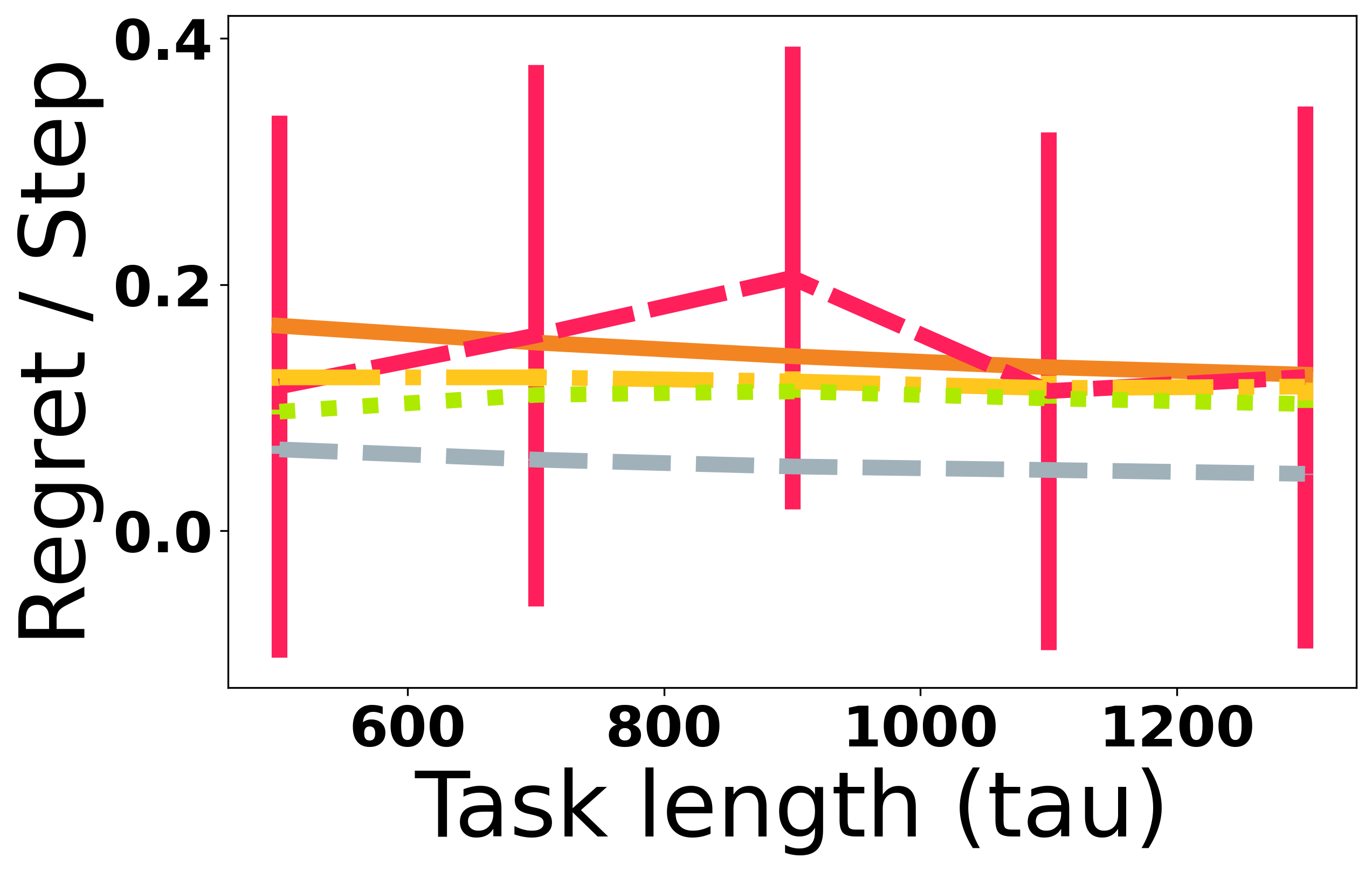}
    \end{minipage}
    \begin{minipage}{.24\textwidth}
        \includegraphics[width=\textwidth]{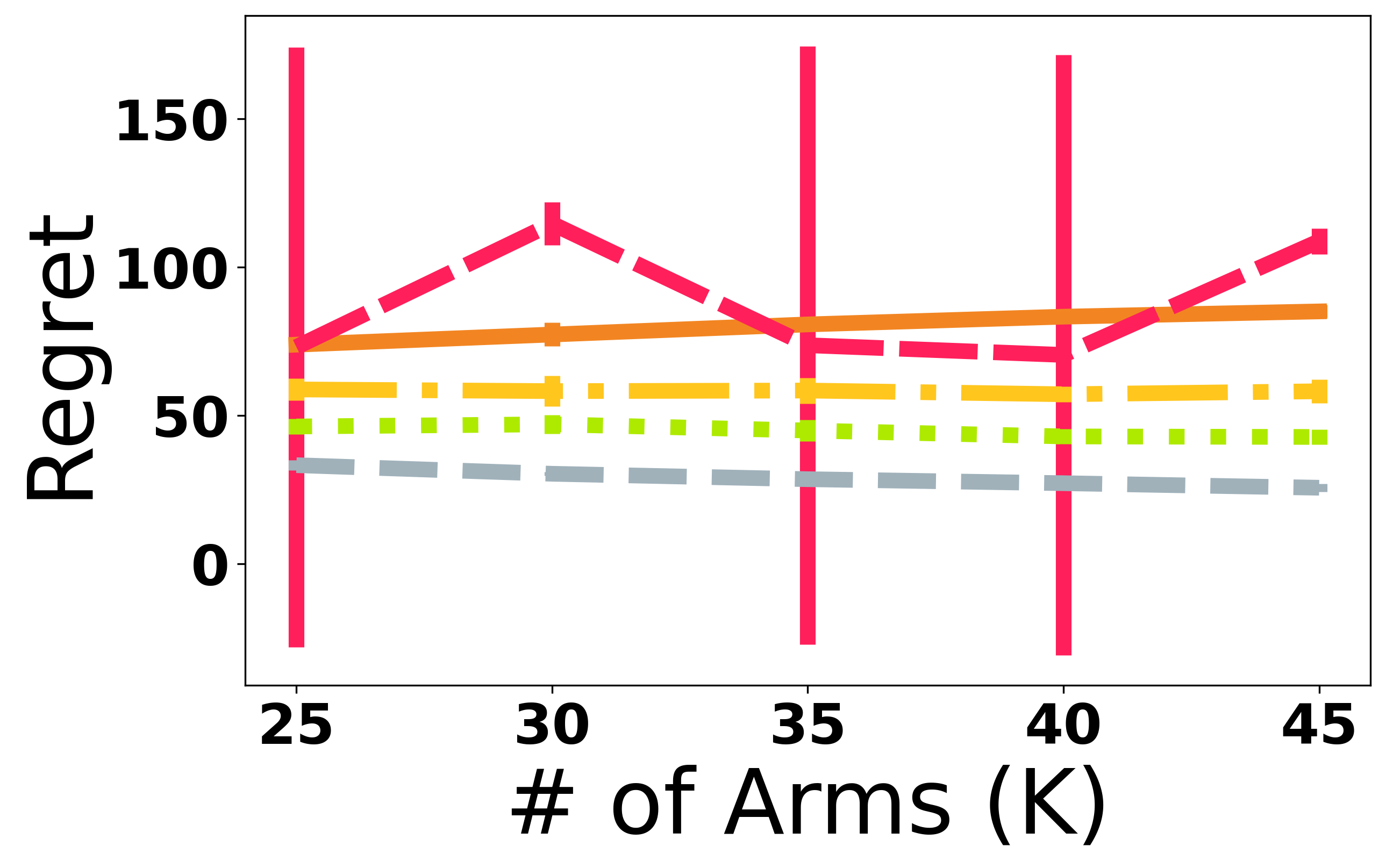}
    \end{minipage}
    \begin{minipage}{.24\textwidth}
    \includegraphics[width=\textwidth]{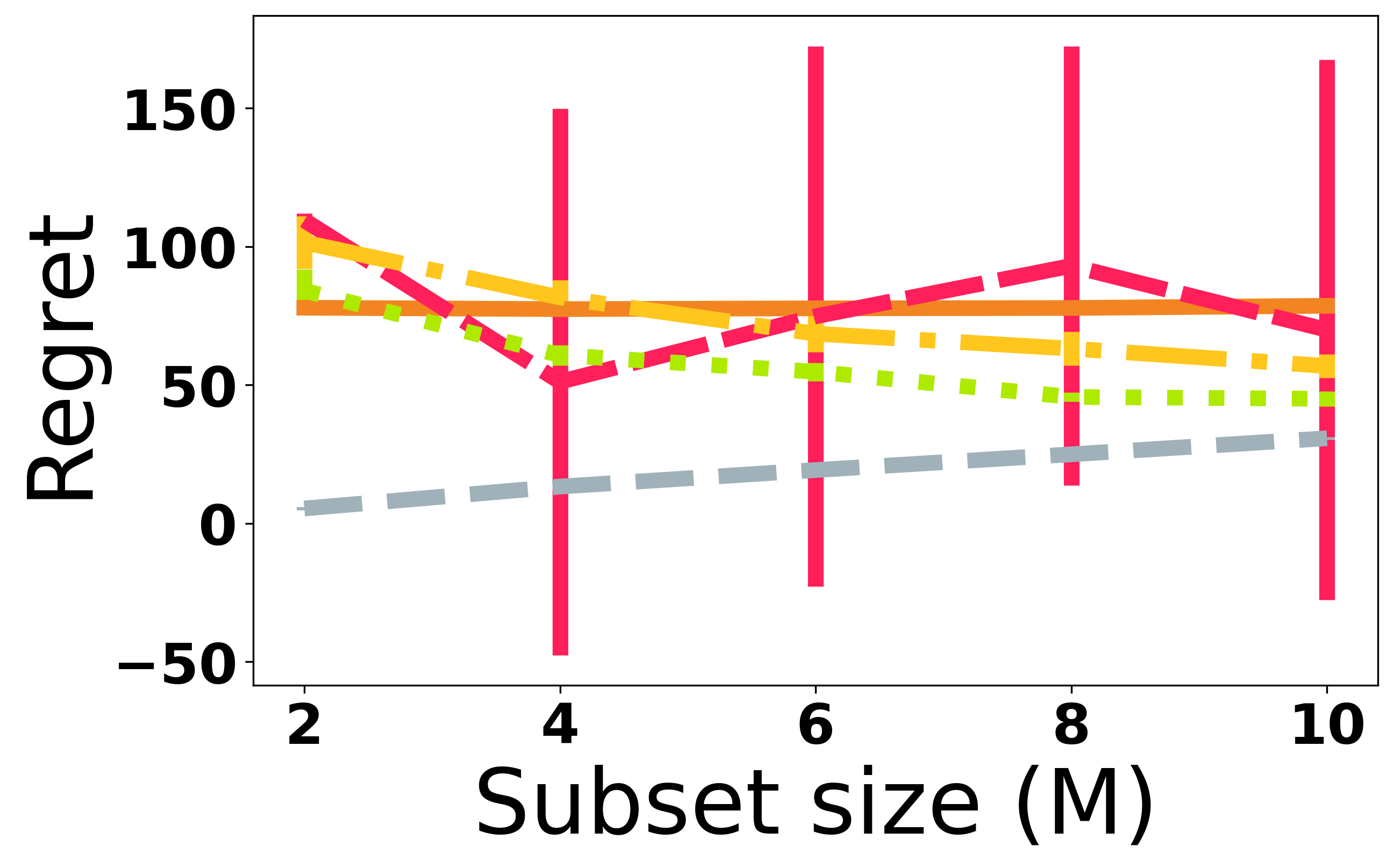}
    \end{minipage}
    \includegraphics[width=\textwidth]{img/legend5.png}
    \vspace{-0.8cm}
    \caption{Non-oblivious adversarial setting \NoGapSmallt.
    Default setting: $\setupTwo$. 
    \BOG mostly outperforms the other method. \greedy has a high variance as PE fails in this experiment.  
    Left to Right: Regret as a function of $N$, $\tau$, $K, M$.
    }
    \label{fig:Non-oblivious_no_gap}
\end{figure}

The results for the stochastic setting are shown in \cref{fig:Stoch,fig:Stoch_no_gap}. In \cref{fig:Stoch} it seems that \greedy performs the best while \MOSS is closer to the oracle \OptMoss. However, in \cref{fig:Stoch_no_gap} \BOG outperforms \greedy and the other algorithms and gets closer to the oracle baseline. We can see in the stochastic setting the variance is higher than the adversarial setting.

\begin{figure}[htb!]
    \centering
    \begin{minipage}{.24\textwidth}
        \includegraphics[width=\textwidth]{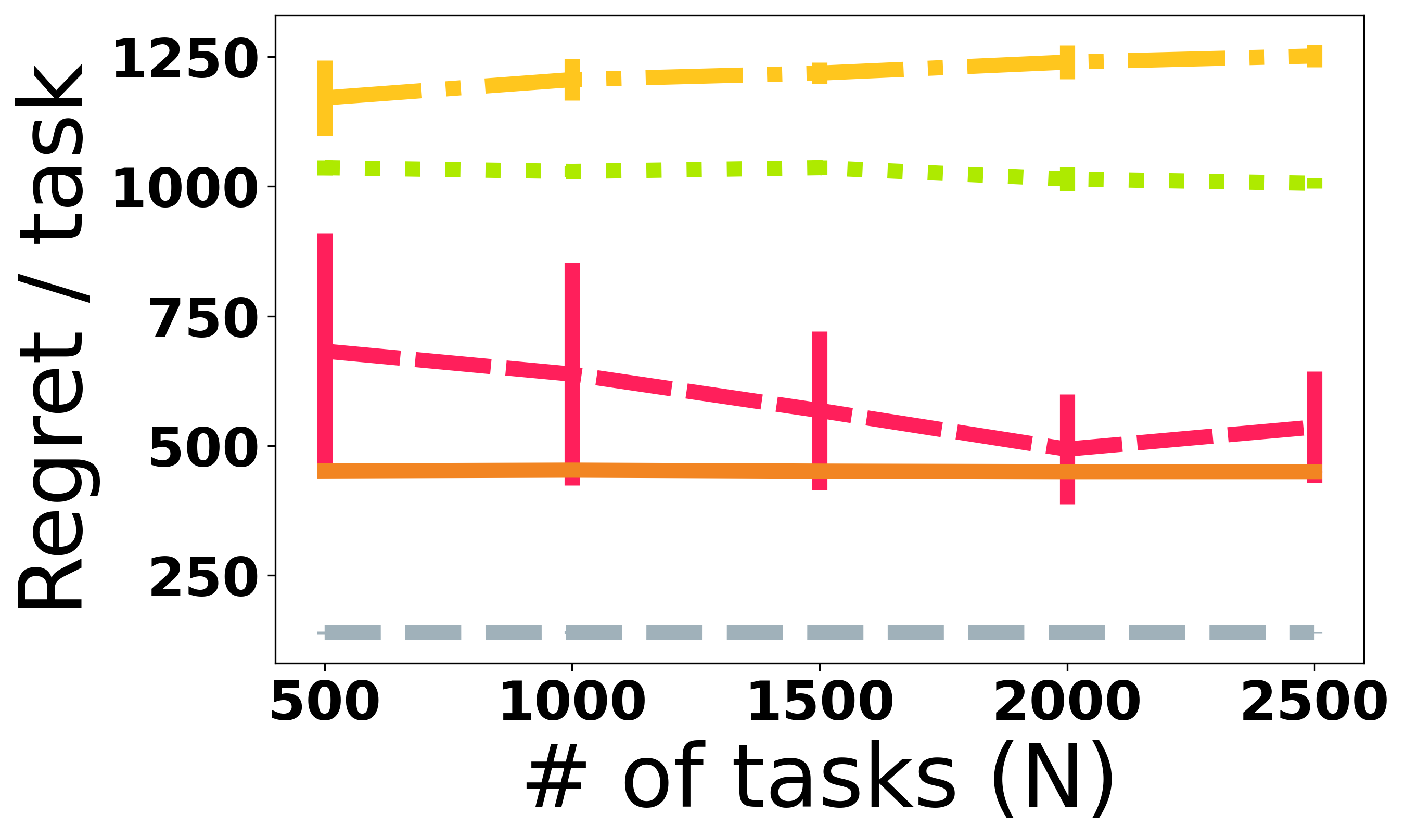}
    \end{minipage}
    \begin{minipage}{.24\textwidth}
        \includegraphics[width=\textwidth]{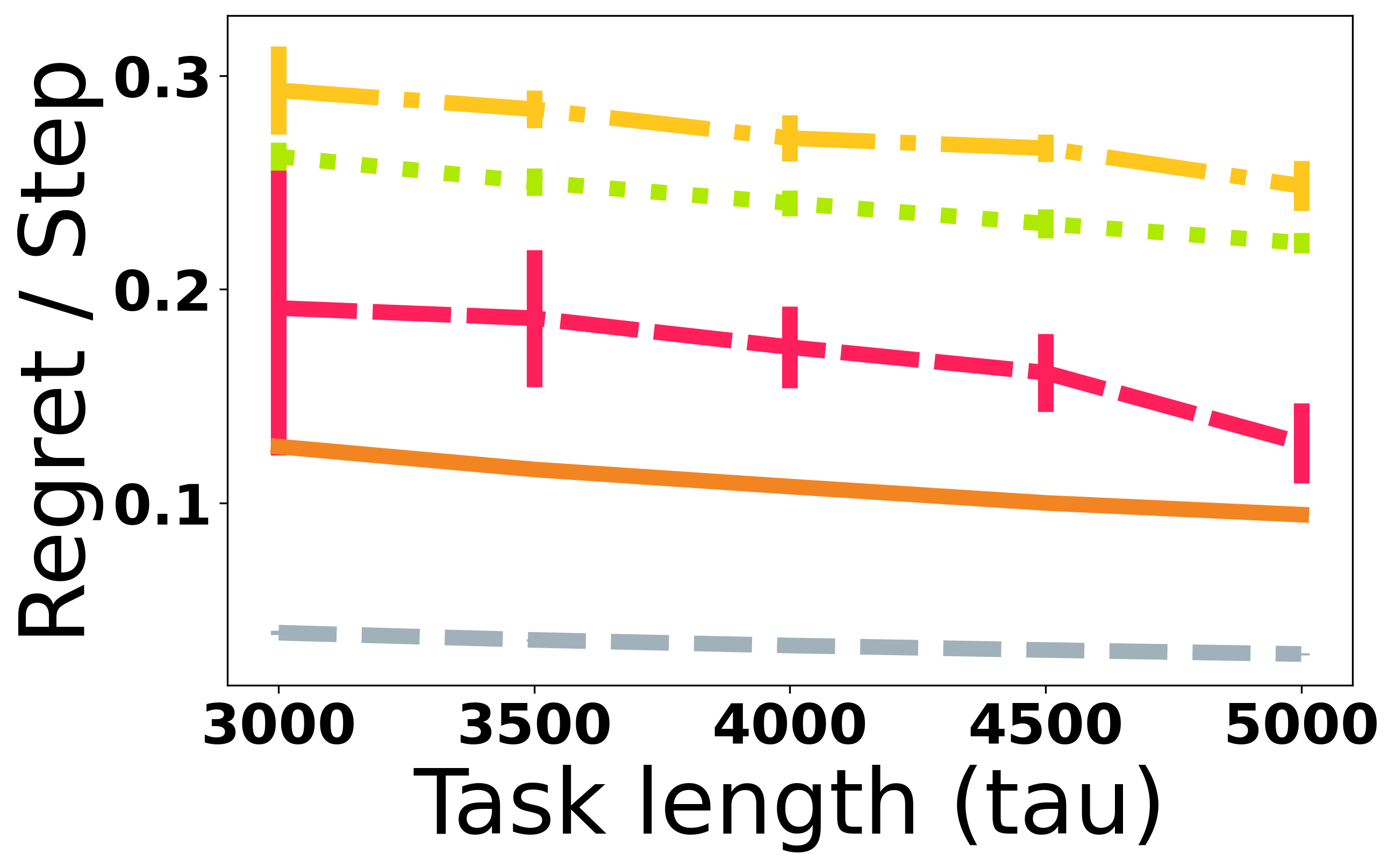}
    \end{minipage}
    \begin{minipage}{.24\textwidth}
        \includegraphics[width=\textwidth]{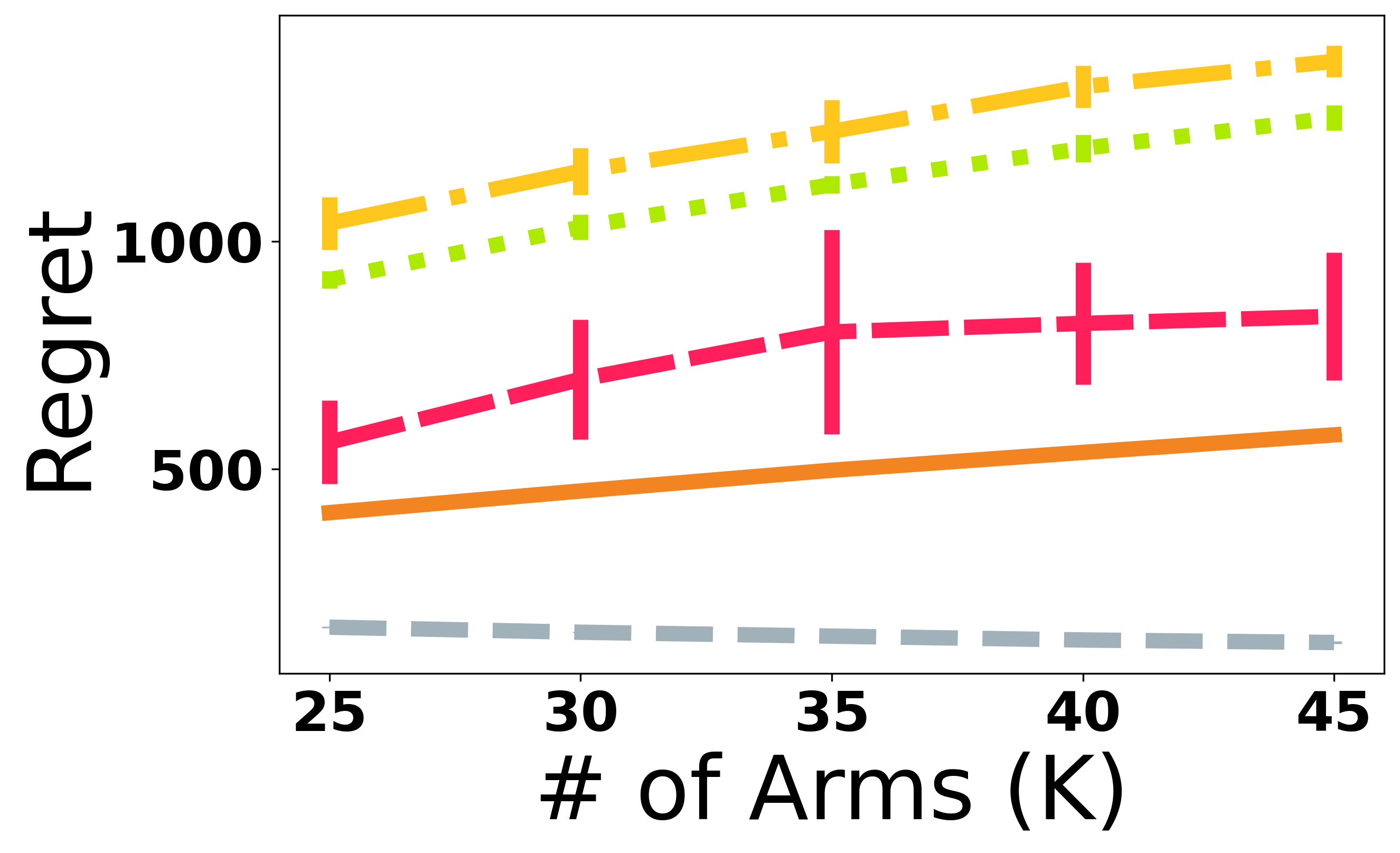}
    \end{minipage}
    \begin{minipage}{.24\textwidth}
    \includegraphics[width=\textwidth]{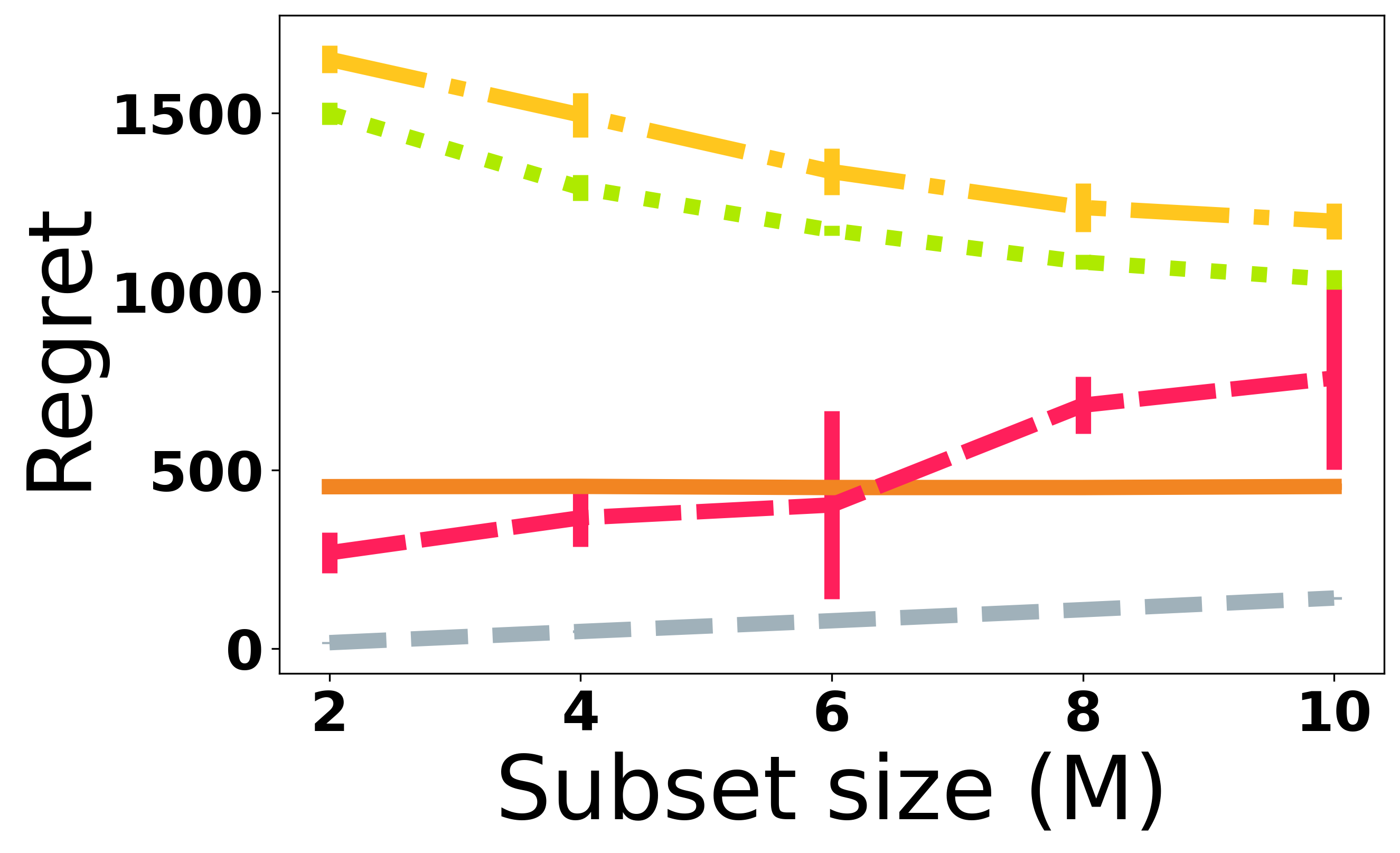}
    \end{minipage}
    \includegraphics[width=\textwidth]{img/legend5.png}
    \vspace{-0.8cm}
    \caption{Stochastic setting, where \cref{assumption:identification} holds.
    Default setting: $\setupOne$. \greedy and \MOSS have the best performance in all the experiments.
    Left to Right: Regret as a function of $N$, $\tau$, $K, M$.
    }
    \label{fig:Stoch}
\end{figure}

\begin{figure}[htb!]
    \centering
    \begin{minipage}{.24\textwidth}
        \includegraphics[width=\textwidth]{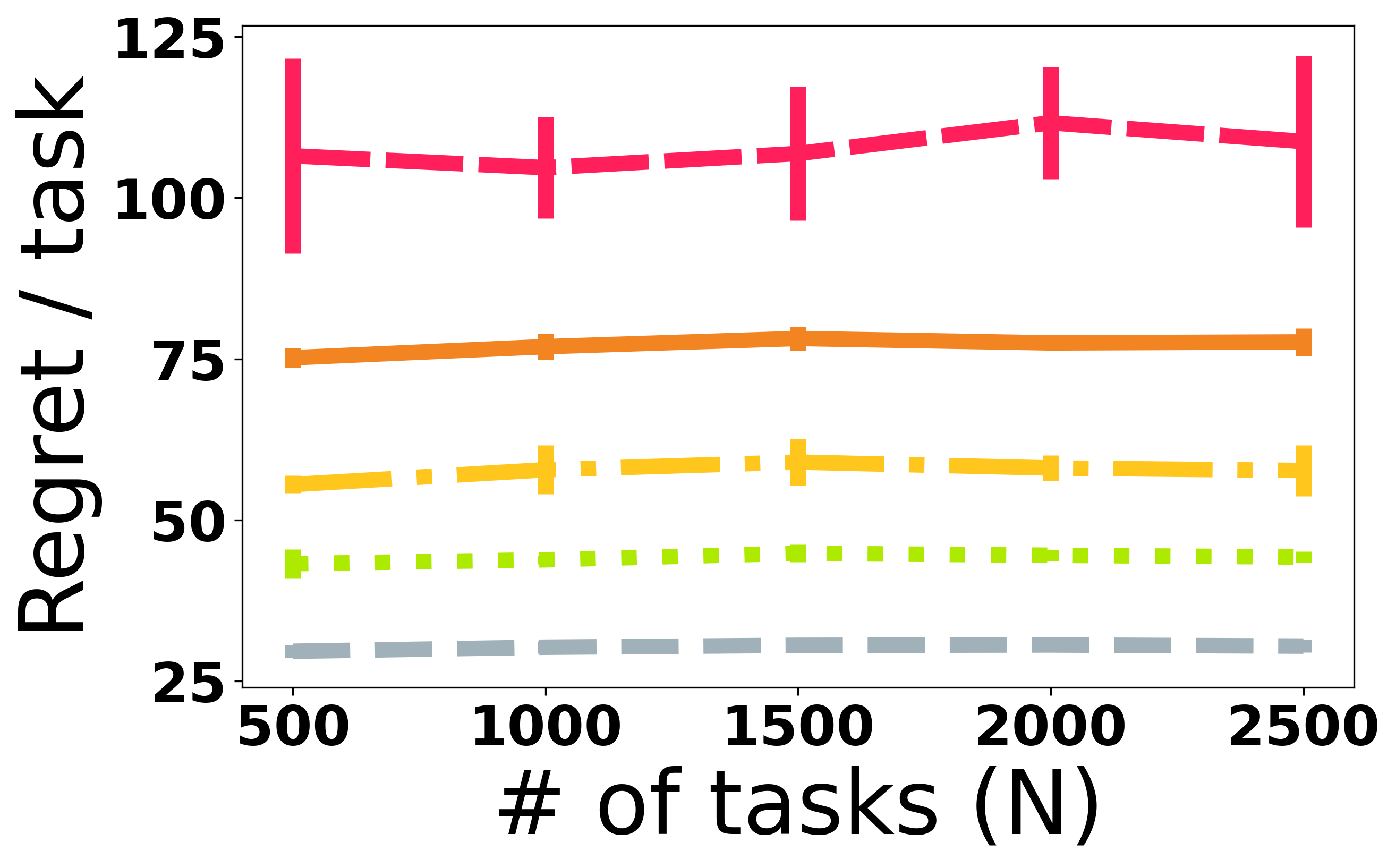}
    \end{minipage}
    \begin{minipage}{.24\textwidth}
        \includegraphics[width=\textwidth]{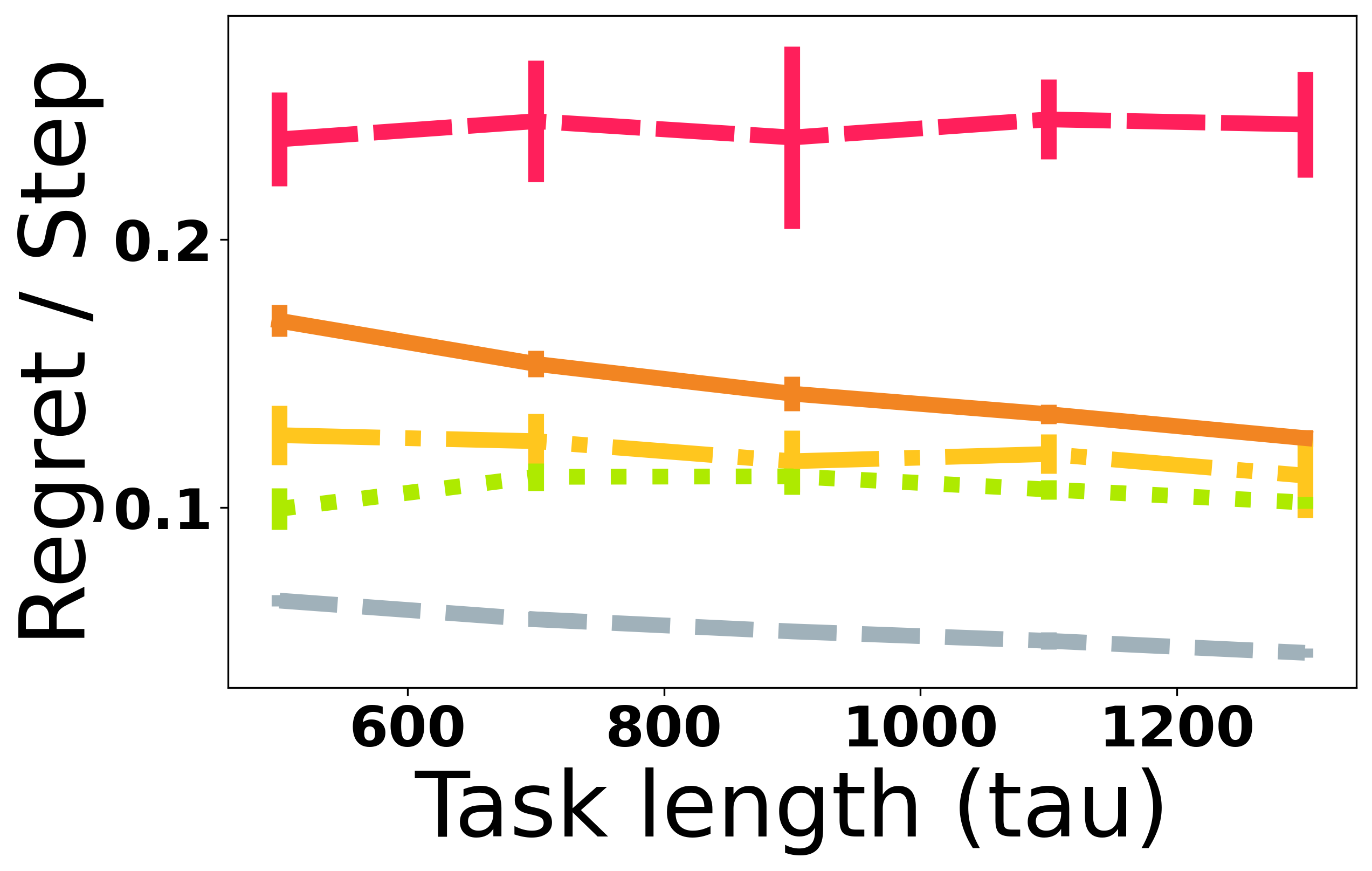}
    \end{minipage}
    \begin{minipage}{.24\textwidth}
        \includegraphics[width=\textwidth]{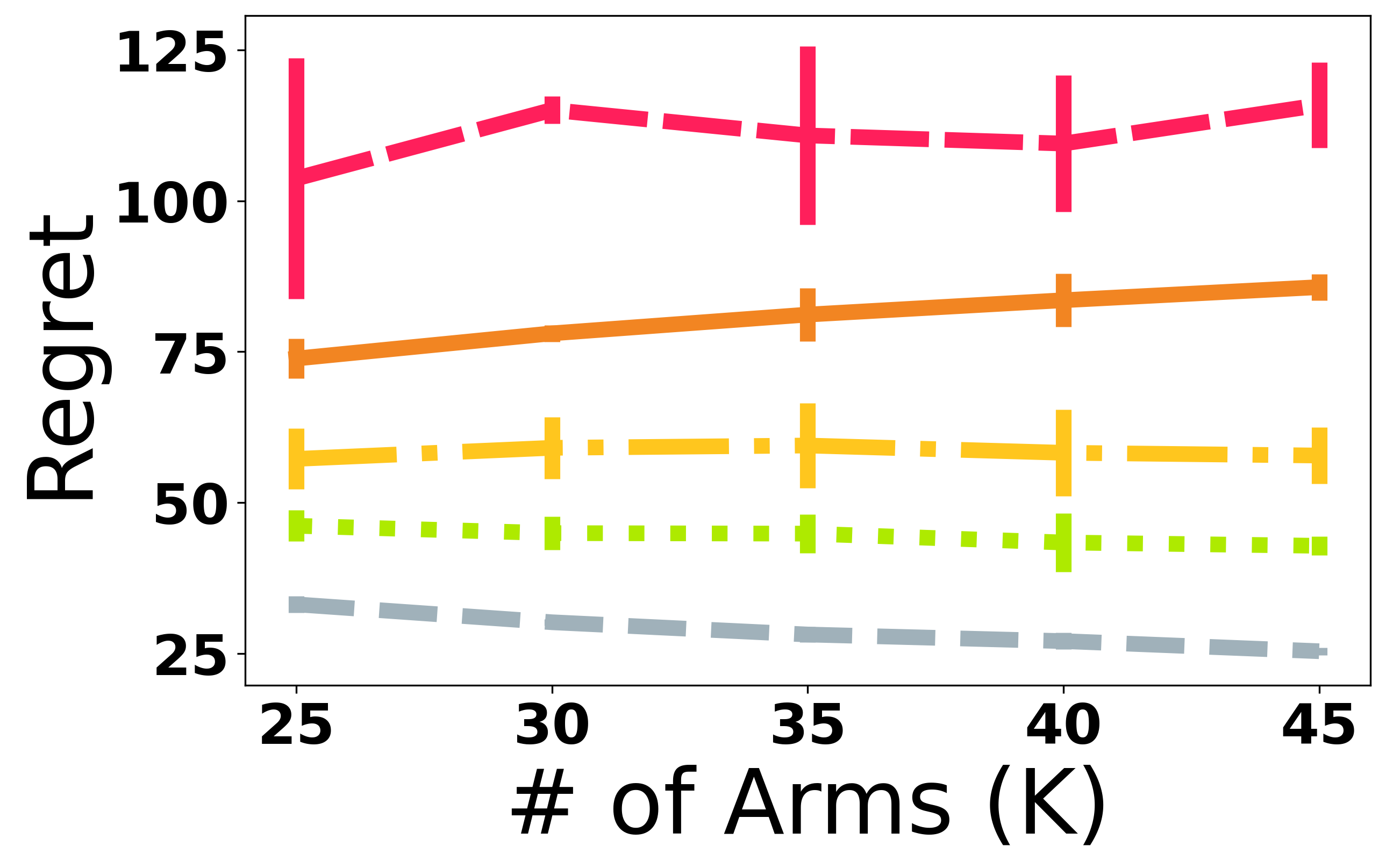}
    \end{minipage}
    \begin{minipage}{.24\textwidth}
    \includegraphics[width=\textwidth]{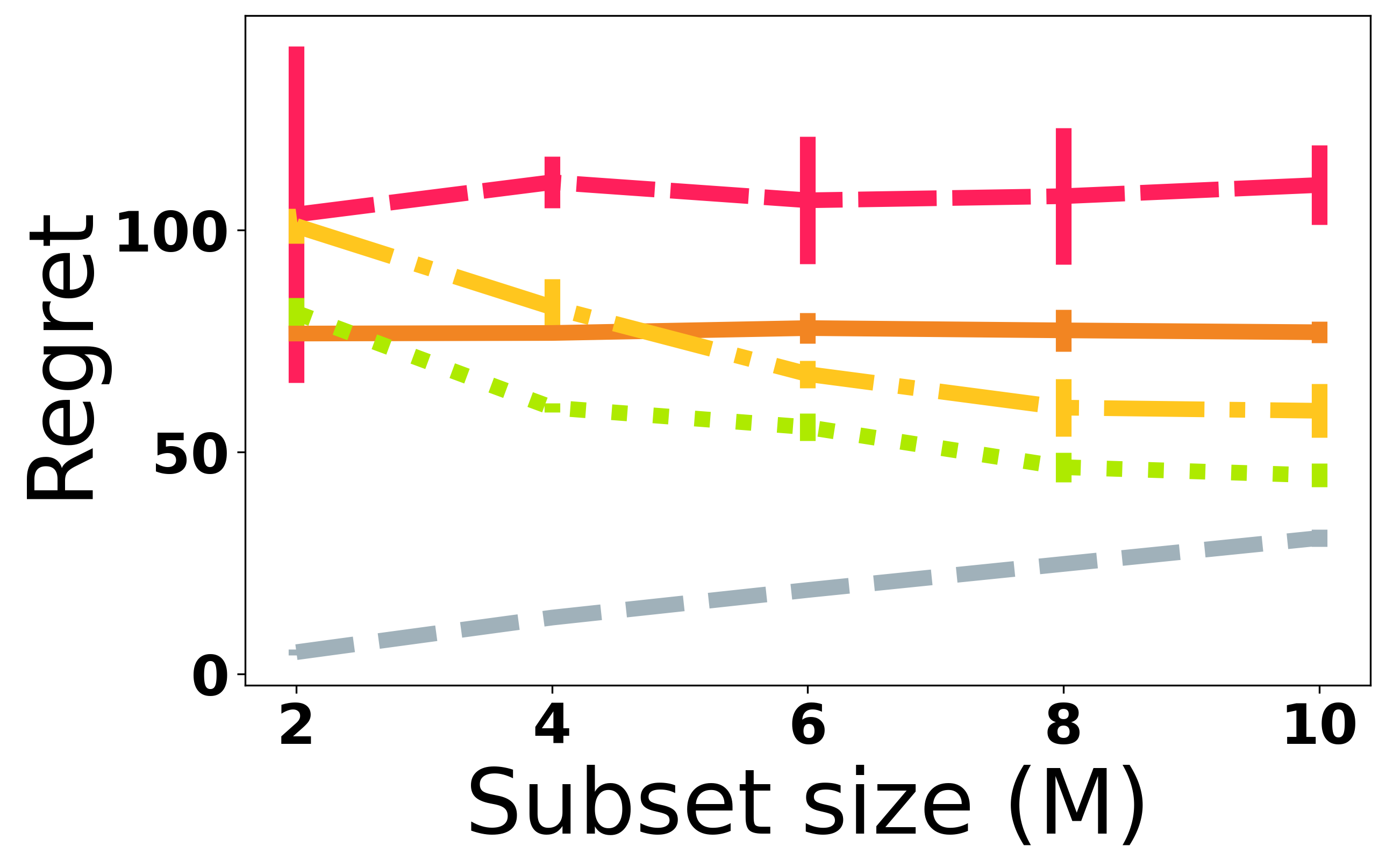}
    \end{minipage}
    \includegraphics[width=\textwidth]{img/legend5.png}
    \vspace{-0.8cm}
    \caption{Stochastic setting \NoGapSmallt.
    Default setting: $\setupTwo$.
    \BOG has the best performance in all experiments.
    Left to Right: Regret as a function of $N$, $\tau$, $K, M$.
    }
    \label{fig:Stoch_no_gap}
\end{figure}

In all the experiments, \BOG outperforms \OGO which confirms the choice of $\gamma$ and $\tau$ in our analysis for \BOG.

\end{document}